\setlist[enumerate,1]{leftmargin=*,wide=0em, noitemsep,nolistsep, label = {\textbf{\arabic.}}}
\setlist[itemize,1]{leftmargin=*,wide=0em, noitemsep,nolistsep}
\newenvironment{proof-sketch}{\noindent{\textit{Sketch of Proof.}}\hspace*{1em}}{\qed\bigskip}
\newtheorem{lemma}{Lemma}
\newtheorem{theo}{Theorem}
\newtheorem{prop}{Proposition}
\newtheorem{defi}{Definition}
\newtheorem{ass}{Assumption}
\newtheorem{remark}{Remark}
\newcommand{\de}{\mathrm{d}}
\DeclareMathOperator*{\argmin}{argmin}
\begin{document}

\twocolumn[
\icmltitle{Quantile Bandits for Best Arms Identification}



\icmlsetsymbol{equal}{*}

\begin{icmlauthorlist}
\icmlauthor{Mengyan Zhang}{ANU,Data61}
\icmlauthor{Cheng Soon Ong}{Data61,ANU}
\end{icmlauthorlist}

\icmlaffiliation{ANU}{The Australian National University}
\icmlaffiliation{Data61}{Data61, CSIRO}

\icmlcorrespondingauthor{Cheng Soon Ong}{chengsoon.ong@anu.edu.au}

\icmlkeywords{Machine Learning, ICML}

\vskip 0.3in
]



\printAffiliationsAndNotice{}  

\begin{abstract}


    We consider a variant of the best arm identification task in stochastic multi-armed bandits.
    Motivated by risk-averse decision-making problems, our goal is to identify a set of $m$ arms with the highest $\tau$-quantile values within a fixed budget.
    We prove asymmetric two-sided concentration inequalities for order statistics and quantiles of random variables that have non-decreasing hazard rate, which may be of independent interest.
    With these inequalities, we analyse a quantile version of Successive Accepts and Rejects (Q-SAR).
    We derive an upper bound for the probability of arm misidentification, the first justification of a quantile based algorithm for fixed budget multiple best arms identification.
    We show illustrative experiments for best arm identification.
    \end{abstract}

    \section{Introduction}
    \label{sec: Introduction}

    Multi-Armed Bandits (MAB) are sequential experimental design problems
    where an agent adaptively chooses one (or multiple) option(s) among a set of choices based on certain policies.
    We refer to ``options'' as ``arms'' in MAB problems.
    In contrast with \textit{full feedback} online decision-making problems where sample rewards for all arms are fully observable to agents in each round,
    in MAB tasks the agent only observes the sample reward from the selected arm in each round, with no information about other arms.

    One of the key steps in the theoretical analysis for bandit algorithms is concentration inequalities,
    which provides bounds on how a random variable deviates from some statistical summary (typically its expected value).
    Inspired by the approach in~\citet{boucheron2012},
    we propose in Section~\ref{sec: concentration inequalities} new concentration inequalities
    for order statistics and quantiles
    of distributions with non-decreasing hazard rates (Definition \ref{defi: Hazard Rate}).
    Previous work derived concentration bounds of quantiles via the empirical cumulative distribution function (c.d.f.).
    Our proof uses a new approach based on the extended Exponential Efron-Stein inequality (Theorem \ref{theo: New Extended Exponential Efron-Stein inequality}),
    and non-trivially extends the concentration of order statistics~\citep{boucheron2012}.
    Our proposed concentration inequality can be useful for various applications,
    for example, the multi-armed bandits problem as illustrated in this work,
    learning theory, A/B-testing \citep{howard_sequential_2019}, and model selection \citep{massart2000some}.

    \begin{table}[t]
      \centering
      \resizebox{0.4\textwidth}{!}{
        \begin{tabular}{|c|c|c|c|c|c|}
          & Mean   & 0.5-Quantile &   & Mean & 0.8-Quantile\\
    A     &   3.50  & \bf{3.50} & C  & 1.45  & 2.33\\
    B     & \bf{4.00}  &  2.80  & D  & \bf{2.50}  & \bf{4.00}\\
    OptArm & B      & A & Gap &1.05      &   1.67 \\
    \end{tabular}
      }
      \caption{Summary statistics for toy example rewards}%
      \label{table: Median and mean for simulated rewards}
    \end{table}

    \begin{figure}[t]
      \centering
      \includegraphics[scale=0.4]{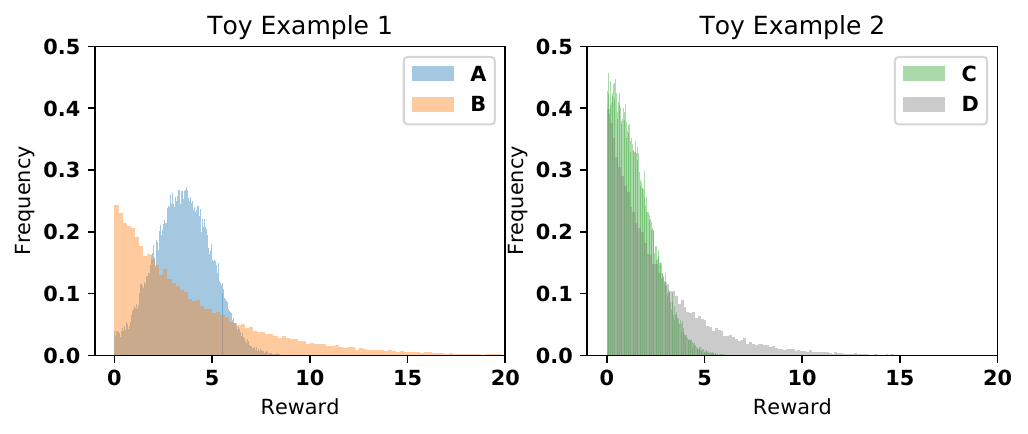}
      \caption{Toy example reward histograms.
        }
        \label{fig: Simulated reward histogram.}
    \end{figure}

    We apply the proposed concentration inequality to the \textit{Best Arm Identification} (BAI) task with \textit{fixed budget} \citep{audibert2010best,bubeck2013multiple}.
    The goal of BAI is to select the best arm (in our case the top $m \geq 1$ arms) after the exploration phase (i.e. budget has run out).
    The agent can explore the environment and perform actions during the exploration phase without penalty.
    In contrast to the majority of previous work which identifies optimal arms by summarising a distribution by its mean,
    we address risk-averse bandits by evaluating the quality of arms by a quantile value of the reward distribution.
    We consider the bandit problem of quantile based $m$ best arms identification,
    where the goal is to identify a set of $m$ arms with the highest $\tau$-quantile values.
    To the best of our knowledge, existing quantile work focuses on single optimal arm identification,
    and we are the first work that addresses multiple best arms identification
    for fixed budget setting with respect to the $\tau$-quantile.
    Our proposed algorithm is in Section~\ref{sec: quantile bandits q-sar}.

    Studying quantile concentrations and identifying arms with optimal quantiles have been shown to be beneficial for many cases, such as
    when the rewards are qualitative \citep{szorenyi2015qualitative},
    when the decision-making is risk-averse \citep{yu2013sample, david_pac_nodate},
    when the rewards contain outliers
    \citep{altschuler2019best}, or
    when the reward distributions are highly skewed \citep{howard_sequential_2019}.

    We motivate the use of quantiles as summary statistics with two toy examples,
    with simulated reward histograms of 2 arms shown in Figure \ref{fig: Simulated reward histogram.}
    and corresponding summary statistics shown in Table \ref{table: Median and mean for simulated rewards}.
    The first example illustrates when risk-averse agents should prefer quantiles.
    Consider a vaccine testing problem \citep{cunningham_vaccine_2016}, where the goal is to identify the most reliable vaccine after the exploration phase.
    The reward is the efficacy of the vaccine.
    Risk-averse agents tend to exclude vaccine candidates which return a large number of small rewards
    even though they may have a larger expected value (e.g. B in Figure \ref{fig: Simulated reward histogram.}).
    In such case, a policy guided by a fixed level of quantiles (e.g. 0.5-quantile, the median) will choose a less risky arm (i.e. one with less low rewards).
    The second example shows when the distributions are skewed,
    the quantile can provide a bigger gap between arms, which turns out to produce a smaller probability of error (Definition \ref{defi: probability of error}).
    As shown by toy example 2 in Figure \ref{fig: Simulated reward histogram.}, the quantile and mean reflect the same preference, but the difference between arms is larger for the 0.8-quantile.
    The choice of quantile level $\tau$ provides an extra degree of modelling freedom, that the practitioner may use to capture domain requirements
    or to achieve a smaller error probability.


    Our \textbf{contributions} are:
    \begin{enumerate*}[mode=unboxed,topsep=0pt,itemsep=-1ex,partopsep=1ex,parsep=1ex,label = \textbf{\textcolor{blue}{(\roman*)}}]
    \item Two-sided exponential concentration inequalities for order statistics of rank $k$ (w.r.t its expectation) for a general family (with non-decreasing hazard rate) of random variables.
    \item Two-sided exponential concentration inequalities for estimations of $\tau$-quantile (w.r.t. population quantile) based on our results on order statistics.
    \item The first $\tau$-quantile based multiple ($m\geq 1$) arms identification algorithm (Q-SAR) for the fixed budget setting.
    \item Theoretical analysis for the proposed Q-SAR algorithm, showing an exponential rate upper bound on the probability of error.
    \item Empirical illustrations for the Q-SAR algorithm, which indicates that Q-SAR outperforms baseline algorithms for the best arms identifications task.
    \end{enumerate*}

\section{Concentration Inequalities}
    \label{sec: concentration inequalities}

    In this section, we show our results for concentration inequalities on order statistics and quantiles.
    We apply these results to prove error bounds for bandits in Section \ref{sec: quantile bandits q-sar}.
    Order statistics have been used and studied in various areas, such as robust statistics and extreme value theory.
    The non-asymptotic convergence analysis for order statistics provides a way to understand
    the probability of order statistics deviates from its expectation, and it is useful to support the decision-making with limited samples under uncertainty.

    Let $\{X_t\}_{t=1}^{n}$ be $n$ i.i.d samples drawn from the distribution of $X$,
    and let the $\{X_{(t)}\}_{t=1}^{n}$ be the order statistics of $\{X_t\}_{t=1}^{n}$
    written in decreasing order, i.e.
    \begin{align}
      X_{(1)} \geq X_{(2)} \cdots \geq X_{(n)}.
    \end{align}
    We call $X_{(k)}$ the $k$ rank order statistic, and $X_{(1)}$ and $X_{(n)}$ the maximum
    and minimum respectively.
    Denote the (left-continuous) quantile with $\tau \in (0,1)$ of a random variable $X$ by
    \begin{align}
        \label{equ: quantile definition}
        Q^\tau(X) := \inf \{x: \mathbb{P}(X \leq x) \geq \tau \}.
    \end{align}
    We will refer $Q^\tau(X)$ as $Q^\tau$ whenever $X$ is clear from the context.
    With the empirical c.d.f. defined as $\hat{F}_{n}(x) = \frac{1}{n} \sum_{s = 1}^{n} \mathbb{I}\{X_{s} \leq x\}$,
    the empirical $\tau$-quantile with $n$ samples is defined as
    \begin{align}
    \label{equ: empirical quantile defi}
        \hat{Q}^{\tau}_{n} := \inf \{x : \hat{F}_{n}(x) \geq \tau \}
        = X_{(\lfloor n (1 - \tau) \rfloor)}.
    \end{align}


\subsection{Problem Setting}

We now introduce the family of reward distributions we consider in this work.
    We consider continuous non-negative reward random variables $X$ with p.d.f. $f$ and c.d.f. $F$ which
    satisfy Assumption \ref{ass:IHR}. Note that we are considering distributions
    that are unbounded on the right.


    \begin{defi} [Hazard rate]
        \label{defi: Hazard Rate}
        The hazard rate of a random variable $X$ evaluated at the point $x$ is defined as (assuming density $f(x)$ exists)
         \begin{align}
             h(x)& :=\lim _{\theta \rightarrow 0} \frac{\mathbb{P}\left(x \leq X \leq x + \theta |  X \geq x\right)}{\theta}
             =\frac{f\left(x\right)}{1 - F\left(x\right)}. \nonumber
         \end{align}
    \end{defi}

    \begin{ass}[IHR]
    \label{ass:IHR}
    We consider reward distributions with non-negative support $[0, \infty)$
    having \emph{non-decreasing hazard rate (IHR)},
    i.e. for all $x_1 \geq x_2 \geq 0$, the hazard rate satisfies $h(x_1) \geq h(x_2)$.
    We further suppose that the lower bound of the hazard rate $L := \inf_{x} h(x) > 0$.
    \end{ass}

    The IHR assumption is useful in survival analysis.
    If the hazard rate $h(x)$ increases as $x$ increases, $\mathbb{P}\left(x \leq X \leq x + \theta |  X \geq x\right)$ will increase as well.
    For example, a man is more likely to die within the next month when he is 88 years old than when he is 18 years old.
    Common examples of IHR distributions include the absolute Gaussian, exponential, and Gumbel distributions.
    Log-concave distributions are IHR and have widely been applied to economics, search theory, monopoly theory \citep{bagnoli_log-concave_2005}.

    In the following sections, we show our main results about the concentration bounds for order statistics and quantiles,
    and details are provided in Appendix \ref{sec: concentration proof}.

    \subsection{Order Statistics}
    \label{sec:order statistics}
    Our goal is to derive two exponential rate concentration bounds in terms of rank $k$ order statistics out of $n$ samples. 
    A roadmap of the technical derivations needed is deferred to Section~\ref{sec: key proofs}, and we present only the lemmas 
    needed for analysing BAI in this section.
    For $\gamma \geq 0$,
    the right and left tail are respectively,
    \begin{align}
      \label{equ: right tail bound for order statistics}
      & \mathbb{P}\left(X_{(k)} - \mathbb{E}[X_{(k)}] \geq d^{r}_{k, \gamma}\right) \leq \exp(-\gamma),\\
      \label{equ: left tail bound for order statistics}
      & \mathbb{P}\left(\mathbb{E}[X_{(k)}]-X_{(k)} \geq d^{l}_{k, \gamma}\right) \leq \exp(-\gamma).
    \end{align}
    where $d^{r}_{k, \gamma}, d^{l}_{k, \gamma}$ are the right and left confidence intervals.
    
    To derive such bounds, we consider the entropy method and the Cram\'er-Chernoff method \citep{boucheron2013}.
    These results are used to derive the following lemmas for deviation of order statistics.
    Recall $L$ is the lower bound of the hazard rate, $k$ is chosen from
    the positive integers $\mathbb{N}^\ast$.

    \begin{restatable}[Right Tail Concentration Bounds for Order Statistics]{lemma}{Zk}
      \label{lemma: concentration for Z_k}
      Define $v^r := \frac{2}{k L^2}$, $c^r := \frac{2}{k L}$. 
      Under Assumption \ref{ass:IHR} , for all $\lambda \in [0, 1/c^r)$,
      and all $k \in [1,n) \wedge \mathbb{N}^\ast$, we have
      \begin{align}
      \label{equ: Exponential Efron-Stein inequality Z_k}
      \log \mathbb{E}[\exp\left(\lambda \left(X_{(k)} - \mathbb{E}[X_{(k)}\right)\right)]\leq \frac{\lambda^{2} v^r}{2(1-c^r \lambda)}.
      \end{align}
      For all $\gamma \geq  0$, we obtain the concentration inequality
      \begin{align}
      \label{equ: Concentration inequality for Z_k}
          \mathbb{P}\left( X_{(k)} - \mathbb{E}[X_{(k)}] \geq  \sqrt{2v^r \gamma} + c^r \gamma \right) \leq \exp(-\gamma).
      \end{align}
      \end{restatable}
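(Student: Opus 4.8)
The plan is to prove the log‑moment‑generating‑function bound \eqref{equ: Exponential Efron-Stein inequality Z_k} first and then deduce \eqref{equ: Concentration inequality for Z_k} by a standard Cram\'er--Chernoff argument. Write $Z := X_{(k)}$; since $k<n$, the next order statistic $X_{(k+1)}$ exists. I would apply the extended exponential Efron--Stein inequality (Theorem~\ref{theo: New Extended Exponential Efron-Stein inequality}) to $Z$: let $X_1',\dots,X_n'$ be independent copies of $X_1,\dots,X_n$, let $Z^{(i)}$ be $Z$ recomputed with $X_i$ replaced by $X_i'$, and set $V^{+} := \mathbb{E}'\!\big[\sum_{i=1}^{n}(Z-Z^{(i)})_{+}^{2}\big]$. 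The first step is combinatorial: replacing $X_i$ can strictly decrease $Z$ only when $X_i$ is among the $k$ largest samples and $X_i'$ is smaller, in which case $Z-Z^{(i)}\le X_{(k)}-X_{(k+1)}$ (with equality only when $X_i'\le X_{(k+1)}$). Since exactly $k$ coordinates are among the top $k$, this gives $V^{+}\le k\,(X_{(k)}-X_{(k+1)})^{2}$ (possibly with the refinement that keeps the factor $F(X_{(k+1)})$), so everything reduces to controlling the top spacing $W:=X_{(k)}-X_{(k+1)}$.

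The second step exploits Assumption~\ref{ass:IHR}. Using the cumulative hazard $\Lambda(x):=\int_0^x h(t)\,\de t = -\log(1-F(x))$, the variable $\Lambda(X)$ is standard exponential, so $\Lambda(X_{(k)})$ is the $k$-th largest of $n$ i.i.d.\ standard exponentials; conditionally on $\Lambda(X_{(k+1)})$ it equals that value plus the minimum of $k$ i.i.d.\ standard exponentials, i.e.\ $\Lambda(X_{(k)})-\Lambda(X_{(k+1)})\sim\mathrm{Exp}(k)$ by R\'enyi's representation. Since $h\ge L$ everywhere, $\Lambda(b)-\Lambda(a)\ge L(b-a)$ for $b\ge a\ge 0$, hence $W\le \tfrac1L\big(\Lambda(X_{(k)})-\Lambda(X_{(k+1)})\big)$, so $W$ is stochastically dominated by an $\mathrm{Exp}(kL)$ variable --- and, because $h\ge L$ on the whole half‑line, the same domination holds conditionally on the lower order statistics. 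In particular $W$ has a finite moment generating function on $(-\infty,kL)$, and $k\,\mathbb{E}[W^{2}]\le 2/(kL^{2})=v^{r}$ while $\mathbb{E}[W]\le 1/(kL)$; this is where the scalings $v^{r}=2/(kL^{2})$ and $c^{r}=2/(kL)$ originate.

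The crux --- and the step I expect to be the main obstacle --- is feeding this into Theorem~\ref{theo: New Extended Exponential Efron-Stein inequality}. The classical exponential Efron--Stein bound is useless here: $V^{+}$ is, up to the factor $k$, the square of a sub‑exponential variable, so $\mathbb{E}\exp(\lambda V^{+}/\theta)=\infty$ for every $\lambda,\theta>0$. The extended inequality is designed precisely for a $V^{+}$ of this shape, needing only exponential moments of $W$ up to $kL$ rather than of $V^{+}$ itself; its application yields a differential inequality for $\psi(\lambda):=\log\mathbb{E}\exp\!\big(\lambda(Z-\mathbb{E}Z)\big)$ of the form $\lambda\psi'(\lambda)-\psi(\lambda)\le \tfrac{\lambda^{2}}{2}\,\dfrac{\mathbb{E}[V^{+}e^{\lambda Z}]}{\mathbb{E}[e^{\lambda Z}]}$. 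Because exponential tilting by $e^{\lambda Z}$ keeps the conditional law of $W$ stochastically below $\mathrm{Exp}(kL-\lambda)$, the spacing analysis of the previous paragraph bounds the right‑hand side by $v^{r}$ times a negative power of $(1-c^{r}\lambda)$ for $\lambda\in[0,1/c^{r})$; integrating (with $\psi(0)=\psi'(0)=0$) then gives the sub‑gamma estimate $\psi(\lambda)\le \dfrac{\lambda^{2}v^{r}}{2(1-c^{r}\lambda)}$, which is \eqref{equ: Exponential Efron-Stein inequality Z_k}. The delicate part is the bookkeeping of the auxiliary parameters so that the constants come out exactly $v^{r}$ and $c^{r}$.

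Finally, \eqref{equ: Concentration inequality for Z_k} follows from \eqref{equ: Exponential Efron-Stein inequality Z_k} by Cram\'er--Chernoff. Markov's inequality gives, for $t\ge 0$, $\mathbb{P}(Z-\mathbb{E}Z\ge t)\le \exp\!\big(-\sup_{\lambda\in[0,1/c^{r})}\{\lambda t-\tfrac{\lambda^{2}v^{r}}{2(1-c^{r}\lambda)}\}\big)$, and the Legendre transform of a sub‑gamma function satisfies $\sup_{\lambda\in[0,1/c^{r})}\{\lambda t-\tfrac{\lambda^{2}v^{r}}{2(1-c^{r}\lambda)}\}\ge\gamma$ whenever $t\ge \sqrt{2v^{r}\gamma}+c^{r}\gamma$ (see \citet{boucheron2013}, Section~2.4). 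Hence $\mathbb{P}\big(X_{(k)}-\mathbb{E}[X_{(k)}]\ge \sqrt{2v^{r}\gamma}+c^{r}\gamma\big)\le e^{-\gamma}$, as claimed.
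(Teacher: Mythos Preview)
Your high-level architecture matches the paper: entropy method, control of the spacing $S_k=X_{(k)}-X_{(k+1)}$ through Assumption~\ref{ass:IHR} and R\'enyi's representation, then Cram\'er--Chernoff. Your stochastic-domination observation $S_k\preceq\mathrm{Exp}(kL)$ (via $\Lambda$) is exactly what the paper uses, and your final paragraph deriving \eqref{equ: Concentration inequality for Z_k} from \eqref{equ: Exponential Efron-Stein inequality Z_k} coincides with the paper's argument via $h_1(u)=1+u-\sqrt{1+2u}$ and its inverse.

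The genuine gap is in the middle step. You interpret Theorem~\ref{theo: New Extended Exponential Efron-Stein inequality} as delivering a differential inequality $\lambda\psi'(\lambda)-\psi(\lambda)\le\tfrac{\lambda^{2}}{2}\,\mathbb{E}[V^{+}e^{\lambda Z}]/\mathbb{E}[e^{\lambda Z}]$ with $V^{+}\le kS_k^{2}$, and then propose to bound the tilted second moment by the assertion that ``exponential tilting by $e^{\lambda Z}$ keeps the conditional law of $W$ stochastically below $\mathrm{Exp}(kL-\lambda)$.'' This tilting claim is not justified: tilting the joint law of the order statistics by $e^{\lambda X_{(k)}}$ changes the conditional distribution of $S_k$ in a way that depends on the full hazard function $h$, and stochastic domination is not in general preserved under exponential tilting. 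Even if a version of the claim were true, you would still need a separate argument to recover exactly $v^{r}$ and $c^{r}$ after integrating. (There is also a smaller issue: the replacement-copy $V^{+}$ you introduce does not plug directly into the modified log-Sobolev inequality, since $Z-Z^{(i)}$ can be negative; the leave-one-out choice in Proposition~\ref{prop: new bounds for entropy} is what makes $W-W_i\ge 0$ and hence $\phi(-\lambda(W-W_i))\le\tfrac{\lambda^2}{2}(W-W_i)^2$ valid.)

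The paper avoids tilting altogether. Theorem~\ref{theo: New Extended Exponential Efron-Stein inequality} (right tail) actually states the \emph{untilted} bound
\[
\psi_{Z_k}(\lambda)\;\le\;\frac{\lambda k}{2}\,\mathbb{E}\!\left[S_k\bigl(e^{\lambda S_k}-1\bigr)\right],
\]
obtained from Proposition~\ref{prop: new bounds for entropy} and a negative-association decoupling (the argument of \citet{boucheron2012}, Theorem~2.9) that separates $e^{\lambda X_{(k+1)}}$ from the spacing term. The right-hand side is an ordinary expectation of a function of $S_k$ alone. Using R\'enyi's representation and Proposition~\ref{prop non-increasing hazard rate} (precisely your domination $S_k\preceq\mathrm{Exp}(kL)$), this reduces to the elementary computation $\int_0^\infty \mu z(e^{\mu z}-1)e^{-z}\,\mathrm{d}z=\tfrac{\mu^{2}(2-\mu)}{(1-\mu)^{2}}\le\tfrac{2\mu^{2}}{1-2\mu}$ with $\mu=\lambda/(kL)$, from which $v^{r}=2/(kL^{2})$ and $c^{r}=2/(kL)$ drop out directly.
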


    \begin{restatable}[Left Tail Concentration Bounds for Order Statistics]{lemma}{NewNegZk}
      \label{lemma: new concentration for -Z_k}
      Define $v^l := \frac{2(n-k+1)}{(k-1)^2 L^2}$. 
      Under Assumption \ref{ass:IHR}, for all $\lambda \geq 0$,
      and all $k \in (1,n] \wedge \mathbb{N}^\ast$, we have
      \begin{align}
      \label{equ: new Exponential Efron-Stein inequality -Z_k}
      \log \mathbb{E}[\exp\left(\lambda \left( \mathbb{E}[X_{(k)}] - X_{(k)}\right)\right)] \leq \frac{\lambda^2 v^l}{2}.
      \end{align}
      For all $\gamma \geq  0$, we obtain the concentration inequality
      \begin{align}
      \label{equ: new Concentration inequality for -Z_k}
          \mathbb{P}\left(  \mathbb{E}[X_{(k)}] - X_{(k)} \geq  \sqrt{2v^l \gamma} \right) \leq \exp(-\gamma).
      \end{align}

    \end{restatable}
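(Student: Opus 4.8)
The plan is to reproduce the entropy-method argument behind Lemma~\ref{lemma: concentration for Z_k}, but with a leave-one-out perturbation adapted to the \emph{lower} deviation of $X_{(k)}$. View $Z = X_{(k)}$ as a function of $(X_1,\dots,X_n)$, and for each $i$ let $Z_i$ be the $(k-1)$-th largest value of the reduced sample $(X_j)_{j\neq i}$; this is well defined precisely because $k>1$ (so $k=1$ is correctly excluded). Since the reward law is continuous there are a.s.\ no ties, and a one-line case analysis gives: if $X_i$ is among the $k-1$ largest values of the full sample then $Z_i = X_{(k)} = Z$, and otherwise $Z_i = X_{(k-1)} \ge Z$. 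Hence $Z_i \ge Z$ for every $i$, exactly $n-k+1$ indices satisfy $X_i \le X_{(k)}$, and therefore
\begin{align}
\sum_{i=1}^{n} (Z_i - Z)^2 \;=\; (n-k+1)\bigl(X_{(k-1)} - X_{(k)}\bigr)^2 ,\nonumber
\end{align}
the natural variance proxy for the left tail. (For $k=n$ this reads $(X_{(n-1)}-X_{(n)})^2$, consistent with the single index whose removal changes the minimum.)

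Next I would control the spacing $X_{(k-1)} - X_{(k)}$ using Assumption~\ref{ass:IHR}. Conditionally on $X_{(k)} = x$ and on which $k-1$ samples exceed $x$, those samples are i.i.d.\ with the law of $X$ restricted to $(x,\infty)$; the excess $X-x$ of such a variable has survival function $\exp\bigl(-\int_x^{x+t} h(u)\,\de u\bigr)\le e^{-Lt}$, so it is stochastically dominated by an $\mathrm{Exp}(L)$ variable, and consequently $X_{(k-1)} - X_{(k)}$, being the minimum of $k-1$ independent such excesses, is stochastically dominated by an $\mathrm{Exp}((k-1)L)$ variable. Crucially this domination is \emph{uniform in $x$}, so $\mathbb{E}\bigl[(X_{(k-1)}-X_{(k)})^2 \mid X_{(k)}\bigr] \le \tfrac{2}{(k-1)^2 L^2}$ almost surely, and in particular $\mathbb{E}\bigl[\sum_i (Z_i-Z)^2\bigr] \le \tfrac{2(n-k+1)}{(k-1)^2 L^2} = v^l$. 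Because tilting by any function of $X_{(k)}$ leaves the conditional law of $X_{(k-1)}-X_{(k)}$ given $X_{(k)}$ unchanged, the same uniform bound survives under the exponential tilt $e^{-\lambda X_{(k)}}$ that appears in the entropy method.

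I would then invoke the extended Exponential Efron-Stein inequality (Theorem~\ref{theo: New Extended Exponential Efron-Stein inequality}) in the direction $Z_i \ge Z$. Applied to $\mathbb{E}[X_{(k)}]-X_{(k)}$, it converts the tilted control of $\sum_i(Z_i-Z)^2$ just established into $\log\mathbb{E}\bigl[\exp(\lambda(\mathbb{E}[X_{(k)}]-X_{(k)}))\bigr] \le \tfrac{\lambda^2 v^l}{2}$ for every $\lambda \ge 0$, which is \eqref{equ: new Exponential Efron-Stein inequality -Z_k}. Inequality \eqref{equ: new Concentration inequality for -Z_k} then follows from the Cram\'er--Chernoff bound $\mathbb{P}(\mathbb{E}[X_{(k)}]-X_{(k)} \ge t) \le \inf_{\lambda\ge 0}\exp\bigl(\tfrac{\lambda^2 v^l}{2}-\lambda t\bigr) = \exp\bigl(-\tfrac{t^2}{2v^l}\bigr)$, evaluated at $t = \sqrt{2 v^l \gamma}$.

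The main obstacle is the step invoking the extended inequality, and specifically obtaining the \emph{clean} sub-Gaussian form with no Bernstein correction term (in contrast to Lemma~\ref{lemma: concentration for Z_k}). The proxy $\sum_i(Z_i-Z)^2$ is an unbounded random variable for IHR laws, so the classical sub-Gaussian Efron-Stein criterion — which needs an almost-sure bound — does not apply; what rescues the argument in this direction is that $-X_{(k)}$ is bounded above by $0$, so the tilting factor $e^{-\lambda X_{(k)}}$ never exceeds $1$ and cannot amplify the exponential moments of the spacing, while the uniform conditional bound on $X_{(k-1)}-X_{(k)}$ keeps the tilted ratio below $v^l$ at every $\lambda$. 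Verifying that these two facts really do collapse the would-be Bernstein term is the delicate point; the remaining ingredients (the tie-free reduction, the conditional-independence structure used for the spacing, and the boundary case $k=n$) are routine.
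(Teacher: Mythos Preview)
Your proposal is correct and lands on the same sub-Gaussian bound via the same entropy/Efron--Stein scaffold, but the way you decouple $X_{(k)}$ from the spacing $S_{k-1} = X_{(k-1)}-X_{(k)}$ is genuinely different from the paper's. The paper proves Theorem~\ref{theo: New Extended Exponential Efron-Stein inequality} by first applying Proposition~\ref{prop: new bounds for entropy} and then factoring $\mathbb{E}\bigl[\exp(-\lambda X_{(k)})\,\phi(-\lambda S_{k-1})\bigr]$ via a \emph{negative association} argument (Proposition~\ref{prop: new negative association for order statistics and spacing}, proved through Harris' inequality and R\'enyi's representation); the second moment of $S_{k-1}$ is then bounded through R\'enyi's representation again (as in Proposition~\ref{prop: bound of expected spacing}). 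You replace both of these devices by a single conditional argument: given $X_{(k)}=x$, the IHR assumption yields an almost-sure stochastic domination of $S_{k-1}$ by an $\mathrm{Exp}((k-1)L)$ variable, uniformly in $x$, which delivers both the decoupling under the tilt and the bound $\mathbb{E}[S_{k-1}^2]\le 2/((k-1)L)^2$ in one stroke. Your route is more elementary --- no R\'enyi representation, no Harris inequality --- and even gives the slightly stronger almost-sure conditional statement $\mathbb{E}[S_{k-1}^2\mid X_{(k)}]\le 2/((k-1)L)^2$; the paper's route is more modular, isolating the negative-association step as a reusable proposition.

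One correction to your last paragraph: the reason the left tail is cleanly sub-Gaussian is \emph{not} that $e^{-\lambda X_{(k)}}\le 1$. Using that inequality alone would leave $\mathbb{E}[\phi(-\lambda S_{k-1})]$ in the numerator and $\mathbb{E}[e^{-\lambda X_{(k)}}]$ (which tends to $0$ as $\lambda\to\infty$) in the denominator of the Herbst differential inequality, and nothing would close. What actually kills the Bernstein term is that on this tail the modified log-Sobolev bound involves $\phi(-\lambda S_{k-1})$, and $\phi(-x)=e^{-x}+x-1\le x^2/2$ for $x\ge 0$; hence only the \emph{second moment} of $S_{k-1}$ appears, never its exponential moments, and your uniform conditional bound on $\mathbb{E}[S_{k-1}^2\mid X_{(k)}]$ then finishes the job. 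This is precisely the mechanism inside the proof of Theorem~\ref{theo: New Extended Exponential Efron-Stein inequality}, so once you cite that theorem as a black box your worry is already resolved.
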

    The above results imply $X_{(k)}$ is sub-gamma on the right tail
    with $v^r$ and $c^r$, and sub-Gaussian on the left tail with $v^l$.
    The two different rates of tail bounds reflect the nature of the asymmetric (non-negative) random variable assumption.


    \textbf{Comparison with related work:}
    The result in \citet{boucheron2012} is a special case of Lemma \ref{lemma: concentration for Z_k},
    i.e. when $X_{(k)}$ is the order statistics of absolute Gaussian random variable and $k = n/2$ or 1 only (i.e. median and maximum).
    We extended their results as follows:
    \begin{enumerate*}[mode=unboxed,topsep=0pt,itemsep=-1ex,partopsep=1ex,parsep=1ex,label = \textbf{\textcolor{blue}{(\roman*)}}]
    \item Our results work for a general family of distributions under Assumption \ref{ass:IHR};
    \item We provide a new left tail concentration bound in Lemma \ref{lemma: new concentration for -Z_k};
    \item The concentration result in \citet{boucheron2012} only covered the cases $K = n/2$ or $k=1$,
    while their results can be trivially extended to $k \in [1,n/2]$.
    We show a non-trivial further extension to $k \in [1,n)$ on right tail and $k \in (1,n]$ on left tail.
    \item While we follow similar proof technique (i.e. entropy method) as shown in \citet{boucheron2012},
      we claim novelty of several propositions and lemmas which enables us to derive the new results,
      which can be independent interest, see Remark \ref{remark: Novelty of our proof techinique} in Section \ref{sec: key proofs} for details.
    \end{enumerate*}\\
    \citet{kandasamy_parallelised_ts} extended the result from \citet{boucheron2012} to exponential random variables,
    but we have a tighter left tail bound and a more general analysis in terms of distributions and ranks.
    To our best knowledge, we are the first work studying the two-side order statistic concentration for general IHR distributions.

    \subsection{Quantiles}
    \label{sec: concentration quantiles}

    Now we convert the concentration results for order statistics to quantiles, namely our goal is to derive
    two concentration bounds, for $\gamma \geq 0$,
    \begin{align}
      \label{equ: right tail bound for quantiles}
      & \mathbb{P}\left(\hat{Q}^{\tau}_{n} - Q^{\tau} \geq d^{r}_{n, \tau, \gamma}\right) \leq \exp(-\gamma),\\
      \label{equ: left tail bound for quantiles}
      & \mathbb{P}\left(Q^{\tau}-\hat{Q}^\tau_n \geq d^{l}_{n, \tau, \gamma}\right) \leq \exp(-\gamma).
    \end{align}
    By definition of empirical quantile in Eq. (\ref{equ: empirical quantile defi}), the empirical quantile is the order statistic
    with the rank expressed as a function of quantile level, i.e.
    $\hat{Q}^\tau_{n} = X_{(\lfloor n(1- \tau) \rfloor)}$.
    \citet{david_order_1981} studied the relationship between the expected order statistics and the population quantile under Assumption \ref{ass: pdf continuously differentiable},
    we use their results (Theorem \ref{theo: link expected order statistics and population quantile}) to convert the concentration results of order statistics to quantiles.
    The constant $b$ depends on the density around $\tau$-quantile.
    Linking Theorem \ref{theo: link expected order statistics and population quantile} and Lemma \ref{lemma: concentration for Z_k} or \ref{lemma: new concentration for -Z_k}
    gives the concentration of quantiles (Theorem \ref{theo: New Two-side Concentration inequality for quantiles.}).

    \begin{ass}
      \label{ass: pdf continuously differentiable}
      Assume the probability density function of random variable $X$ is continuously differentiable.
    \end{ass}

    \begin{restatable}[Link expected order statistics and population quantile (\citet{david_order_1981} Section 4.6, \citet{yu2013sample}]{theo}{ESOQ}
      \label{theo: link expected order statistics and population quantile}
      Under Assumption \ref{ass: pdf continuously differentiable},
      there exists constant $b \geq 0$ and scalars $w_n$ such that 
      $w_n = \frac{b}{n}$, then
      $|\mathbb{E}[X_{(\lfloor n(1-\tau) \rfloor)}] - Q^\tau| \leq w_n$.
    \end{restatable}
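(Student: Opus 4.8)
\emph{Proof plan.} The idea is to transfer the claim to uniform order statistics via the probability integral transform, and then Taylor-expand the quantile function $G:=F^{-1}$ around $\tau$: the linear term is handled exactly using the known first moment of uniform order statistics, while the quadratic remainder is controlled by localizing to a fixed neighborhood of $\tau$ and dispatching the complementary rare event with a Binomial tail bound.

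\emph{Reduction and moments.} Let $U_1,\dots,U_n$ be i.i.d.\ uniform on $[0,1]$ with increasing-order statistics $U_{1:n}\le\cdots\le U_{n:n}$. Under Assumption~\ref{ass:IHR} the bound $h=f/(1-F)\ge L>0$ forces $f>0$ on the interior of the support, so there $F$ is continuous and strictly increasing and $G=F^{-1}$ is well defined. Since $X_{(k)}$ written in decreasing order is the $(n-k+1)$-st smallest observation, we have the distributional identity $X_{(\lfloor n(1-\tau)\rfloor)}\stackrel{d}{=}G(U_{j:n})$ with $j:=n-\lfloor n(1-\tau)\rfloor+1$, hence $\mathbb{E}[X_{(\lfloor n(1-\tau)\rfloor)}]=\mathbb{E}[G(U_{j:n})]$ and $Q^\tau=G(\tau)$. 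Recall $\mathbb{E}[U_{j:n}]=\frac{j}{n+1}$ and $\mathrm{Var}(U_{j:n})=\frac{j(n-j+1)}{(n+1)^2(n+2)}\le\frac{1}{4(n+2)}$. Writing $\lfloor n(1-\tau)\rfloor=n(1-\tau)-\rho$ with $\rho\in[0,1)$ gives $\frac{j}{n+1}-\tau=\frac{\rho+1-\tau}{n+1}\in[0,\tfrac{2}{n+1})$, so $|\mathbb{E}[U_{j:n}]-\tau|\le\frac{2}{n+1}$ and $\mathbb{E}[(U_{j:n}-\tau)^2]=\mathrm{Var}(U_{j:n})+(\mathbb{E}[U_{j:n}]-\tau)^2=O(1/n)$.

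\emph{Taylor expansion with a neighborhood split.} Fix $\epsilon>0$ with $[\tau-\epsilon,\tau+\epsilon]\subset(0,1)$. By Assumption~\ref{ass: pdf continuously differentiable} and $f>0$ there, $G$ is twice continuously differentiable on this interval with $G'(u)=1/f(G(u))$ and $G''(u)=-f'(G(u))/f(G(u))^3$; set $M:=\sup_{|u-\tau|\le\epsilon}|G''(u)|<\infty$ and note $G'(\tau)=1/f(Q^\tau)<\infty$. On $A:=\{|U_{j:n}-\tau|\le\epsilon\}$, a second-order Taylor bound gives $|G(U_{j:n})-G(\tau)-G'(\tau)(U_{j:n}-\tau)|\le\frac{M}{2}(U_{j:n}-\tau)^2$, so
\[
\Big|\,\mathbb{E}[G(U_{j:n})\mathbb{I}_A]-G(\tau)\mathbb{P}(A)-G'(\tau)\,\mathbb{E}[(U_{j:n}-\tau)\mathbb{I}_A]\,\Big|\le\frac{M}{2}\,\mathbb{E}[(U_{j:n}-\tau)^2]=O(1/n).
\]
On $A^c$ one has $\mathbb{P}(U_{j:n}>\tau+\epsilon)=\mathbb{P}(\mathrm{Bin}(n,\tau+\epsilon)<j)$ and symmetrically $\mathbb{P}(U_{j:n}<\tau-\epsilon)=\mathbb{P}(\mathrm{Bin}(n,\tau-\epsilon)\ge j)$, so Hoeffding's inequality yields $\mathbb{P}(A^c)\le 2e^{-cn}$ for some $c=c(\epsilon,\tau)>0$ and all $n$ large enough that $\frac{j}{n+1}$ lies within $\epsilon/2$ of $\tau$. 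Since Assumption~\ref{ass:IHR} gives $1-F(x)\le e^{-Lx}$, all moments of $X$ are finite, and the crude bound $\mathbb{E}[G(U_{j:n})^2]\le n\,\mathbb{E}[X^2]$ combined with Cauchy--Schwarz shows $\mathbb{E}[G(U_{j:n})\mathbb{I}_{A^c}]$ and $\mathbb{E}[|U_{j:n}-\tau|\mathbb{I}_{A^c}]$ are exponentially small, in particular $O(1/n)$. Collecting the pieces, together with $|G'(\tau)\mathbb{E}[(U_{j:n}-\tau)]|\le|G'(\tau)|\big(|\mathbb{E}[U_{j:n}]-\tau|+\mathbb{E}[|U_{j:n}-\tau|\mathbb{I}_{A^c}]\big)=O(1/n)$ and $G(\tau)\mathbb{P}(A^c)=O(1/n)$, gives $|\mathbb{E}[X_{(\lfloor n(1-\tau)\rfloor)}]-Q^\tau|=O(1/n)$; tracking the constants (which depend only on $f(Q^\tau)$, and on $\sup f$ and $\sup|f'|$ over a fixed neighborhood of $Q^\tau$, and on $\tau$) produces a bound of the form $b/n=:w_n$ with $b\ge 0$.

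\emph{Main obstacle.} The delicate step is the quadratic Taylor remainder, since $G''=-f'/f^3$ need not be bounded globally (far from $Q^\tau$ the density can vanish or $f'$ can blow up), so one cannot expand on all of $(0,1)$. The remedy is precisely the localization: restrict to a fixed neighborhood of $\tau$ where Assumption~\ref{ass: pdf continuously differentiable} and the IHR lower bound give a uniform bound on $G''$, and handle the rare ``far'' event with an exponential Binomial tail for $U_{j:n}$ and a polynomial-in-$n$ moment bound on $G(U_{j:n})$. A secondary nuisance is the index bookkeeping, namely converting the floor in $\lfloor n(1-\tau)\rfloor$ and the decreasing-order convention into the correct uniform index $j$ and verifying $|j/(n+1)-\tau|=O(1/n)$.
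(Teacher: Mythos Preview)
The paper does not actually prove this statement; it is quoted from \citet{david_order_1981} and used as a black box. Your argument is essentially the classical derivation found in that source: pass to uniform order statistics via the probability integral transform, Taylor-expand $G=F^{-1}$ around $\tau$, use the exact Beta moments $\mathbb{E}[U_{j:n}]=j/(n+1)$ and $\mathrm{Var}(U_{j:n})=O(1/n)$ for the leading terms, and control the second-order remainder by localizing to a neighborhood where $G''$ is bounded while dispatching the complement with an exponential tail bound. So in spirit your route and the cited reference coincide.

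Two small points worth tightening. First, you invoke Assumption~\ref{ass:IHR} (for $f>0$ and for finite moments of $X$) even though the theorem as written only lists Assumption~\ref{ass: pdf continuously differentiable}; in the paper's setting both assumptions are in force when the theorem is applied, so this is harmless contextually, but strictly speaking the statement under Assumption~\ref{ass: pdf continuously differentiable} alone would need those extra hypotheses (or a different moment argument). Second, your Hoeffding step is phrased ``for all $n$ large enough,'' whereas the conclusion is meant for every $n$ with $\lfloor n(1-\tau)\rfloor\ge 1$; since there are only finitely many smaller $n$ and each corresponding $|\mathbb{E}[X_{(\lfloor n(1-\tau)\rfloor)}]-Q^\tau|$ is finite (by the moment bound you already have), you can simply enlarge $b$ to absorb them. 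With those two remarks addressed, the argument is complete.
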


    \begin{restatable}[Two-side Concentration Inequality for Quantiles]{theo}{TwosideBoundsQuantiles}
        \label{theo: New Two-side Concentration inequality for quantiles.}
        Recall $v^r = \frac{2}{k L^2}$, $v^l = \frac{2(n-k+1)}{(k-1)^2 L^2}$, $c^r = \frac{2}{k L}, w_n = \frac{b}{n}$. 
        For quantile level $\tau \in (0,1)$, let rank $k = \lfloor n(1-\tau) \rfloor$.
        Under Assumption \ref{ass:IHR} and \ref{ass: pdf continuously differentiable},
        we have
        \begin{align}
            \mathbb{P}\left( \hat{Q}^\tau_{n} - Q^\tau \geq \sqrt{2 v^r \gamma} + c^r \gamma + w_n \right) &\leq \exp(-\gamma). \nonumber \\
            \mathbb{P}\left( Q^\tau - \hat{Q}^\tau_{n} \geq \sqrt{2 v^l \gamma} + w_n \right) &\leq \exp(-\gamma). \nonumber
        \end{align}
    \end{restatable}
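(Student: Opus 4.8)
The plan is to decompose the deviation $\hat{Q}^\tau_n - Q^\tau$ into a fluctuation term and a bias term, control the fluctuation using the order-statistic concentration lemmas, and absorb the bias into the $w_n$ slack via Theorem~\ref{theo: link expected order statistics and population quantile}. First I would recall from Eq.~(\ref{equ: empirical quantile defi}) that, with $k = \lfloor n(1-\tau)\rfloor$, the empirical quantile is exactly the order statistic $\hat{Q}^\tau_n = X_{(k)}$. Writing
\[
\hat{Q}^\tau_n - Q^\tau = \big(X_{(k)} - \mathbb{E}[X_{(k)}]\big) + \big(\mathbb{E}[X_{(k)}] - Q^\tau\big),
\]
Theorem~\ref{theo: link expected order statistics and population quantile} gives, under Assumption~\ref{ass: pdf continuously differentiable}, $|\mathbb{E}[X_{(k)}] - Q^\tau| \le w_n$ with $w_n = b/n$, so the bias term lies in $[-w_n, w_n]$.

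For the right tail, on the event $\{\hat{Q}^\tau_n - Q^\tau \ge \sqrt{2 v^r \gamma} + c^r \gamma + w_n\}$ the decomposition together with $\mathbb{E}[X_{(k)}] - Q^\tau \le w_n$ forces $X_{(k)} - \mathbb{E}[X_{(k)}] \ge \sqrt{2 v^r \gamma} + c^r \gamma$; hence this event is contained in the event of Eq.~(\ref{equ: Concentration inequality for Z_k}) from Lemma~\ref{lemma: concentration for Z_k}, whose probability is at most $\exp(-\gamma)$. Symmetrically, on $\{Q^\tau - \hat{Q}^\tau_n \ge \sqrt{2 v^l \gamma} + w_n\}$, using $Q^\tau - \mathbb{E}[X_{(k)}] \le w_n$ we obtain $\mathbb{E}[X_{(k)}] - X_{(k)} \ge \sqrt{2 v^l \gamma}$, so Eq.~(\ref{equ: new Concentration inequality for -Z_k}) from Lemma~\ref{lemma: new concentration for -Z_k} bounds its probability by $\exp(-\gamma)$. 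Monotonicity of probability then yields both claimed inequalities, with the constants $v^r, c^r, v^l, w_n$ being precisely those of the cited results evaluated at this $k$.

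The main point to verify carefully is that the rank $k = \lfloor n(1-\tau)\rfloor$ actually falls in the admissible ranges of the two lemmas — $k \in [1,n) \cap \mathbb{N}^\ast$ for the right-tail bound and $k \in (1,n] \cap \mathbb{N}^\ast$ for the left-tail bound — which imposes mild conditions on the pair $(n,\tau)$ (roughly $n\tau \ge 1$ so that $k < n$, and $n(1-\tau) \ge 2$ so that $k > 1$); I would state these explicitly rather than leave them implicit. Beyond that, the argument is essentially bookkeeping: no new concentration estimate or moment computation is required, only the triangle-inequality split above and the event-inclusion argument. One subtlety worth a sentence is that Theorem~\ref{theo: link expected order statistics and population quantile} furnishes a \emph{two-sided} bound $|\mathbb{E}[X_{(k)}] - Q^\tau| \le w_n$, which is exactly what allows the same slack $w_n$ to absorb the upward bias in the right tail and the downward bias in the left tail.
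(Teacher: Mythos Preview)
Your proposal is correct and follows essentially the same approach as the paper: decompose $\hat{Q}^\tau_n - Q^\tau$ into the fluctuation $X_{(k)} - \mathbb{E}[X_{(k)}]$ and the bias $\mathbb{E}[X_{(k)}] - Q^\tau$, bound the former by Lemmas~\ref{lemma: concentration for Z_k} and~\ref{lemma: new concentration for -Z_k} and the latter by Theorem~\ref{theo: link expected order statistics and population quantile}, then combine via event inclusion (the paper phrases the same step through the complementary high-probability event). Your explicit check on the admissible range of $k$ is a point the paper leaves implicit, so including it is an improvement rather than a deviation.
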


    Our confidence intervals depend on the number of samples $n$, the quantile level $\tau$ and
    the lower bound of hazard rate $L$.
    Our bound is tighter when $L$ is larger or $\tau$ is smaller.
    Our methods also provide a way to understand the two-sided asymmetric concentration for quantiles when the distributions are asymmetric.


    \textbf{Bias term:} Compared with the concentration results of order statistics (Lemma \ref{lemma: concentration for Z_k} and \ref{lemma: new concentration for -Z_k}),
    there is an extra term $w_n$ in Theorem \ref{theo: New Two-side Concentration inequality for quantiles.}, which has the rate $\mathcal{O}(\frac{1}{n})$ and comes from
    the gap between the expected order statistics and population quantile (Theorem \ref{theo: link expected order statistics and population quantile}).
    Our results show that although the quantile estimations based on single order statistics with finite samples are biased,
    the concentration of
    empirical quantiles to population quantiles has the same convergence rate as the concentration of order statistics to its expectation,
    i.e. both with convergence rate $\mathcal{O}(\frac{1}{\sqrt{n}})$.
    One could potentially consider more than one expected order statistic around the true quantile
    value to obtain a better estimate, which is beyond the scope of this work. But as we will see in Corollary \ref{coro: Rep Con Q}
    the bias term does not affect our error bound for best arms identification.

    \textbf{Comparison to related work:} The main difference of our approach is that
    we directly analyse the object of interest (the random variable itself)
    instead of the the value of its distribution.
    In constrast to proof techniques shown in the literature, our approach is based on the entropy method and does not convert empirical quantiles to empirical c.d.f.
    Instead, we study the concentration bound based on the spacing between consecutive order statistics (See Appendix \ref{sec: key proofs} for details). We provide concentration inequalities to
    two distinct quantities, the expected order statistics and the true quantile.
    Apart from \citet{boucheron2012} we are not aware of any other work on order statistics.

There are two types of concentration inequalities for quantile estimations with the exponential rate in the literature.
Because the empirical quantile is non-linear, the two types of concentration inequalities are not interchangable.
Most of the literature focuses on the concentration of empirical quantile at level $\tau \pm \delta$ ($\forall \delta \geq 0$ s.t. $\tau+\delta \leq 1$ and $\tau - \delta \geq 0$) to the population quantile at level $\tau$, i.e.
\begin{align}
  \mathbb{P}\left(\hat{Q}^{\tau - \delta}_t \geq Q^{\tau}\right) &\leq \exp(-\gamma),\\
  \mathbb{P}\left(\hat{Q}^{\tau + \delta}_t \leq Q^{\tau}\right) &\leq \exp(-\gamma).
\end{align}
Note that (by comparing with Eq. (\ref{equ: right tail bound for quantiles}) and (\ref{equ: left tail bound for quantiles}))
this paper considers a deviation in the quantile, and not $\tau$.
Based on assuming the c.d.f. is continuous and strictly increasing, this type of concentration can benefit from directly converting the concentration to quantiles to the concentration to c.d.f..
For example,
\citet{szorenyi2015qualitative,torossian_x-armed_2019} showed concentration inequalities for quantiles
with rate $\mathcal{O}(\sqrt{\frac{\log n}{n}})$;
\citet{howard_sequential_2019} improved previous work and proved confidence sequences for quantile estimations
with the confidence width shrinks in rate $\mathcal{O}(\sqrt{\frac{\log \log n}{n}})$.

Our results are about a different aspect of deviation, where the concentration is between empirical quantile at level $\tau$ and the population quantile at level $\tau$,
as shown in Eq. (\ref{equ: right tail bound for quantiles}) and (\ref{equ: left tail bound for quantiles}).
\citet{tran-thanh_functional_2014,yu2013sample} proposed concentration for quantile estimations based on the concentration of order statistics
under Chebyshev's inequality, while their concentration inequality is not in exponential rate (in terms of $\gamma$)
thus their bounds decrease much slower when $\gamma$ increases.
A different set of assumptions are needed for this type of concentration to achieve an exponential rate. For example,
\citet{cassel_general_2018} assumed Lipschitz continuity of c.d.f. and derive bounds with rate $\mathcal{O}(\sqrt{\frac{\log n}{n}})$.
By assuming that the c.d.f. is continuous and strictly increasing,
and knowledge of the density around quantiles,
\citet{kolla_concentration_2019} provided an exponential concentration inequality with $\mathcal{O}(\frac{1}{\sqrt{n}})$ by using a generalized notion of an inverse empirical c.d.f. to be able to apply the DKW inequality \citep{dvoretzky1956}.
Their confidence intervals on both two sides are decreasing in rate $\mathcal{O}(\sqrt{\frac{1}{n}})$,
which is comparable to ours.
Our bound can further benefit from the case where quantile level $\tau$ is small or the lower bound of hazard rate $L$ is big.

\section{Roadmap for Concentration Proofs}
    \label{sec: key proofs}

    In this section, we provide the roadmap to the technical results behind Section~\ref{sec:order statistics},
    which may be of independent interest for other applications.
    Figure \ref{fig: Roadmap of concentration proof} summarises the roadmap of the theorems, and
    the detailed proof is shown in Appendix \ref{sec: concentration proof}.
    This section is useful to readers interested in techical aspects of concentration inequalities,
    but may be skipped by others.
    We briefly introduce the \textit{entropy method} here, and
    we refer the reader to \citet{boucheron2013} Chapter 6 for a comprehensive review.
    The logarithmic moment generating function of the random variable $X$ is defined as
    \begin{align}
      \label{equ: logarithmic moment generating function}
      \psi_X(\lambda) := \log \mathbb{E}[\exp(\lambda X)].
    \end{align}
    Define the \textit{entropy} (different from Shannon entropy) of a non-negative random variable $X$ as
    \begin{align}
        \label{equ: entropy}
        \operatorname{Ent}[X] := \mathbb{E}[X \log X]-\mathbb{E} [X] \log \mathbb{E} [X].
    \end{align}
    Then normalising $\operatorname{Ent}[\exp(\lambda X)]$ by $\mathbb{E}[\exp(\lambda X)]$
    gives us an expression in terms of the logarithmic moment generating function
    $\psi_{X - \mathbb{E}[X]}(\lambda)$
    (refer to (\ref{equ: logarithmic moment generating function})), i.e.
    \begin{align}
        \label{equ: normalise entropy}
        \frac{\operatorname{Ent}\left(e^{\lambda X}\right)}{\mathbb{E} e^{\lambda X}}=\lambda \psi_{X - \mathbb{E}[X]}^{\prime}(\lambda)-\psi_{X - \mathbb{E}[X]}(\lambda).
    \end{align}
    One can derive an upper bound for $\operatorname{Ent}[\exp(\lambda X)]$ by the
    modified logarithm Sobolev inequality \citep{ledoux2001concentration}
    (see Theorem \ref{theo: Modified logarithmic Sobolev inequality} in Appendix \ref{sec: concentration proof}).
    Then by solving a differential inequality, tail bounds can be obtained via Chernoff's bound.

    \begin{figure}[!t]
        \centering
        \includegraphics[scale=0.25]{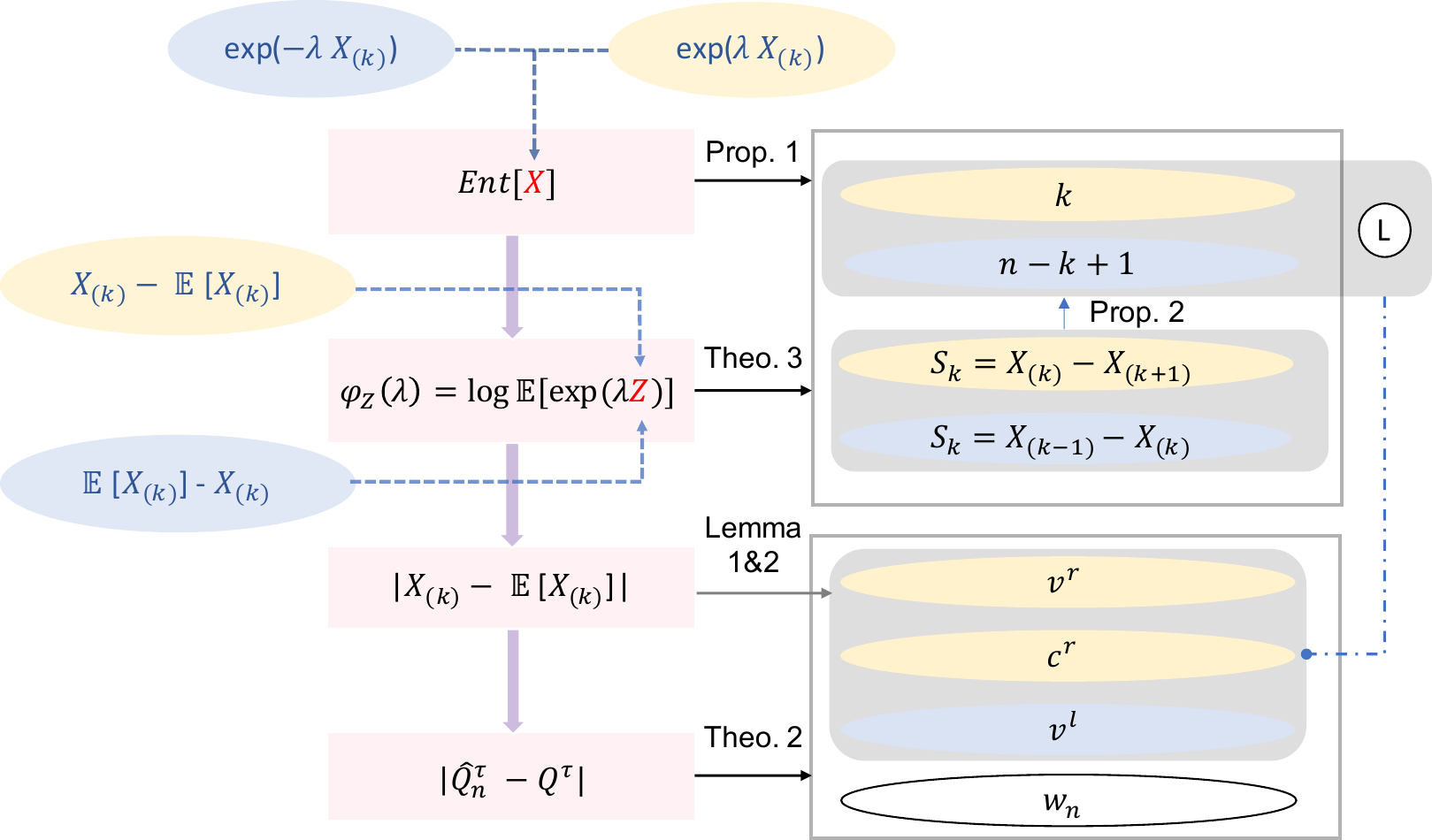}
        \caption{Roadmap of concentration proof. 
        Variables related to left tail bound and right tail bound are specified by blue and yellow respectively.
        We upper bound the variables highlighted in pink in the order indicated by arrows between them. 
        The upper bounds are functions of the blocks of variables pointed by solid arrows, with the corresponding theorem names on the arrow. 
        }
        \label{fig: Roadmap of concentration proof}
        \vspace{-0.5cm}
    \end{figure}

    In the following, we apply the entropy method to order statistics and focus on our contributions in terms of the proof technique.
    We derive a technical result to bound the entropy in Proposition~\ref{prop: new bounds for entropy}.
    The bound on entropy, along with another technical result on the spacing between consecutive order statistics allows us to derive 
    an exponential Efron-Stein inequality (Theorem~\ref{theo: New Extended Exponential Efron-Stein inequality}).
    Recall $X_{(1)} \geq \ldots \geq X_{(n)}$ are the order statistics of $X_1, \ldots, X_n$.
    Define the spacing between order statistics of order $k$ and $k-1$ as
    \begin{align}
    \label{equ: defi spacing}
      S_k := X_{(k)} - X_{(k+1)}; \quad S_{k-1} := X_{(k-1)} - X_{(k)}.
    \end{align}
    We first show the upper bounds of entropy in terms of the spacing between order statistics in Proposition \ref{prop: new bounds for entropy}.

    \begin{restatable}[Entropy upper bounds]{prop}{OSS}
    \label{prop: new bounds for entropy}
    Define $\phi(x) :=\exp(x)-x-1$ and $\zeta(x) :=\exp(x) \phi(-x) =1+(x-1) \exp(x)$.
    For all $\lambda \geq 0$,
    and for $k \in [1,n) \wedge \mathbb{N}^\ast$,
    \begin{align}
    \label{equ: new bounds for entropy X}
        \operatorname{Ent}\left[\exp(\lambda X_{(k)})\right] \leq k \mathbb{E}\left[\exp(\lambda X_{(k+1)}) \zeta\left(\lambda S_k \right)\right].
    \end{align}
    For $k \in (1,n] \wedge \mathbb{N}^\ast$,
    \begin{align}
    \label{equ: new bounds for entropy -X}
    \operatorname{Ent} & \left[\exp(- \lambda X_{(k)})\right] \leq \nonumber \\
         & (n - k + 1)
         \mathbb{E}\left[\exp(-\lambda X_{(k)}) \phi\left( - \lambda S_{k-1}  \right)\right].
    \end{align}

    \end{restatable}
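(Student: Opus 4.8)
The plan is to run the entropy method, bounding $\operatorname{Ent}[e^{\lambda Z}]$ with $Z = X_{(k)}$ for the first inequality and $Z = -X_{(k)}$ for the second, by combining the sub-additivity (tensorization) of entropy with the modified logarithmic Sobolev inequality applied to a leave-one-out proxy chosen to track order statistics. Write $X^{(i)} := (X_j)_{j\ne i}$, and let $\mathbb{E}^{(i)}$ and $\operatorname{Ent}^{(i)}$ denote expectation and entropy taken over $X_i$ with $X^{(i)}$ held fixed. Tensorization gives $\operatorname{Ent}[e^{\lambda Z}] \le \sum_{i=1}^{n} \mathbb{E}\big[\operatorname{Ent}^{(i)}[e^{\lambda Z}]\big]$, so it is enough to control each conditional entropy.

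For the per-coordinate step I would use the modified logarithmic Sobolev inequality (Theorem~\ref{theo: Modified logarithmic Sobolev inequality}) in the form that for \emph{any} $X^{(i)}$-measurable random variable $Z_i$,
\[
  \operatorname{Ent}^{(i)}\!\big[e^{\lambda Z}\big] \;\le\; \mathbb{E}^{(i)}\!\big[e^{\lambda Z}\,\phi\big(-\lambda(Z-Z_i)\big)\big].
\]
Equivalently, this falls out of the variational identity $\operatorname{Ent}[Y] = \inf_{t>0}\mathbb{E}[Y\log Y - Y\log t - Y + t]$ applied conditionally with $Y = e^{\lambda Z}$ and the (conditionally constant, hence admissible) choice $t = e^{\lambda Z_i}$, after simplifying $\lambda(Z-Z_i) - 1 + e^{-\lambda(Z-Z_i)} = \phi(-\lambda(Z-Z_i))$. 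The feature to exploit is that no sign condition on $Z - Z_i$ is required; this is precisely what lets one argument cover all ranks $k$ rather than only $k=1$ or $k=n/2$ as in \citet{boucheron2012}.

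The crux is the choice of proxy and the leave-one-out bookkeeping, carried out separately for the two tails. For (\ref{equ: new bounds for entropy X}) take $Z = X_{(k)}$ and let $Z_i$ be the $k$-th largest of the $n-1$ values $X^{(i)}$, which is well-defined exactly when $k \le n-1$, i.e.\ $k \in [1,n)$. If $X_i$ has rank larger than $k$ among all $n$ samples, deleting it leaves the top $k$ untouched, so $Z_i = X_{(k)} = Z$ and the term vanishes because $\phi(0)=0$; if $X_i$ has rank at most $k$, deleting it promotes $X_{(k+1)}$, so $Z_i = X_{(k+1)}$ and $Z - Z_i = S_k$. Since $X$ is continuous, ties have probability zero and exactly $k$ indices have rank at most $k$, so summing gives $\operatorname{Ent}[e^{\lambda X_{(k)}}] \le k\,\mathbb{E}[e^{\lambda X_{(k)}}\phi(-\lambda S_k)]$; substituting $X_{(k)} = X_{(k+1)} + S_k$ and using $e^{x}\phi(-x) = \zeta(x)$ yields (\ref{equ: new bounds for entropy X}). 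For (\ref{equ: new bounds for entropy -X}) take $Z = -X_{(k)}$ and let $Z_i = -\big(\text{the }(k-1)\text{-th largest of }X^{(i)}\big)$, well-defined exactly when $k \ge 2$, i.e.\ $k \in (1,n]$. If $X_i$ has rank at most $k-1$, deletion promotes $X_{(k)}$ into the $(k-1)$-th position so $Z_i = -X_{(k)} = Z$ and the term vanishes; if $X_i$ has rank at least $k$, the $(k-1)$-th largest of $X^{(i)}$ is still $X_{(k-1)}$, so $Z_i = -X_{(k-1)}$ and $Z - Z_i = X_{(k-1)} - X_{(k)} = S_{k-1}$. There are $n-k+1$ indices of rank at least $k$, so summing gives exactly (\ref{equ: new bounds for entropy -X}).

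I expect the main obstacle to be not analytic but combinatorial: one must verify that each proxy genuinely depends only on $X^{(i)}$, that the rank-counting identities hold almost surely (this is where continuity of the reward distribution enters), and that the boundary ranks $k=1$ on the right tail and $k=n$ on the left tail are covered by the same formulas. Once the proxies are set up correctly, the remaining manipulations are the routine algebra indicated above, and this bookkeeping is what constitutes the genuine extension of \citet{boucheron2012} beyond the median and the maximum.
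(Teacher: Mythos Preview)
Your proposal is correct and follows essentially the same route as the paper: apply the modified logarithmic Sobolev inequality with the leave-one-out proxy equal to the $k$-th (resp.\ $(k{-}1)$-th) largest of $X^{(i)}$, do the rank-counting to see that exactly $k$ (resp.\ $n-k+1$) coordinates contribute a nonzero term, and for the right tail rewrite $e^{\lambda X_{(k)}}\phi(-\lambda S_k)=e^{\lambda X_{(k+1)}}\zeta(\lambda S_k)$. Your framing via tensorization plus a per-coordinate variational bound is just an unpacking of the same Ledoux inequality the paper invokes directly, and your explicit remark that no sign condition on $Z-Z_i$ is needed is a nice clarification of why the argument works for all ranks.
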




    The upper bounds in Proposition \ref{prop: new bounds for entropy}
    are expressed in terms of the corresponding order statistics and the spacing for consecutive order statistics.
    From Proposition \ref{prop: new bounds for entropy}, and by normalising entropy as shown in Eq. (\ref{equ: normalise entropy}),
    we show the upper bound of the logarithmic moment generating function of $Z_k := X_{(k)} - \mathbb{E}[X_{(k)}]$ and $Z_k^\prime :=  \mathbb{E}[X_{(k)}] - X_{(k)}$.
    \begin{restatable}[Extended Exponential Efron-Stein inequality]{theo}{NewExtendedEES}
        \label{theo: New Extended Exponential Efron-Stein inequality}
        With the logarithmic moment generating function defined in Eq. \ref{equ: logarithmic moment generating function},
        for $\lambda \geq 0$ and  $k \in [1,n) \wedge \mathbb{N}^\ast$,
        \begin{align}
            \label{equ: new Exponential Efron-Stein inequality right}
            & \psi_{Z_k}(\lambda)  \leq \lambda \frac{k}{2} \mathbb{E} \left[S_{k}\left(\exp(\lambda S_{k})-1\right)\right].
        \end{align}
        For $k \in (1,n] \wedge \mathbb{N}^\ast$,
        \begin{align}
            \label{equ: new Exponential Efron-Stein inequality left}
            & \psi_{Z_k^\prime}(\lambda) \leq \frac{\lambda^2 (n-k+1)}{2} \mathbb{E}[S_{k-1}^2].
        \end{align}
    \end{restatable}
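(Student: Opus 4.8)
The plan is to integrate the differential identity~(\ref{equ: normalise entropy}) in order to turn the entropy bounds of Proposition~\ref{prop: new bounds for entropy} into bounds on $\psi_{Z_k}$ and $\psi_{Z_k^\prime}$. Applying~(\ref{equ: normalise entropy}) with $X=X_{(k)}$ and with $X=-X_{(k)}$, and using $\lambda\psi^\prime(\lambda)-\psi(\lambda)=\lambda^2\,\tfrac{\de}{\de\lambda}\!\left(\psi(\lambda)/\lambda\right)$ together with $\psi_{Z_k}(0)=\psi_{Z_k}^\prime(0)=0$ (and likewise for $Z_k^\prime$, since both are centred), one obtains
\begin{align*}
\psi_{Z_k}(\lambda) &= \lambda\int_0^\lambda u^{-2}\,\frac{\operatorname{Ent}[e^{u X_{(k)}}]}{\mathbb{E}[e^{u X_{(k)}}]}\,\de u, \\
\psi_{Z_k^\prime}(\lambda) &= \lambda\int_0^\lambda u^{-2}\,\frac{\operatorname{Ent}[e^{-u X_{(k)}}]}{\mathbb{E}[e^{-u X_{(k)}}]}\,\de u.
\end{align*}
So it is enough to bound each normalised entropy by an explicit function of $u$ and the spacings and then integrate.

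For the right tail I would substitute~(\ref{equ: new bounds for entropy X}) into the numerator and use $\zeta(x)=e^x\phi(-x)$ together with $X_{(k)}=X_{(k+1)}+S_k$ to rewrite $e^{u X_{(k+1)}}\zeta(u S_k)=e^{u X_{(k)}}\phi(-u S_k)$. The exponential weight is then stripped in two steps: (i) $\mathbb{E}[e^{u X_{(k)}}]\ge\mathbb{E}[e^{u X_{(k+1)}}]$ since $X_{(k)}\ge X_{(k+1)}$, so the denominator may be replaced by $\mathbb{E}[e^{u X_{(k+1)}}]$; (ii) tilting by $e^{u X_{(k+1)}}$ only enlarges $X_{(k+1)}$ stochastically, and under Assumption~\ref{ass:IHR} the conditional law of $S_k$ given $X_{(k+1)}=a$ (a minimum of $k$ independent copies of the residual $X-a\mid X>a$) is stochastically non-increasing in $a$, so the tilted expectation of the increasing, non-negative map $\zeta(u S_k)$ is no larger than the untilted one. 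This yields $\operatorname{Ent}[e^{u X_{(k)}}]/\mathbb{E}[e^{u X_{(k)}}]\le k\,\mathbb{E}[\zeta(u S_k)]$. Integrating over $u$, one gets $\int_0^\lambda u^{-2}\zeta(us)\,\de u=s(e^{\lambda s}-1)-\zeta(\lambda s)/\lambda$ by parts, and the elementary inequality $\zeta(x)\ge\tfrac12 x(e^x-1)$ for $x\ge 0$ absorbs the subtracted term, leaving exactly $\psi_{Z_k}(\lambda)\le\tfrac{\lambda k}{2}\mathbb{E}[S_k(e^{\lambda S_k}-1)]$.

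For the left tail the weight is easier. From~(\ref{equ: new bounds for entropy -X}) and $\phi(-x)\le x^2/2$ on $x\ge 0$ one has $\operatorname{Ent}[e^{-u X_{(k)}}]\le\tfrac{(n-k+1)u^2}{2}\,\mathbb{E}[e^{-u X_{(k)}}S_{k-1}^2]$. Conditioning on $X_{(k)}$ and noting that $S_{k-1}$ given $X_{(k)}=x$ is a minimum of $k-1$ independent copies of $X-x\mid X>x$, each stochastically dominated by $\mathrm{Exp}(L)$ under Assumption~\ref{ass:IHR}, the conditional moment $\mathbb{E}[S_{k-1}^2\mid X_{(k)}=x]$ is bounded a.s.\ uniformly in $x$; pulling this bound through the expectation and dividing by $\mathbb{E}[e^{-u X_{(k)}}]$ gives $\operatorname{Ent}[e^{-u X_{(k)}}]/\mathbb{E}[e^{-u X_{(k)}}]\le\tfrac{(n-k+1)u^2}{2}\mathbb{E}[S_{k-1}^2]$, and integrating yields $\psi_{Z_k^\prime}(\lambda)\le\tfrac{\lambda^2(n-k+1)}{2}\mathbb{E}[S_{k-1}^2]$.

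I expect the crux to be the weight-stripping in the right tail: the obvious moves --- bounding $e^{u X_{(k)}}$ crudely, or dividing by $\mathbb{E}[e^{u X_{(k)}}]\ge 1$ --- push the inequality the wrong way, so one genuinely has to route the exponential factor onto the neighbouring order statistic $X_{(k+1)}$ and then invoke the IHR stochastic monotonicity of the spacing. Making that monotonicity and the accompanying tilting inequality precise, together with the scalar facts $\zeta(x)=e^x\phi(-x)$, $\zeta(x)\ge\tfrac12 x(e^x-1)$ and $\phi(-x)\le x^2/2$ for $x\ge 0$, is where most of the effort goes; the integration steps themselves are routine.
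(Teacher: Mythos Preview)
Your right-tail argument is sound and is essentially the paper's route (which defers to \citet{boucheron2012}): the tilting step is precisely the negative association of $X_{(k+1)}$ and $S_k$ under IHR, rephrased via conditioning rather than via R\'enyi's representation and Harris' inequality, and the integration together with the scalar inequality $\zeta(x)\ge\tfrac12 x(e^x-1)$ is correct.

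The left-tail argument has a gap. A uniform bound $\mathbb{E}[S_{k-1}^2\mid X_{(k)}=x]\le M$ only yields
\[
\mathbb{E}\bigl[e^{-uX_{(k)}}S_{k-1}^2\bigr]\le M\,\mathbb{E}\bigl[e^{-uX_{(k)}}\bigr],
\]
and since $M$ upper-bounds every conditional second moment while $\mathbb{E}[S_{k-1}^2]$ is their average, one has $M\ge\mathbb{E}[S_{k-1}^2]$; you cannot replace $M$ by $\mathbb{E}[S_{k-1}^2]$ in the direction you need. In fact, under Assumption~\ref{ass:IHR} the map $x\mapsto\mathbb{E}[S_{k-1}^2\mid X_{(k)}=x]$ is non-increasing (larger $X_{(k)}$ gives stochastically smaller residual spacings), so it is \emph{positively} correlated with the decreasing weight $e^{-uX_{(k)}}$, and the product-splitting $\mathbb{E}[e^{-uX_{(k)}}S_{k-1}^2]\le\mathbb{E}[e^{-uX_{(k)}}]\,\mathbb{E}[S_{k-1}^2]$ you implicitly rely on actually goes the wrong way. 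The paper's device here is instead to invoke a negative-association proposition (Proposition~\ref{prop: new negative association for order statistics and spacing}) to factor $\mathbb{E}[e^{-\lambda X_{(k)}}\phi(-\lambda S_{k-1})]\le\mathbb{E}[e^{-\lambda X_{(k)}}]\,\mathbb{E}[\phi(-\lambda S_{k-1})]$ \emph{before} applying $\phi(-x)\le x^2/2$, and only then to integrate. Your uniform bound $M=2/((k-1)^2L^2)$ does suffice for the downstream concentration (Lemma~\ref{lemma: new concentration for -Z_k}), just not for~(\ref{equ: new Exponential Efron-Stein inequality left}) as stated.
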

    Observe that the upper bounds in Theorem \ref{theo: New Extended Exponential Efron-Stein inequality}
    depend on the order statistics spacings $S_k, S_{k-1}$ in expectation. 

    The non-decreasing hazard rate assumption (Assumption~\ref{ass:IHR}) allows us to upper bound the spacings in expectation.
    We show the upper bound of expected spacing in Proposition \ref{prop: bound of expected spacing}.
    Based on a similar proof technique of Proposition \ref{prop: bound of expected spacing},
    we can further bound Theorem \ref{theo: New Extended Exponential Efron-Stein inequality} and the results are shown in
    Lemma \ref{lemma: concentration for Z_k} and Lemma \ref{lemma: new concentration for -Z_k}.

    \begin{restatable}{prop}{BoundExpSpacing}
    \label{prop: bound of expected spacing}
    For any $k \in [1, n) \wedge \mathbb{N}^\ast$, the expectation of spacing $S_k$ defined in Eq. (\ref{equ: defi spacing}) can be bounded under Assumption \ref{ass:IHR},
    $\mathbb{E}[S_k] \leq \frac{1}{kL}.$
    \end{restatable}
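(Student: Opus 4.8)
The plan is to express the expected spacing $\mathbb{E}[S_k] = \mathbb{E}[X_{(k)} - X_{(k+1)}]$ in a form where the IHR assumption can be applied directly. First I would recall the standard representation of order-statistic spacings via the quantile (inverse-c.d.f.) transformation: if $U_1, \ldots, U_n$ are i.i.d.\ uniform on $[0,1]$ with decreasing order statistics $U_{(1)} \geq \cdots \geq U_{(n)}$, then $(X_{(1)}, \ldots, X_{(n)})$ has the same joint distribution as $(F^{-1}(U_{(1)}), \ldots, F^{-1}(U_{(n)}))$, where $F^{-1}$ is the quantile function of $X$. Hence $S_k \stackrel{d}{=} F^{-1}(U_{(k)}) - F^{-1}(U_{(k+1)})$. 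Conditioning on $U_{(k+1)} = u$, the gap $U_{(k)} - u$ behaves like the minimum increment among the $k$ largest uniforms above level $u$, and in fact the pair $(U_{(k+1)}, U_{(k)})$ has a tractable joint density; the key classical fact is that $\mathbb{E}[U_{(k)} - U_{(k+1)}] = \frac{1}{n+1}$ for every $k$ (uniform spacings are exchangeable with mean $1/(n+1)$), but more useful here is a pointwise comparison after the $F^{-1}$ transform.

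The core step is to translate the IHR condition into a bound on the local behaviour of $F^{-1}$. Since $h(x) = f(x)/(1-F(x))$ and $L = \inf_x h(x) > 0$, writing $u = F(x)$ gives $\frac{d}{du} F^{-1}(u) = \frac{1}{f(F^{-1}(u))} = \frac{1}{h(F^{-1}(u))(1-u)} \leq \frac{1}{L(1-u)}$. Therefore, for $a \leq b$ in $[0,1)$,
\begin{align}
F^{-1}(b) - F^{-1}(a) = \int_a^b \frac{du}{f(F^{-1}(u))} \leq \frac{1}{L} \int_a^b \frac{du}{1-u} = \frac{1}{L}\log\frac{1-a}{1-b}. \nonumber
\end{align}
Applying this with $a = U_{(k+1)}$, $b = U_{(k)}$ and taking expectations,
\begin{align}
\mathbb{E}[S_k] \leq \frac{1}{L}\, \mathbb{E}\!\left[\log\frac{1-U_{(k+1)}}{1-U_{(k)}}\right]. \nonumber
\end{align}
Now I would use the well-known fact that if $U_{(1)} \geq \cdots \geq U_{(n)}$ are the decreasing uniform order statistics, then $1 - U_{(i)}$ are the \emph{increasing} order statistics of $n$ uniforms, and the ratios of consecutive ones have a clean distribution: $\frac{1-U_{(k+1)}}{1-U_{(k)}} \stackrel{d}{=} V^{1/k}$ where $V \sim \mathrm{Unif}(0,1)$ (equivalently $-\log$ of this ratio is $\mathrm{Exp}(k)$, via the Rényi representation of uniform order statistics as normalized partial sums of i.i.d.\ exponentials). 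Consequently $\mathbb{E}\big[\log\frac{1-U_{(k+1)}}{1-U_{(k)}}\big] = \frac{1}{k}\mathbb{E}[\log V] \cdot(-1) = \frac{1}{k}$, giving $\mathbb{E}[S_k] \leq \frac{1}{kL}$ exactly as claimed.

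The main obstacle is getting the order-statistics spacing representation precisely right and invoking the correct distributional identity for $\log\frac{1-U_{(k+1)}}{1-U_{(k)}}$ — in particular being careful that with the \emph{decreasing} convention $X_{(1)} \geq \cdots \geq X_{(n)}$, the relevant ``rank from the top'' is $k$, so the exponential in the Rényi representation has rate $k$ (not $n-k$), which is exactly what produces the $\frac{1}{kL}$ bound rather than something depending on $n$. Everything else is a monotone integral estimate using $h \geq L$. An alternative that avoids the explicit uniform-spacing identity would be to condition on $X_{(k+1)} = x$ and bound $\mathbb{E}[S_k \mid X_{(k+1)} = x]$ directly: given $X_{(k+1)} = x$, the $k$ larger samples are i.i.d.\ from $X$ conditioned on $\{X \geq x\}$, and $S_k$ is the gap between the minimum of those $k$ and $x$; the conditional survival function at $x + t$ is $\big(\tfrac{1-F(x+t)}{1-F(x)}\big)^k$, whose integral over $t \geq 0$ is $\mathbb{E}[S_k \mid X_{(k+1)} = x]$, and the IHR bound $1-F(x+t) \leq (1-F(x))e^{-Lt}$ (which follows from $h \geq L$ by integrating the hazard) gives $\mathbb{E}[S_k \mid X_{(k+1)}=x] \leq \int_0^\infty e^{-kLt}\,dt = \frac{1}{kL}$, and then take the outer expectation. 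This second route is cleaner and I would likely present it as the main argument.
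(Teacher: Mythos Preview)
Your proposal is correct. Your first route is, up to a change of variables, exactly the paper's proof: the paper works with the R-transform $R(t)=F^{-1}(1-e^{-t})$ and R\'enyi's representation for exponential order statistics (their Theorem~\ref{theo: Renyi's representation} and Proposition~\ref{prop non-increasing hazard rate}), writing $S_k \stackrel{d}{=} R(Y_{(k+1)}+E_k/k)-R(Y_{(k+1)})$ and bounding this by $E_k/(kL)$ via $R'=1/h(R)\leq 1/L$; your uniform parametrization recovers this after the substitution $Y=-\log(1-U)$. (One small slip: since $U_{(k+1)}\leq U_{(k)}$ the ratio $\tfrac{1-U_{(k+1)}}{1-U_{(k)}}$ is at least $1$, so it equals $V^{-1/k}$ rather than $V^{1/k}$; your ad hoc $(-1)$ at the end silently repairs this and the value $1/k$ is right.) Your second route---conditioning on $X_{(k+1)}=x$, using the IHR tail bound $\tfrac{1-F(x+t)}{1-F(x)}\leq e^{-Lt}$, and integrating $\int_0^\infty e^{-kLt}\,dt=\tfrac{1}{kL}$---is genuinely different from the paper's argument and more elementary: it bypasses R\'enyi's representation entirely, using only the Markov property of order statistics together with the survival-function formulation of IHR. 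Both approaches yield the same constant; the conditioning argument is shorter and self-contained, while the R\'enyi-based approach has the advantage of reusing machinery the paper needs anyway for the moment-generating-function bounds in Lemmas~\ref{lemma: concentration for Z_k} and~\ref{lemma: new concentration for -Z_k}.
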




    \begin{remark}[Novelty of our proof techinique]
    \label{remark: Novelty of our proof techinique}
    The results shown in this section may be of independent interest.
    \begin{enumerate*}[mode=unboxed,topsep=0pt,itemsep=-1ex,partopsep=1ex,parsep=1ex,label = \textbf{\textcolor{blue}{(\roman*)}}]
    \item In Proposition \ref{prop: new bounds for entropy}, we show the upper bounds of both $\operatorname{Ent}\left[\exp(\lambda X_{(k)})\right]$
          and $\operatorname{Ent}\left[\exp(-\lambda X_{(k)})\right]$ for all rank $k$ except extremes. This allows two-sided tail bounds to hold for all ranks except extremes.
    \item In Theorem \ref{theo: New Extended Exponential Efron-Stein inequality}, we show upper bounds of logarithmic moment generating function for both
          $X_{(k)} - \mathbb{E}[X_{(k)}]$ and $\mathbb{E}[X_{(k)}] - X_{(k)}$ w.r.t the order statistics spacing in expectation.
          The upper bound of $\mathbb{E}[X_{(k)}] - X_{(k)}$ is tighter (sub-Gaussian).
    \item We propose an upper bound for the expected order statistics spacing $S_k = X_{(k)} - X_{(k+1)}$ in Proposition \ref{prop: bound of expected spacing}.
    \end{enumerate*}
    \end{remark}

\section{Quantile Bandits Policy: Q-SAR}
    \label{sec: quantile bandits q-sar}

    
    
    We consider the setting of multi-armed bandits with a finite number of arms $K$.
    For each arm $i \in \mathcal{K} = \{1, ..., K\}$,
    the rewards are sampled from an unknown stationary reward distribution $F_i$.
    We assume arms are independent.
    The environment $\nu$ consists of the set of all reward distributions $F_i$, i.e. $\nu := \{F_i: i \in \mathcal{K}\}$.
    The agent makes a sequence of decisions based on a policy $\pi$
    for $N$ rounds, where each round is denoted by $t\in \{1,\dots, N\}$.
    We denote the arm chosen at round $t$ as $A_t$,
    and $T_i(t)$ as the number of times for arm $i$ was chosen at the end of round $t$, i.e.
    $
        T_i(t) := \sum_{s = 1}^t \mathbb{I}\{A_s = i\}.
    $
    At round $t$, the agent observes reward $X^{A_t}_{T_{A_t}(t)}$ sampled from $F_{A_t}$.
    
    The quality of an arm is determined by the $\tau$-quantile of its reward distribution.
    Arms with higher $\tau$-quantile values are better.
    We order the arms according to optimality as $o_1, \dots, o_K$ s.t. $Q^\tau_{o_1} \geq \dots \geq Q^\tau_{o_K}$.
    The optimal arm set of size $m$ is $\mathcal{S}_m^\ast = \{o_1, \dots, o_m\}$.
    Without loss of generality, we assume $ \mathcal{S}_m^\ast$ is unique.
    Following \citet{audibert2010best,bubeck2013multiple}, we formulate our objective by the probability of error.
    
    \begin{defi}[Probability of error/misidentification]
    \label{defi: probability of error}
    We denote $\mathcal{S}_m^N \subset \mathcal{K}$ as the set of $m$ arms returned by
    the policy at the end of the exploration phase. 
    Define the probability of error as
    \begin{align}
      e_N := \mathbb{P}\left(\mathcal{S}_m^N \neq \mathcal{S}_m^\ast\right).
    \end{align}
    \end{defi}
    
    The goal is, with a fixed budget of $N$ rounds, to design a policy which returns a set of arms of size $m \geq 1$
    so that probability of error $e_N$ is minimised.

    \begin{algorithm}[t]
      \caption{Q-SAR}
      \label{alg:Q-SAR}
      \begin{algorithmic}
          \STATE Let the active set $\mathcal{A}_1= \{1, ..., K\}$, the accepted set $\mathcal{M}_1 = \emptyset$, the number of arms left to find $l_1 = m$,
          \STATE $\overline{\log} (K) = \frac{1}{2} + \sum_{i = 2}^K \frac{1}{i}, n_0 = 0$, and for $p \in \{1, ..., K-1\},$
          $n_p = \left\lceil\frac{1}{\overline{\log }(K)} \frac{N-K}{K+1-p}\right\rceil.$
          \FOR{each phase $p = 1,2, ..., K-1$}
          \STATE (1) For each $i \in \mathcal{A}_p$, sample arm $i$ for $n_p - n_{p-1}$ times.
          \STATE (2) For each $i \in \mathcal{A}_p$, sort the empirical quantile of arm $i$ in a non-increasing order, denote the sorted arms as $a_{best}, a_2, ..., a_{l_p}, a_{l_p+1}, ..., a_{worst}$ (with ties broken arbitrarily).
          \STATE (3) Calculate $\widehat{\Delta}_{best} = \hat{Q}^\tau_{a_{best}, n_p} - \hat{Q}^\tau_{a_{l_p + 1}, n_p}$,
          $\widehat{\Delta}_{worst} =  \hat{Q}^\tau_{a_{l_p}, n_p} - \hat{Q}^\tau_{a_{worst}, n_p}$.
          \IF {$\widehat{\Delta}_{best} > \widehat{\Delta}_{worst}$}
          \STATE $\mathcal{A}_{p+1} = \mathcal{A}_{p} \backslash \{a_{best}\}$,
           $\mathcal{M}_{p + 1} = \mathcal{M}_{p} \cup \{a_{best}\}$,
           \STATE $l_{p+1} = l_p -1$.
          \ELSE
           \STATE $\mathcal{A}_{p+1} = \mathcal{A}_{p} \backslash \{a_{worst}\}$,
           $\mathcal{M}_{p + 1} = \mathcal{M}_{p},$
           \STATE $l_{p+1} = l_p$.
          \ENDIF
          \ENDFOR
          \STATE Return the $m$ arms $\mathcal{S}_m^N = \mathcal{A}_{K} \cup \mathcal{M}_{K}$.
      \end{algorithmic}
    \end{algorithm}


    We propose a Quantile-based Successive Accepts and Rejects (Q-SAR) algorithm (Algorithm \ref{alg:Q-SAR}), which adapts the \textit{Successive Accepts and Rejects} (SAR) algorithm \citep{bubeck2013multiple}.
    We divide the budget $N$ into $K-1$ phases.
    The number of samples drawn for each arm in each phase remains the same as in the \citet{bubeck2013multiple}.
    At each phase $p \in \{1, 2, \dots, K-1\}$, we maintain two sets (refer to Figure~\ref{fig: QSAR illustration.}):
    i) the active set $\mathcal{A}_p$, which contains all arms that are actively drawn in phase $p$;
    ii) the accepted set $\mathcal{M}_p$, which contains arms that have been accepted.
    In each phase $p$, an arm is removed from the active set, and it is either accepted or rejected.
    The accepted arm is added into the accepted set.
    At the end of phase $K-1$, only one arm remains in the active set.
    The last remaining arm together with and the accepted set (containing $m-1$ arms) form the returned recommendation $\mathcal{S}_m^N$.

    \begin{figure}[t]
      \centering
      \includegraphics[scale=0.22]{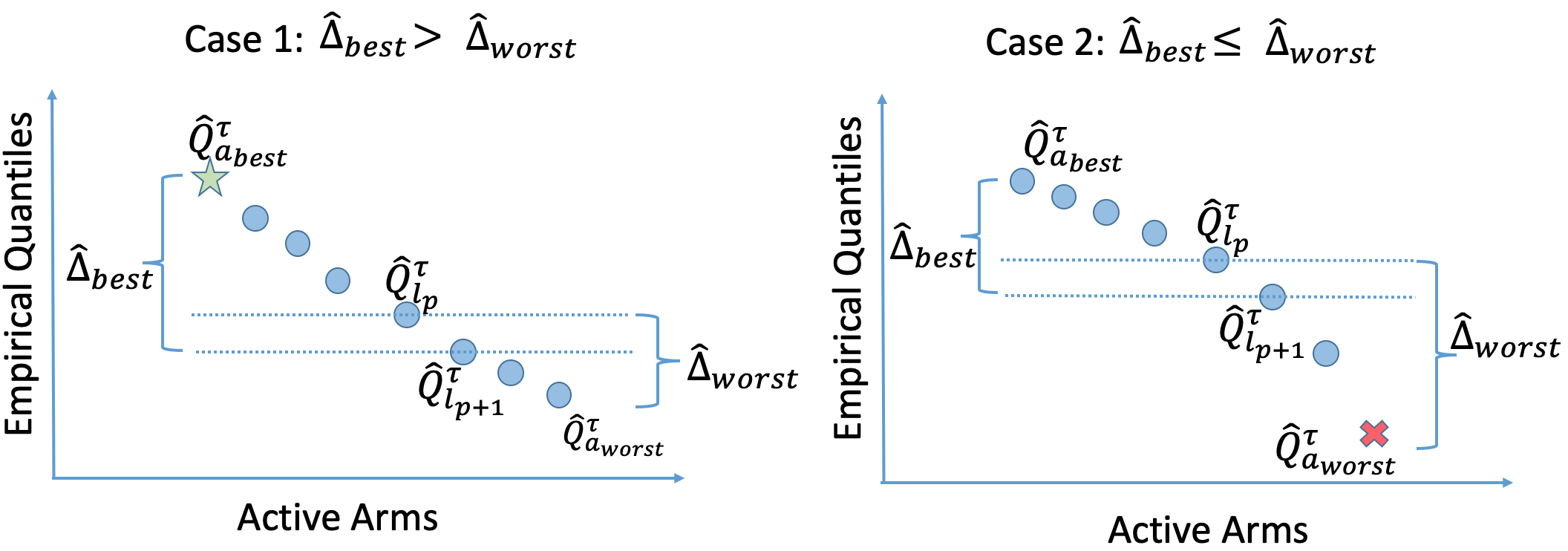}
      \caption{Phase $p$ of Q-SAR illustration. Star indicates accepted arm, cross indicates rejected arm.
        }
        \label{fig: QSAR illustration.}
    \end{figure}

    We can consider the task of identifying $m$ best arms as grouping arms into the optimal set $\mathcal{S}_m^\ast$ and non-optimal set.
    Then intuitively it is easier to firstly group arms which are farther away from the boundary of the two groups,
    since the estimation error is less likely to influence the grouping (Refer to Figure \ref{fig: QSAR illustration.}).
    Q-SAR follows this intuition and determines to accept or reject the arm which is the farthest (based on estimates) from the boundary in each phase.

    We introduce a simplified version of the SAR algorithm.
    Instead of considering all empirical gaps as SAR algorithm \citep{bubeck2013multiple} proposed,
    Q-SAR decides whether to accept or reject an arm by only comparing two empirical gaps:
    $\widehat{\Delta}_{best}$ and $\widehat{\Delta}_{worst}$ (defined in Algorithm \ref{alg:Q-SAR} step (3)).
    This simplification also applies when using the mean as a summary statistic, and results in an equivalent algorithm to the original SAR in \citep{bubeck2013multiple}.
    If $\widehat{\Delta}_{best} > \widehat{\Delta}_{worst}$, the arm with maximum empirical quantile is accepted;
    otherwise, the arm with minimum empirical quantile is rejected.

    When $m=1$, $\widehat{\Delta}_{worst} = \hat{Q}^\tau_{a_{best}} - \hat{Q}^\tau_{a_{worst}}$,
    and $\widehat{\Delta}_{best} = \hat{Q}^\tau_{a_{best}} - \hat{Q}^\tau_{a_{2}}$. For all phases,
    $\widehat{\Delta}_{worst} \geq \widehat{\Delta}_{best}$, thus Q-SAR will keep rejecting arms.
    In this case, Q-SAR is the same as Q-SR (Algorithm \ref{alg:Q-SR}, shown in Appendix \ref{sec: experiment details}). 

    \subsection{Theoretical Analysis}

    Recall optimality of arms are denoted as
    $o_1, \dots, o_K$ s.t. $Q^\tau_{o_1} \geq \dots \geq Q^\tau_{o_K}$.
    The optimal arm set of size $m$ is $\mathcal{S}_m^\ast = \{o_1, \dots, o_m\}$.
    For each arm $i \in \mathcal{K}$, we define the \textit{gap} $\Delta_i \geq 0$ by
    $$
    \Delta_{i} 
    := \left\{\begin{array}{ll}
      Q^\tau_{i}-Q^\tau_{o_{m+1}} & \text { if } i \in  \mathcal{S}_m^\ast; \\
    Q^\tau_{o_{m}}- Q^\tau_{i} & \text { if } i \notin  \mathcal{S}_m^\ast.
    \end{array}\right.
    $$
    We sort the gaps in a non-decreasing order and denote the $i^{th}$ gap as $\Delta_{(i)}$,
    i.e. $\Delta_{(1)} \leq \Delta_{(2)} \leq \dots \leq \Delta_{(K)}$.
    The gaps characterise how separate the arms are and reflect the hardness of the problem.
    The smaller the (minimum) gaps of the arms are, the harder the BAI task is.
    We define the \textit{problem complexity} $H^\tau$ as
    \begin{align}
      \label{equ: problem complexity H}
      H^\tau = \max_{\{i, j \in \mathcal{K}\}} \frac{8 j}{1-\tau}(\frac{4\alpha}{L_i^2  \Delta_{(j)}^2} + \frac{\beta_i}{L_i^2  \Delta_{(j)}}).
    \end{align}
    where $\alpha = \frac{4(1+\tau)}{1-\tau}, \beta_i = \frac{4}{3}(2 L_i + b_i(1-\tau)L_i^2)$,  with $L_i$ as the lower bound of hazard rate of arm $i$.



    To bound the probability of error under Q-SAR policy, it is convenient to re-express
    Theorem~\ref{theo: New Two-side Concentration inequality for quantiles.} such that
    the deviation between the empirical and true quantile is given by $\epsilon$.
    Observe that for Q-SAR, we are interested in events of small probability, that is
    for large values of $\gamma$ in Theorem~\ref{theo: New Two-side Concentration inequality for quantiles.}.
    In the corollary below, we focus on such events of small probability by considering
    $\gamma \geq 1$ (i.e. error less than $\frac{1}{e}\approx 0.37$), which allows
    a simpler expression.

    \begin{restatable}[Representation of Concentration inequalities for Quantiles]{coro}{RepConQ}
      \label{coro: Rep Con Q}
      For $\epsilon > 0$, $v^r, v^l, c^r, w_n$ stay the same as stated in Theorem \ref{theo: New Two-side Concentration inequality for quantiles.}.
      With $\gamma \geq 1$,
      Theorem \ref{theo: New Two-side Concentration inequality for quantiles.}
      can be represented as
      \begin{align}
          & \mathbb{P}\left(\hat{Q}^\tau_{n} - Q^\tau \geq \epsilon \right) \leq \exp\left(-\frac{{\epsilon }^2}{2 (v^r+ (c^r + w_n) \epsilon )}\right), \nonumber\\
          & \mathbb{P}\left( Q^\tau  - \hat{Q}^\tau_{n} \geq \epsilon \right) \leq \exp\left(- \frac{{\epsilon } ^ 2}{2 (v^l + w_n \epsilon)}\right). \nonumber
      \end{align}
    \end{restatable}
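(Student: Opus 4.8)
The plan is to turn the $\gamma$-parametrised tail inequalities of Theorem~\ref{theo: New Two-side Concentration inequality for quantiles.} into $\epsilon$-parametrised ones by inverting the map $\gamma\mapsto\sqrt{2v\gamma}+c\gamma$, after first using the hypothesis $\gamma\ge 1$ to fold the bias term $w_n$ into the linear coefficient. Both tails will then have the same ``sub-gamma shape'', so I only need to do the inversion once and apply it with two choices of constants.

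First I would handle the bias term. For the right tail, Theorem~\ref{theo: New Two-side Concentration inequality for quantiles.} gives $\mathbb{P}\big(\hat Q^\tau_n-Q^\tau\ge \sqrt{2v^r\gamma}+c^r\gamma+w_n\big)\le e^{-\gamma}$. Since $w_n=b/n\ge 0$ and $\gamma\ge1$, we have $w_n\le w_n\gamma$, so the deviation threshold is at most $\sqrt{2v^r\gamma}+(c^r+w_n)\gamma$; monotonicity of the tail in the threshold then yields $\mathbb{P}\big(\hat Q^\tau_n-Q^\tau\ge \sqrt{2v^r\gamma}+(c^r+w_n)\gamma\big)\le e^{-\gamma}$ for all $\gamma\ge1$. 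The identical step converts the (sub-Gaussian) left tail $\mathbb{P}\big(Q^\tau-\hat Q^\tau_n\ge\sqrt{2v^l\gamma}+w_n\big)\le e^{-\gamma}$ into $\mathbb{P}\big(Q^\tau-\hat Q^\tau_n\ge\sqrt{2v^l\gamma}+w_n\gamma\big)\le e^{-\gamma}$. Thus both thresholds are now of the form $g_{v,c}(\gamma):=\sqrt{2v\gamma}+c\gamma$ with $(v,c)=(v^r,\,c^r+w_n)$ and $(v,c)=(v^l,\,w_n)$ respectively.

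Next I would invert $g_{v,c}$. Given $\epsilon>0$, take $\gamma:=\dfrac{\epsilon^2}{2(v+c\epsilon)}$ — the value the corollary asserts, and the one for which the standing hypothesis $\gamma\ge1$ is exactly the condition needed for the previous step. Setting $u:=\dfrac{v}{v+c\epsilon}\in(0,1]$, a short computation gives $\sqrt{2v\gamma}=\epsilon\sqrt{u}$ and $c\gamma=\tfrac{\epsilon}{2}(1-u)$, so $g_{v,c}(\gamma)=\epsilon\big(\sqrt{u}+\tfrac{1-u}{2}\big)$, and the verification that this is $\le\epsilon$ reduces to the elementary inequality $\sqrt{u}\le\tfrac{1+u}{2}$ (AM--GM applied to $u$ and $1$). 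Hence $g_{v,c}(\gamma)\le\epsilon$.

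Finally I would combine the two: with $\gamma=\dfrac{\epsilon^2}{2(v^r+(c^r+w_n)\epsilon)}$ we have $\{\hat Q^\tau_n-Q^\tau\ge\epsilon\}\subseteq\{\hat Q^\tau_n-Q^\tau\ge g_{v^r,\,c^r+w_n}(\gamma)\}$, so $\mathbb{P}\big(\hat Q^\tau_n-Q^\tau\ge\epsilon\big)\le e^{-\gamma}=\exp\!\big(-\tfrac{\epsilon^2}{2(v^r+(c^r+w_n)\epsilon)}\big)$, and the left-tail bound follows verbatim with $(v^l,w_n)$ replacing $(v^r,c^r+w_n)$. I do not expect a genuine obstacle here: the only non-bookkeeping ingredient is the scalar bound $\sqrt{u}\le(1+u)/2$, and the single point requiring care is that the substitution $\gamma=\epsilon^2/\big(2(v+c\epsilon)\big)$ is only licensed when this $\gamma$ is at least $1$ — which is precisely the regime ($e_N\le 1/e$) the corollary restricts to, and the reason it is stated that way.
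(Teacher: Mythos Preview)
Your proposal is correct and follows essentially the same two-step structure as the paper: first use $\gamma\ge 1$ to absorb the bias $w_n$ into the linear coefficient (exactly as the paper does), then invert the sub-gamma threshold $\gamma\mapsto\sqrt{2v\gamma}+c\gamma$ via an elementary scalar inequality. The only cosmetic difference is in the inversion: the paper routes through the Cram\'er transform and the inequality $h_1(u)=1+u-\sqrt{1+2u}\ge u^2/(2(1+u))$, whereas you plug in the candidate $\gamma=\epsilon^2/(2(v+c\epsilon))$ directly and reduce to $\sqrt{u}\le(1+u)/2$; these are equivalent, and your version is arguably a bit more self-contained since it works purely from the tail bound without re-invoking Chernoff.
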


    Applying Corollary \ref{coro: Rep Con Q},
    we show an upper bound of the probability of error in Theorem \ref{theo: QSAR}.
    Note we assume the total budget is at least $\frac{4}{1-\tau} \overline{\log}(K) + K$,
    which guarantees that after the initial round each arm has enough samples (See Lemma \ref{lemma: quantile concentration, dependency on n} in Appendix for details).
    We also present an error bound without assuming the lower bound of the budget in Theorem \ref{theo: QSAR variant, with constant term}, based on concentration results in Lemma \ref{lemma: quantile concentration, dependency on n, no lower bound on n assumption}.

    \begin{restatable}[Q-SAR Probability of Error Upper Bound]{theo}{QSAR}
      \label{theo: QSAR}
      For the problem of identifying $m$ best arms out of $K$ arms, with budget $N \geq \frac{4}{1-\tau} \overline{\log}(K) + K$, the probability of error (Definition \ref{defi: probability of error}) for Q-SAR satisfies
      \begin{align}
        e_N \leq 2 K^2 \exp\left(- \frac{N-K}{\overline{\log}(K) H^\tau}\right), \nonumber
      \end{align}
      where problem complexity $H^\tau$ is defined in Eq. (\ref{equ: problem complexity H}).
      \end{restatable}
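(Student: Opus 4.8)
The plan is to follow the standard SAR-style analysis of \citet{bubeck2013multiple}, but replacing the mean concentration with our quantile concentration (Corollary~\ref{coro: Rep Con Q}). First I would fix a phase $p$ and, assuming no error has occurred in phases $1,\dots,p-1$, argue that the active set $\mathcal{A}_p$ still contains all of $\mathcal{S}_m^\ast$ not yet accepted together with the correct number of non-optimal arms. Under this inductive hypothesis, the only way phase $p$ can make a mistake (accept a non-optimal arm or reject an optimal one) is if some arm $i$ in $\mathcal{A}_p$ has its empirical $\tau$-quantile $\hat Q^\tau_{i,n_p}$ deviate from its true quantile $Q^\tau_i$ by more than roughly half the relevant gap. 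More precisely, I would show that a phase-$p$ error forces a deviation event of the form $|\hat Q^\tau_{i,n_p}-Q^\tau_i|\geq \Delta_{(K+1-p)}/2$ for at least one arm $i\in\mathcal{A}_p$, using the comparison of $\widehat\Delta_{best}$ and $\widehat\Delta_{worst}$ exactly as in the SAR argument (the key combinatorial fact being that after sorting, the gap of the arm the algorithm acts on is at least $\Delta_{(K+1-p)}$, since $|\mathcal{A}_p|=K+1-p$).

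Next I would bound the probability of such a deviation event using Corollary~\ref{coro: Rep Con Q} with $\epsilon = \Delta_{(K+1-p)}/2$ and the number of samples $n_p = \lceil \frac{1}{\overline{\log}(K)}\frac{N-K}{K+1-p}\rceil \geq \frac{1}{\overline{\log}(K)}\frac{N-K}{K+1-p}$. Plugging $v^r = \frac{2}{kL_i^2}$, $v^l = \frac{2(n_p-k+1)}{(k-1)^2L_i^2}$, $c^r=\frac{2}{kL_i}$, $w_{n_p}=\frac{b_i}{n_p}$ with $k=\lfloor n_p(1-\tau)\rfloor$, I would simplify the exponents. The quantities $v^r$, $v^l$ behave like $\frac{C}{n_p(1-\tau)L_i^2}$ and $c^r+w_{n_p}$ like $\frac{C'}{n_p(1-\tau)L_i}+\frac{b_i}{n_p}$ (this is where the budget lower bound $N\geq \frac{4}{1-\tau}\overline{\log}(K)+K$ is used, to guarantee $k\geq 1$ and to control the $\lfloor\cdot\rfloor$ losses — cf.\ Lemma~\ref{lemma: quantile concentration, dependency on n}). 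The constants $\alpha$ and $\beta_i$ in the definition of $H^\tau$ are precisely chosen so that, after these substitutions, the exponent $\frac{\epsilon^2}{2(v^r+(c^r+w_{n_p})\epsilon)}$ is at least $\frac{n_p}{H^\tau_{ij}}$ for the relevant pairwise complexity term, and therefore at least $\frac{N-K}{\overline{\log}(K)(K+1-p)}\cdot\frac{K+1-p}{\ \cdot\ }$, collapsing to $\frac{N-K}{\overline{\log}(K)H^\tau}$ after maximising over arms via the definition $H^\tau=\max_{i,j}\frac{8j}{1-\tau}(\frac{4\alpha}{L_i^2\Delta_{(j)}^2}+\frac{\beta_i}{L_i^2\Delta_{(j)}})$ with $j=K+1-p$.

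Finally I would union-bound: over the $K-1$ phases, within each phase over the at most $K+1-p\le K$ active arms, and over the two tail directions, giving a factor at most $2K^2$; combined with the per-event bound $\exp(-\frac{N-K}{\overline{\log}(K)H^\tau})$ this yields $e_N\le 2K^2\exp(-\frac{N-K}{\overline{\log}(K)H^\tau})$. The main obstacle I anticipate is the bookkeeping in the second step: verifying that the messy, arm-dependent constants $v^r,v^l,c^r,w_n$ really do collapse into the single complexity $H^\tau$ uniformly over all phases requires carefully tracking how $k=\lfloor n_p(1-\tau)\rfloor$ relates to $n_p(1-\tau)$, handling the right-tail (sub-gamma) and left-tail (sub-Gaussian) terms separately, and confirming the factor $\frac{8j}{1-\tau}$ and the definitions of $\alpha,\beta_i$ absorb all rounding and cross-term slack; the combinatorial SAR argument in the first step is comparatively routine once the quantile gaps are substituted for mean gaps.
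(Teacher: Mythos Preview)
Your overall strategy matches the paper's proof: define a good deviation event, show by induction over phases that Q-SAR never errs on this event, and bound the complementary event by a union bound plus the quantile concentration (the paper packages your ``simplify the exponents'' step as Lemma~\ref{lemma: quantile concentration, dependency on n}, and uses the budget assumption $N\geq\frac{4}{1-\tau}\overline{\log}(K)+K$ exactly as you anticipate, to control the $\lfloor n_p(1-\tau)\rfloor$ rounding).

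There is one concrete point where the argument as written would fail: the deviation threshold must be $\tfrac{1}{4}\Delta_{(K+1-p)}$, not $\tfrac{1}{2}\Delta_{(K+1-p)}$. The SAR accept/reject rule compares $\widehat\Delta_{best}$ and $\widehat\Delta_{worst}$, each of which is a difference of two empirical quantiles, so passing to population quantities introduces four $\pm\delta$ corrections. With $\delta=\tfrac{1}{2}\Delta_{(K+1-p)}$ the chain of inequalities only yields $2\Delta_{(K+1-p)}>\max\{Q^\tau_{\ell_1}-Q^\tau_{o_{m+1}},\,Q^\tau_{o_m}-Q^\tau_{\ell_{K+1-p}}\}$, which is not the contradiction needed against the pigeonhole fact that at most $p-1$ gaps exceed $\Delta_{(K+1-p)}$. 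With $\delta=\tfrac{1}{4}\Delta_{(K+1-p)}$ the contradiction goes through, and plugging $\epsilon=\tfrac{1}{4}\Delta_{(K+1-p)}$ into the concentration bound is precisely what generates the factors $\tfrac{8j}{1-\tau}$ and $4\alpha$ in the definition of $H^\tau$; with $\epsilon=\Delta/2$ those constants would not line up. Once you change $1/2$ to $1/4$, the rest of your plan is exactly the paper's proof.
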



    Observe the error bound depends on the problem complexity and has rate $\mathcal{O}(K^2 \exp{(\frac{-N + K}{\log K})})$ w.r.t the number of arms $K$ and budget $N$.
    The smaller $H^\tau$ is, the smaller the upper bound of error probability is.
    In the following, we show a sketch of the proof.
    The detailed proof is provided in Appendix \ref{sec: Q-SAR proof}.

    \begin{proof-sketch}
    Define the event $\xi$,
    \begin{align}
    \label{equ: event xi}
      \xi :=& \{\forall i \in\{1, \ldots, K\}, p \in\{1, \ldots, K-1\}, \nonumber\\
      & \left| \hat{Q}^\tau_{i, n_p} - Q^\tau_i \right| < \frac{1}{4} \Delta_{(K+1-p)}\},
    \end{align}
    where $n_p$ is the number of samples at phase $p$ for arm $i$.
    One can upper bound $\mathbb{P}(\bar{\xi})$, i.e. the probability that complementary event of $\xi$ happens,
    by the union bound and our concentration results (Corollary \ref{coro: Rep Con Q}).
    Then it suffices to show Q-SAR does not make any error on event $\xi$, 
    which implies
    (I) no arm from the optimal set is rejected and (II) no arm from non-optimal set is accepted.

    To show Q-SAR does not make any error on event $\xi$, 
    we prove by induction on $p \geq 1$.
    In phase $p$, there are $K + 1 - p$ arms inside of the active set $\mathcal{A}_p$,
    we sort the arms inside of $\mathcal{A}_p$ and denote them as $\ell_1, \ell_2, \dots, \ell_{K+1-p}$
    such that $Q^\tau_{\ell_1} \geq Q^\tau_{\ell_2} \geq \dots \geq Q^\tau_{\ell_{K+1-p}}$.
    We assume there is no wrong decision made in all previous $p-1$ phases and prove by contradiction.
    We assume that one arm from non-optimal set is accepted,
    or one arm from the optimal set is rejected in phase $p$.
    These two assumptions give us
    \begin{align}
      \Delta_{(K+1-p)} > \max \left\{Q^\tau_{\ell_1} - Q^\tau_{o_{m+1}}, Q^\tau_{o_m} - Q^\tau_{\ell_{K+1-p}}\right\}, \nonumber
    \end{align}
    which contradicts with the fact that $\Delta_{(K+1-p)} \leq \max \{Q^\tau_{\ell_1} - Q^\tau_{o_{m+1}}, Q^\tau_{o_{m}} - Q^\tau_{\ell_{K+1-p}}\}$,
    since there are only $p-1$ arms that have been accepted or rejected at phase $p$.
    This concludes the proof.
    \end{proof-sketch}


    \textbf{Comparison to related work:}
    There are two problems in the standard MAB, namely the \textit{regret minimisation} problems \cite{Auer2002} and the \textit{Best Arm Identification} (BAI) problems \cite{audibert2010best}.
    The goal of regret minimisation problems in bandit setting \cite{Auer2002} is to maximise the cumulative rewards, i.e. minimise the cumulative regret. 
    Best arm identification has been studied for fixed budget \citep{audibert2010best}
    and fixed confidence \citep{even2006action} settings.
    The difference between the two settings is how the exploration phase is terminated (when the budget runs out
    or when the quality of recommendations is at a fixed confidence level).
    We focus on the fixed budget setting for this work.
    For a comprehensive review of bandits we refer to \citet{lattimore2018bandit}.\\
    Previous quantile related BAI work \cite{szorenyi2015qualitative,david_pure_2016,yu2013sample,howard_sequential_2019}  mostly focused on another setting of BAI, the fixed confidence setting.
    Literature concerning quantile bandits with the fixed budget is scarce.
    The most related work is \citet{tran-thanh_functional_2014}, which studied functional bandits, with quantiles as one example.
    They proposed the quantile based batch elimination algorithm.
    Since their concentration inequality is based on Chebyshev's inequality (and hence not exponential),
    the upper bound on the probability of error has a rate $\mathcal{O}(K^2/N)$, which is slower than ours.
    \citet{torossian_x-armed_2019} considered the fixed budget setting but focused on quantile optimization on stochastic black-box functions,
    which is different from our setting.
    As far as we know, Q-SAR is the first policy designed to identify multiple arms with highest $\tau$-quantile values.\\
    The upper bound of error probability of Q-SAR and SAR~\citep{bubeck2013multiple} have the same rate
    (in terms of the budget $N$ and number of arms $K$) up to constant factors.
    Our constant term is smaller when the minimum lower bound of hazard rate takes a larger value.
    Unlike the mean-based algorithm, $H^\tau$ depends on the quantile level $\tau$ as well.
    The smaller $\tau$ is, the smaller the $H^\tau$ is. This can be intuitively explained as needing more samples to estimate higher level quantiles for IHR distributions. \\



\section{Experiments}
\label{sec: experiments}

In this section, we illustrate how the proposed Q-SAR algorithm works on a toy example
(Section~\ref{sec:toy-example}) and demonstrate the empirical performance
on a vaccine simulation (Section~\ref{sec:vaccine-simulation})\footnote{\small \url{https://github.com/Mengyanz/QSAR}}.

\subsection{Illustrative Example}
\label{sec:toy-example}
We set up simulated environments by constructing three arms with absolute Gaussian distribution or exponential distribution.
The summary statistics of reward distributions are shown in Table \ref{table: summary statistics for simulated rewards}.
We expand the size of environments by replicating arms.
Details about the experimental setting are shown in Appendix \ref{sec: experiment details}.


We design two environments (sets of reward distributions) to show how one can benefit from considering quantiles
as summary statistics with our algorithm.
The design of the two environments reflects the two motivations in Section \ref{sec: Introduction}.
Let $K$ be the total number of arms and $m$ be the number of arms to recommend.
For each environment, we choose $m=1$ (single best arm identification) and $m = 5$.
We evaluate the probability of errors defined in Eq. (\ref{defi: probability of error}).
As a comparison, we introduce a Quantile-based Successive Rejects (Q-SR) algorithm in Algorithm \ref{alg:Q-SR} (Appendix \ref{sec: Q-SR appendix}),
which is adapted from Successive Rejects algorithm \citep{audibert2010best} and we modify it to recommend multiple arms.

\begin{table}
  \centering
  \resizebox{0.4\textwidth}{!}{
  \begin{tabular}{|c|c|c|c|}
    & Mean   & 0.5-Quantile & 0.8-Quantile\\
  A     & 1.60   & 1.35 & 2.55\\
  B     & 3.60   & \bf{3.50}  & 5.21\\
  C     & \bf{4.00}  & 2.76  &  \bf{6.42}\\
  Opt Arm & C      & B   &  C\\
  Min Gap & 0.40     & 0.74   &  1.21
  \end{tabular}
  }
  \caption{Summary Statistics of Reward Distributions}
  \label{table: summary statistics for simulated rewards}
  \vspace{-0.5cm}
\end{table}



\begin{figure}[t!]
    \centering
    \includegraphics[scale=0.34]{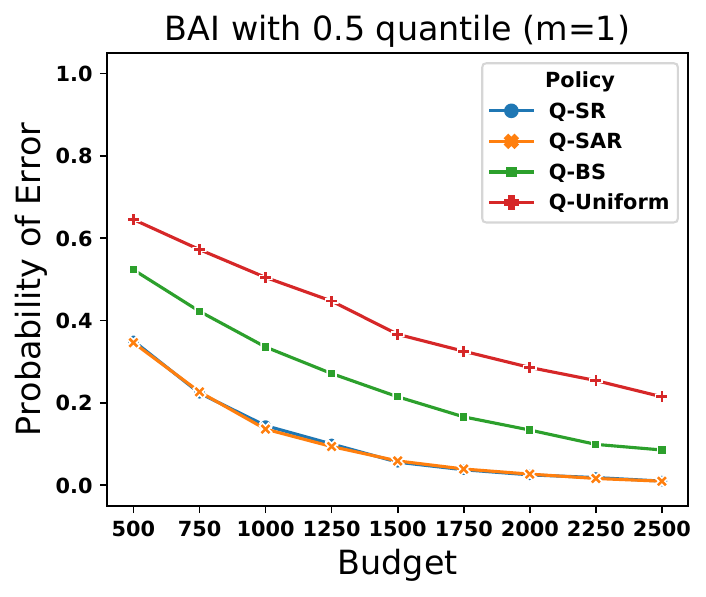}
    \includegraphics[scale=0.34]{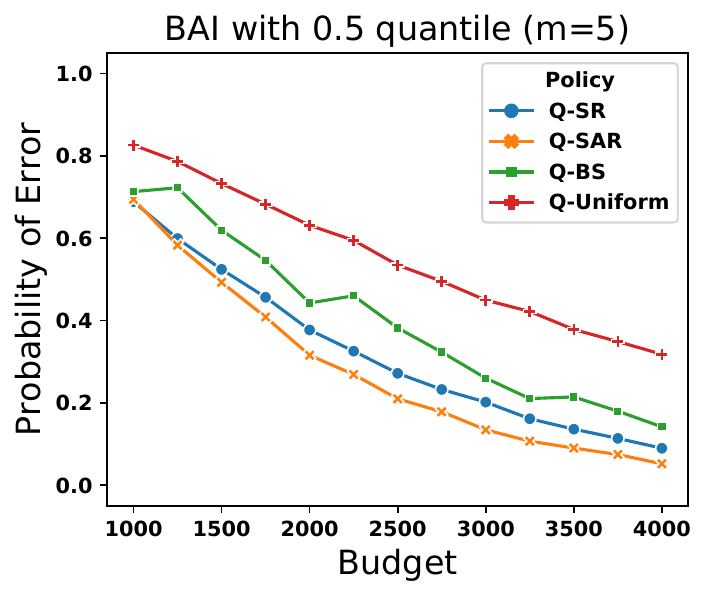}
    \includegraphics[scale=0.34]{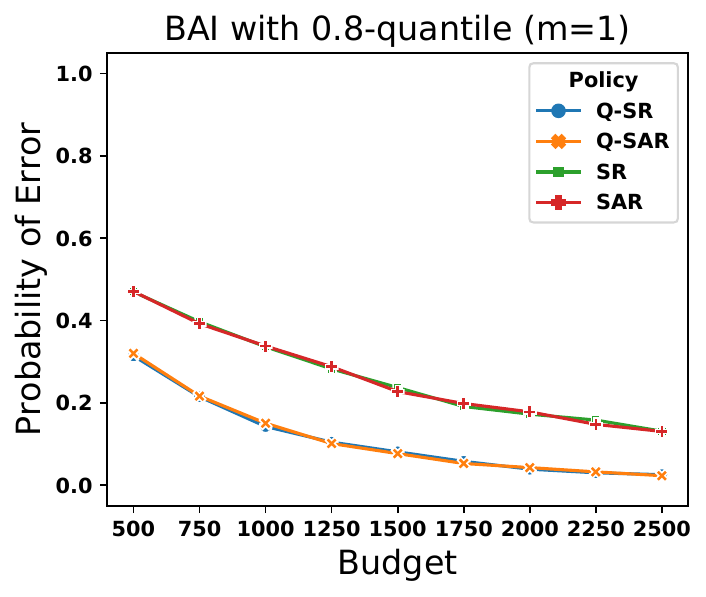}
    \includegraphics[scale=0.34]{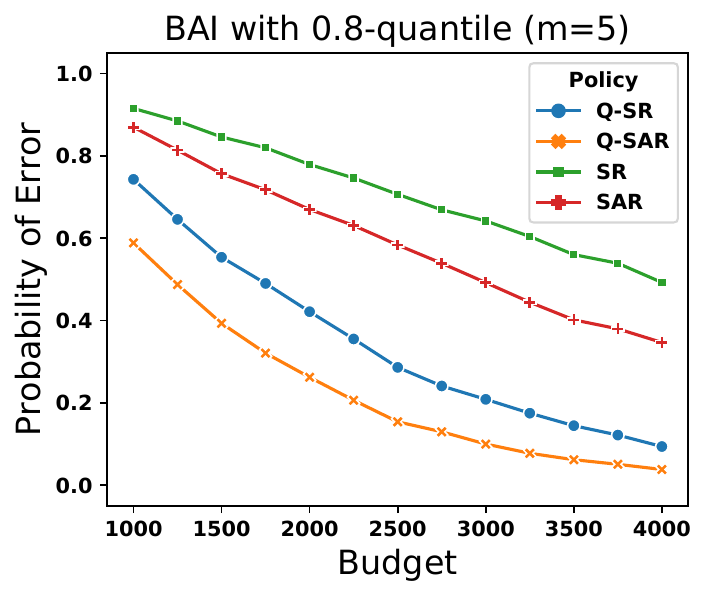}
    \caption{Illustrative examples.
    The first row shows the Environment I simulation where arms are evaluated by 0.5-quantiles.
    The second row shows the Environment II simulation where arms are evaluated by 0.8-quantiles (mean provides the same order of arms).
    We consider the task of recommending a single arm (left column) 
    and recommending multiple arms ($m=5$, right column). 
    The performance is evaluated in terms of the probability of error (with 5000 independent runs).
    }
    \label{fig: Simulated experiment}
    \vspace{-0.5cm}
  \end{figure}
  \begin{figure*}[t!]
    \centering
    \includegraphics[scale=0.35]{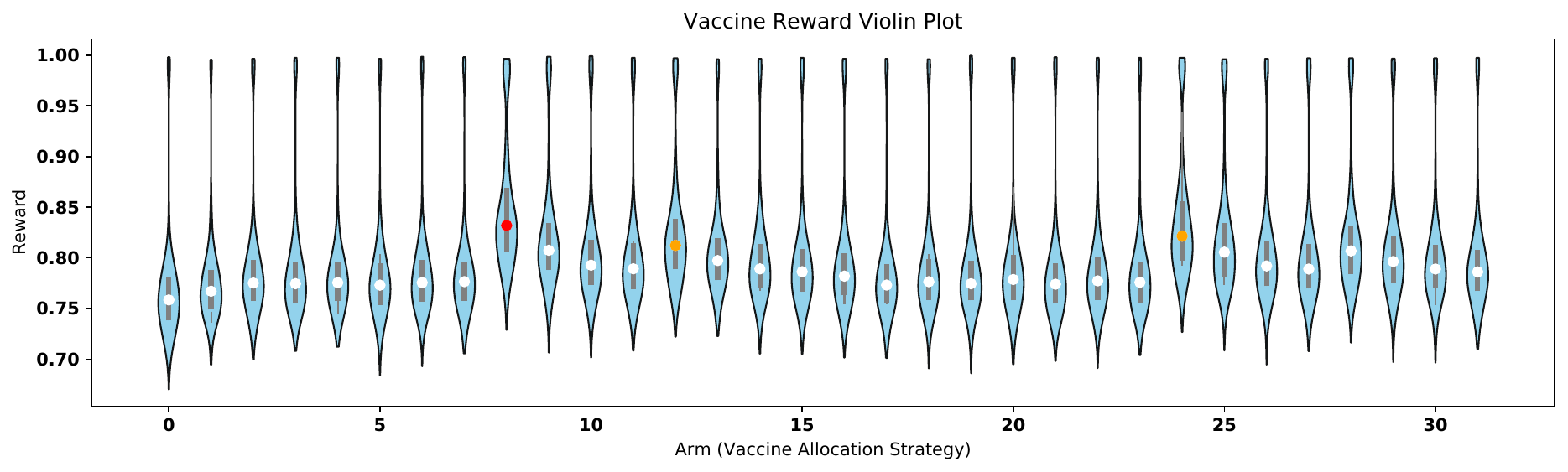}
    \caption{Vaccine Reward Violin Plot. The X-axis represents arms (vaccine allocation strategies), Y-axis represents rewards,
    which is the proportion of individuals that did not experience symptomatic infection. The circle in each violin
    represents the median, where the red one is the highest and the orange ones are the second and the third highest.
    The black line in each violin shows the range of 0.25-quantile to 0.75-quantile.
    }
    \label{fig: Vaccine Reward Violin Plot.}
  \end{figure*}
\textbf{Environment I:}
We consider $K = 20 + m$ arms with 15 A arms, $m$ B (optimal) arms, and 5 C arms.
The goal is to identify $m$ arms with largest $0.5$-quantile (i.e. median).
We compare our algorithms with the quantile-based baseline algorithms:
\begin{enumerate*}[mode=unboxed,topsep=0pt,itemsep=-1ex,partopsep=1ex,parsep=1ex,label = \textbf{\textcolor{blue}{(\roman*)}}]
\item Quantile uniform sampling (Q-Uniform), where each arm is sampled uniformly and we select the arm with the maximum 0.5-quantile;
\item Quantile Batch Elimination (Q-BE) proposed in \citet{tran-thanh_functional_2014} 
\item Quantile-based Successive Rejects (Q-SR). 
\end{enumerate*}

\textbf{Environment II:}
We consider $K = 20 + m$ arms with 15 A arms, 5 B arms, and $m$ C (optimal) arms.
The goal is to identify $m$ arms with the maximum $0.8$-quantile.
Both mean and 0.8-quantile provide the same order of arms, while
0.8-quantile can provide a larger gap compared with the mean.
According to Theorem \ref{theo: QSAR}, the environment with a larger minimum gap has a smaller probability complexity
and thus smaller upper bound of the probability of error (it holds for the mean-based algorithm \citep{bubeck2013multiple} as well).
We compare our algorithms with the baseline algorithms:
\begin{enumerate*}[mode=unboxed,topsep=0pt,itemsep=-1ex,partopsep=1ex,parsep=1ex,label = \textbf{\textcolor{blue}{(\roman*)}}]
\item mean-based Successive Accepts and Rejects (SAR) \citep{bubeck2013multiple}.
\item mean-based Successive Rejects (SR) \citep{audibert2010best}.
\item Quantile-based Successive Rejects (Q-SR). 
\end{enumerate*}

\textbf{Results:} We show the empirical probability of error as a function of budget
in Figure \ref{fig: Simulated experiment}.
Q-SAR has the best performance under all settings.
Q-SAR and Q-SR has the same performance for the single best arm identification ($m=1$) task in both environments,
while Q-SAR outperforms Q-SR for multiple identifications ($m=5$).
For Environment II,
Q-SAR and Q-SR outperform SAR and SR since the gap between the optimal and suboptimal arms
is bigger when evaluating arms by 0.8-quantiles than by means,
and Q-SAR has a clear lower probability of error than Q-SR when the sample size is small.




\subsection{Vaccine Simulation}
\label{sec:vaccine-simulation}

  We consider the problem of identifying optimal strategies for allocating an influenza vaccine.
  Following~\citet{LibinBAI:2017}, we format this problem as an instance of the BAI where each vaccine allocation strategy is an arm.
  Details of allocation strategies are available in the Appendix \ref{sec: vaccine simulation appendix}.
  The reward of a strategy is defined as the proportion of individuals that did not experience symptomatic infection.
  We generate 1000 rewards for each strategy by simulating the epidemic for 180 days using FluTE~\footnote{\url{https://github.com/dlchao/FluTE}}
  (with basic reproduction number $R_0 = 1.3$).
  The violin plot of reward samples is shown in Figure~\ref{fig: Vaccine Reward Violin Plot.}.
  The empirical reward distribution of arms are IHR, but with outliers close to 1 (which violate IHR).
  These outliers are due to the fact that the pathogen does not result in an epidemic
  in some simulation runs, which does not reflect the efficacy of the vaccine.

  We use the median ($\tau=0.5$) as a robust summary statistic for each strategy.
  We apply our Q-SAR algorithm on two tasks. 
  (1) the task of identifying $m=1$ best arm (index 8) with a fixed budget ranging from 500 to 2,500, and
  (2) the task of identifying $m=3$ best arms (index 8, 24 and 12) with a fixed budget ranging from 1,000 to 4,000.
  The performance for the BAI task is shown in Figure \ref{fig: Vaccine Simulated experiment}.
  We compare our algorithm other quantile-based algorithms only,
  since $0.5-$quantile and means results in different optimal arms.
  The empirical evidence shows Q-SAR is the best for multiple best arms identification
  (when $m=3$) and is robust to outliers.
  We leave the theoretical analysis for the outlier robustness of our approach as future work.

  \begin{figure}[t]
    \centering
    \includegraphics[scale=0.3]{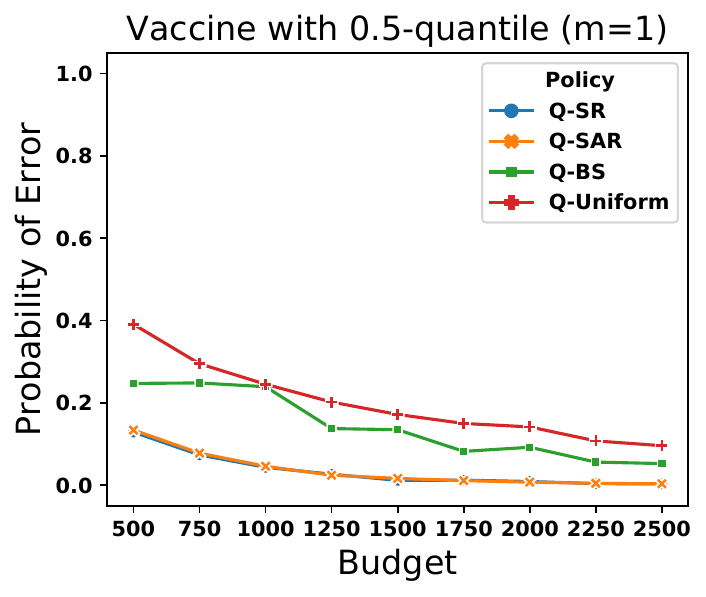}
    \qquad
    \includegraphics[scale=0.3]{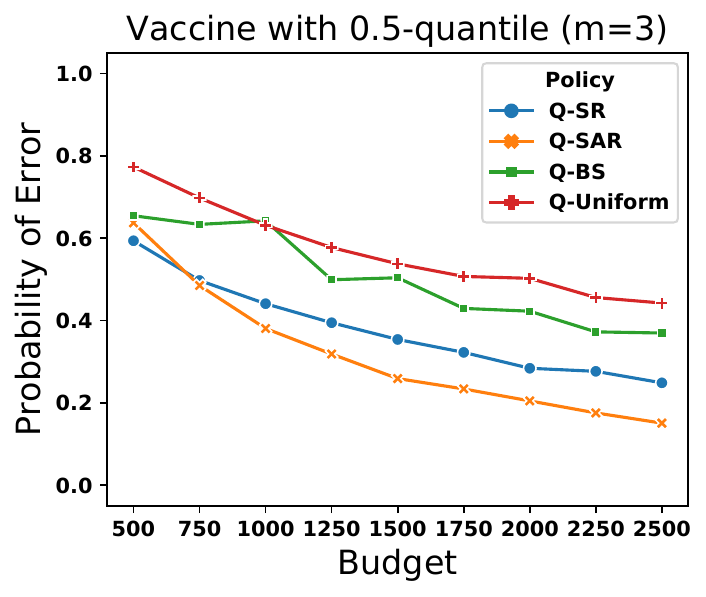}
    \caption{Vaccine BAI Experiments (with 5000 independent runs).
    }
    \label{fig: Vaccine Simulated experiment}
    \vspace{-0.5cm}
  \end{figure}

    \section{Conclusion and Discussion}

    Building on \citet{boucheron2012}, we prove a new concentration inequality of order statistics w.r.t its expectation,
    with which
    we prove a new concentration inequality for quantile estimation w.r.t. the population quantile.
    The new concentration inequalities are two-sided, and work for all distributions with non-decreasing hazard rate (IHR).
    Our assumption of positive (and unbounded) rewards results in asymmetric left and right tail bounds.
    The concentration inequalities for both order statistics and quantiles have convergence rate $\mathcal{O}(\sqrt{\frac{1}{n}})$.
    A larger value of $L$, the lower bound of the hazard rate, results in faster convergence.
    The proposed inequalities may be of independent interest.

    In this paper, we consider the $m$ best arms identification problem with fixed budget.
    Motivated by risk-averse decision-making, the optimal arm set is determined by the $\tau$-quantiles of reward distributions instead of the mean.
    The quantile level $\tau$ provides an additional level of flexibility for modelling, depending on the risk preference.
    We proposed the quantile-based successive accepts and rejects (Q-SAR),
    the first quantile based bandit algorithm for the fixed budget setting.
    We apply our concentration inequality to prove an upper bound on error probability, which is characterised by the problem complexity.
    Empirical results show Q-SAR outperforms baseline algorithms for identifying multiple arms.
    One extension of this work is to allow different sample sizes in Q-SAR, that takes different quantile levels or lower bounds of hazard rate into consideration. 
    Another future work is to derive the matching lower bound of error probability. 
    We hope that this work opens the door towards new bandit approaches for other summary statistics.


    \subsection*{Acknowledgements}
    The authors would like to thank S\'{e}bastien Bubeck for constructive suggestions about valid assumptions,
    Dawei Chen for generating the vaccine data,
    and Russell Tsuchida and Michael Yang for helpful comments.


\bibliography{icml2021_quantile_bandits}

\begin{thebibliography}{28}
\providecommand{\natexlab}[1]{#1}
\providecommand{\url}[1]{\texttt{#1}}
\expandafter\ifx\csname urlstyle\endcsname\relax
  \providecommand{\doi}[1]{doi: #1}\else
  \providecommand{\doi}{doi: \begingroup \urlstyle{rm}\Url}\fi

\bibitem[Altschuler et~al.(2019)Altschuler, Brunel, and
  Malek]{altschuler2019best}
Altschuler, J., Brunel, V.-E., and Malek, A.
\newblock Best arm identification for contaminated bandits.
\newblock \emph{Journal of Machine Learning Research}, 20\penalty0
  (91):\penalty0 1--39, 2019.

\bibitem[Audibert et~al.(2010)Audibert, Bubeck, and Munos]{audibert2010best}
Audibert, J., Bubeck, S., and Munos, R.
\newblock Best arm identification in multi-armed bandits.
\newblock In \emph{{COLT} 2010 - The 23rd Conference on Learning Theory, Haifa,
  Israel, June 27-29, 2010}, pp.\  41--53. Omnipress, 2010.

\bibitem[Auer et~al.(2002)Auer, Cesa-Bianchi, and Fischer]{Auer2002}
Auer, P., Cesa-Bianchi, N., and Fischer, P.
\newblock Finite-time analysis of the multiarmed bandit problem.
\newblock \emph{Machine Learning}, 47\penalty0 (2):\penalty0 235--256, 2002.

\bibitem[Bagnoli \& Bergstrom(2005)Bagnoli and
  Bergstrom]{bagnoli_log-concave_2005}
Bagnoli, M. and Bergstrom, T.
\newblock Log-concave probability and its applications.
\newblock \emph{Economic Theory}, 26\penalty0 (2):\penalty0 445--469, 2005.
\newblock ISSN 0938-2259, 1432-0479.

\bibitem[Bernstein(1924)]{bernstein1924modification}
Bernstein, S.
\newblock On a modification of chebyshev’s inequality and of the error
  formula of laplace.
\newblock \emph{Ann. Sci. Inst. Sav. Ukraine, Sect. Math}, 1\penalty0
  (4):\penalty0 38--49, 1924.

\bibitem[Boucheron \& Thomas(2012)Boucheron and Thomas]{boucheron2012}
Boucheron, S. and Thomas, M.
\newblock Concentration inequalities for order statistics.
\newblock \emph{Electron. Commun. Probab.}, 17:\penalty0 12 pp., 2012.

\bibitem[Boucheron et~al.(2013)Boucheron, Lugosi, and Massart]{boucheron2013}
Boucheron, S., Lugosi, G., and Massart, P.
\newblock \emph{Concentration inequalities: A nonasymptotic theory of
  independence}.
\newblock Oxford university press, 2013.

\bibitem[Bubeck et~al.(2013)Bubeck, Wang, and Viswanathan]{bubeck2013multiple}
Bubeck, S., Wang, T., and Viswanathan, N.
\newblock Multiple identifications in multi-armed bandits.
\newblock In \emph{Proceedings of the 30th International Conference on Machine
  Learning, {ICML} 2013, Atlanta, GA, USA, 16-21 June 2013}, volume~28 of
  \emph{{JMLR} Workshop and Conference Proceedings}, pp.\  258--265. JMLR.org,
  2013.

\bibitem[Cassel et~al.(2018)Cassel, Mannor, and Zeevi]{cassel_general_2018}
Cassel, A., Mannor, S., and Zeevi, A.
\newblock A general approach to multi-armed bandits under risk criteria.
\newblock In \emph{Conference On Learning Theory, {COLT} 2018, Stockholm,
  Sweden, 6-9 July 2018}, volume~75 of \emph{Proceedings of Machine Learning
  Research}, pp.\  1295--1306. {PMLR}, 2018.

\bibitem[Cunningham et~al.(2016)Cunningham, Garçon, Leo, Friedland, Strugnell,
  Laupèze, Doherty, and Stern]{cunningham_vaccine_2016}
Cunningham, A.~L., Garçon, N., Leo, O., Friedland, L.~R., Strugnell, R.,
  Laupèze, B., Doherty, M., and Stern, P.
\newblock Vaccine development: {From} concept to early clinical testing.
\newblock \emph{Vaccine}, 34\penalty0 (52):\penalty0 6655--6664, 2016.
\newblock ISSN 0264-410X.
\newblock \doi{10.1016/j.vaccine.2016.10.016}.

\bibitem[David(2003)]{david_order_1981}
David, H.~A.
\newblock \emph{Order statistics.}
\newblock J. Wiley, 2003.
\newblock ISBN 978-0-471-02723-2.
\newblock OCLC: 301102741.

\bibitem[David \& Shimkin(2016)David and Shimkin]{david_pure_2016}
David, Y. and Shimkin, N.
\newblock Pure {Exploration} for {Max}-{Quantile} {Bandits}.
\newblock In \emph{Machine {Learning} and {Knowledge} {Discovery} in
  {Databases}}, volume 9851, pp.\  556--571. Springer International Publishing,
  2016.

\bibitem[David et~al.(2018)David, Sz{\"o}r{\'e}nyi, Ghavamzadeh, Mannor, and
  Shimkin]{david_pac_nodate}
David, Y., Sz{\"o}r{\'e}nyi, B., Ghavamzadeh, M., Mannor, S., and Shimkin, N.
\newblock {PAC} bandits with risk constraints.
\newblock In \emph{ISAIM}, 2018.

\bibitem[Dvoretzky et~al.(1956)Dvoretzky, Kiefer, and Wolfowitz]{dvoretzky1956}
Dvoretzky, A., Kiefer, J., and Wolfowitz, J.
\newblock Asymptotic minimax character of the sample distribution function and
  of the classical multinomial estimator.
\newblock \emph{Ann. Math. Statist.}, 27\penalty0 (3):\penalty0 642--669, 1956.
\newblock \doi{10.1214/aoms/1177728174}.

\bibitem[Even-Dar et~al.(2006)Even-Dar, Mannor, and Mansour]{even2006action}
Even-Dar, E., Mannor, S., and Mansour, Y.
\newblock Action elimination and stopping conditions for the multi-armed bandit
  and reinforcement learning problems.
\newblock \emph{Journal of machine learning research}, 7\penalty0
  (Jun):\penalty0 1079--1105, 2006.

\bibitem[Hoeffding(1994)]{hoeffding1994probability}
Hoeffding, W.
\newblock Probability inequalities for sums of bounded random variables.
\newblock In \emph{The Collected Works of Wassily Hoeffding}, pp.\  409--426.
  Springer, 1994.

\bibitem[Howard \& Ramdas(2019)Howard and Ramdas]{howard_sequential_2019}
Howard, S.~R. and Ramdas, A.
\newblock Sequential estimation of quantiles with applications to
  {A}/{B}-testing and best-arm identification.
\newblock \emph{arXiv:1906.09712}, 2019.

\bibitem[Kandasamy et~al.(2018)Kandasamy, Krishnamurthy, Schneider, and
  P{\'{o}}czos]{kandasamy_parallelised_ts}
Kandasamy, K., Krishnamurthy, A., Schneider, J., and P{\'{o}}czos, B.
\newblock Parallelised bayesian optimisation via thompson sampling.
\newblock In \emph{International Conference on Artificial Intelligence and
  Statistics, {AISTATS} 2018, 9-11 April 2018, Playa Blanca, Lanzarote, Canary
  Islands, Spain}, volume~84 of \emph{Proceedings of Machine Learning
  Research}, pp.\  133--142. {PMLR}, 2018.

\bibitem[Kolla et~al.(2019)Kolla, L.a., P.~Bhat, and
  Jagannathan]{kolla_concentration_2019}
Kolla, R.~K., L.a., P., P.~Bhat, S., and Jagannathan, K.
\newblock Concentration bounds for empirical conditional value-at-risk: {The}
  unbounded case.
\newblock \emph{Operations Research Letters}, 47\penalty0 (1):\penalty0 16--20,
  2019.

\bibitem[Lattimore \& Szepesvári(2020)Lattimore and
  Szepesvári]{lattimore2018bandit}
Lattimore, T. and Szepesvári, C.
\newblock \emph{Bandit Algorithms}.
\newblock Cambridge University Press, 2020.
\newblock \doi{10.1017/9781108571401}.

\bibitem[Ledoux(2001)]{ledoux2001concentration}
Ledoux, M.
\newblock \emph{The concentration of measure phenomenon}.
\newblock American Mathematical Soc., 2001.

\bibitem[Libin et~al.(2017)Libin, Verstraeten, Roijers, Grujic, Theys, Lemey,
  and Now{\'{e}}]{LibinBAI:2017}
Libin, P., Verstraeten, T., Roijers, D.~M., Grujic, J., Theys, K., Lemey, P.,
  and Now{\'{e}}, A.
\newblock Bayesian best-arm identification for selecting influenza mitigation
  strategies.
\newblock \emph{CoRR}, abs/1711.06299, 2017.

\bibitem[Massart(2000)]{massart2000some}
Massart, P.
\newblock Some applications of concentration inequalities to statistics.
\newblock In \emph{Annales de la Facult{\'e} des sciences de Toulouse:
  Math{\'e}matiques}, 2000.

\bibitem[Sz{\"{o}}r{\'{e}}nyi et~al.(2015)Sz{\"{o}}r{\'{e}}nyi, Busa{-}Fekete,
  Weng, and H{\"{u}}llermeier]{szorenyi2015qualitative}
Sz{\"{o}}r{\'{e}}nyi, B., Busa{-}Fekete, R., Weng, P., and H{\"{u}}llermeier,
  E.
\newblock Qualitative multi-armed bandits: {A} quantile-based approach.
\newblock In \emph{Proceedings of the 32nd International Conference on Machine
  Learning, {ICML} 2015, Lille, France, 6-11 July 2015}, volume~37 of
  \emph{{JMLR} Workshop and Conference Proceedings}, pp.\  1660--1668.
  JMLR.org, 2015.

\bibitem[Torossian et~al.(2019)Torossian, Garivier, and
  Picheny]{torossian_x-armed_2019}
Torossian, L., Garivier, A., and Picheny, V.
\newblock X-armed bandits: Optimizing quantiles, cvar and other risks.
\newblock In \emph{Asian Conference on Machine Learning}, pp.\  252--267. PMLR,
  2019.

\bibitem[Tran-Thanh \& Yu(2014)Tran-Thanh and Yu]{tran-thanh_functional_2014}
Tran-Thanh, L. and Yu, J.~Y.
\newblock Functional {Bandits}.
\newblock \emph{arXiv:1405.2432 [cs, stat]}, 2014.
\newblock arXiv: 1405.2432.

\bibitem[Yu \& Nikolova(2013)Yu and Nikolova]{yu2013sample}
Yu, J.~Y. and Nikolova, E.
\newblock Sample complexity of risk-averse bandit-arm selection.
\newblock In \emph{{IJCAI} 2013, Proceedings of the 23rd International Joint
  Conference on Artificial Intelligence, Beijing, China, August 3-9, 2013},
  pp.\  2576--2582. {IJCAI/AAAI}, 2013.

\bibitem[Zielinski(2004)]{zielinski_quantile_est_survey}
Zielinski, R.
\newblock \emph{Optimal Quantile Estimators Small Sample Approach}.
\newblock Polish Academy of Sciences. Institute of Mathematics, 2004.

\end{thebibliography}
\bibliographystyle{icml2021}

\newpage
\appendix

\onecolumn
\section*{Supplementary Materials for Quantile Bandits for Best Arms Identification}

We show experiment details in Section \ref{sec: experiment details},
the detailed proofs for both concentration inequalities (Section \ref{sec: concentration proof})
bandit task (Section \ref{sec: Q-SAR proof}),
and discussion in Section \ref{sec: discussion appendix}.

\section{Experiments Details}
\label{sec: experiment details}

In this section, we illustrate experiment details, including simulation details (Section \ref{sec: illustractive example appendix}), vaccine allocation strategy description (Section \ref{sec: vaccine simulation appendix}), Q-SR algorithm (Section \ref{sec: Q-SR appendix}).


\subsection{Illustrative Example}
\label{sec: illustractive example appendix}
We provide more details about the environments setting in Section \ref{sec: experiments}.
We consider two distributions which satisfy our assumptions: absolute Gaussian distribution (Definition \ref{defi: AbsGau}),
and exponential distribution (Definition \ref{defi: Exp}).

\begin{defi}[Absolute Gaussian Distribution]
\label{defi: AbsGau}
Given a Gaussian random variable $X$ with mean $\mu$ and variance $\sigma^2$, the random variable $Y = |X|$ has a absolute Gaussian distribution with p.d.f and c.d.f. shown as,
\begin{align}
    f_{AbsGau}(\mu, \sigma^2) = \frac{1}{\sigma \sqrt{2 \pi}} e^{-\frac{(x-\mu)^{2}}{2 \sigma^{2}}}+\frac{1}{\sigma \sqrt{2 \pi}} e^{-\frac{(x+\mu)^{2}}{2 \sigma^{2}}},\\
    F_{AbsGau}(\mu, \sigma^2) = \frac{1}{2}\left[\operatorname{erf}\left(\frac{x+\mu}{\sigma \sqrt{2}}\right)+\operatorname{erf}\left(\frac{x-\mu}{\sigma \sqrt{2}}\right)\right].
\end{align}
where the error function $\text{erf}\left(x\right)= \frac{1}{\sqrt{\pi}} \int_{-x}^{x} e^{-t^2} dt$. We denote the absolute Gaussian distribution random variable with mean $\mu$ and variance $\sigma^2$ as $|\mathcal{N}(\mu, \sigma^2)|$.
When $\mu = 0$, the lower bound of hazard rate $L = \frac{1}{\sigma \sqrt{2 \pi}}$.
\end{defi}

\begin{defi} [Exponential Distribution]
\label{defi: Exp}
With $\theta > 0$, the p.d.f and c.d.f of exponential distribution are defined as
\begin{align}
    \label{Expon PDF}
    f_{Exp}\left(x, \theta\right) &= \theta e^{-\theta x},\\
    \label{expon CDF}
    F_{Exp}\left(x, \theta\right) &= 1 - e^{-\theta x},
\end{align}
We denote the exponential distribution with $\theta$ as $Exp(\theta)$.
The hazard rate for exponential distribution is a constant and equal to $\theta$, i.e. $h(x) = \theta$.
\end{defi}

We design our experimental environments based on three configurations of reward distributions:
A) $|\mathcal{N}(0, 2)|$ B) $|\mathcal{N}(3.5,2)|$  C) Exp(1/4).
The histogram of these three arms is shown below.

\begin{figure}[h]
    \centering
    \includegraphics[scale=0.6]{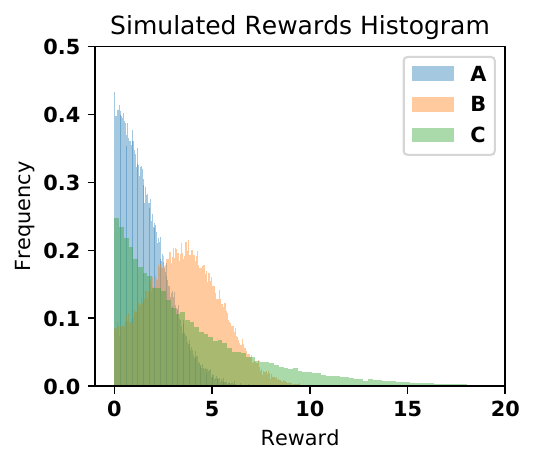}
    \caption{Simulated Arm Rewards Histogram.}
    \label{fig: Simulated Arm Rewards Histogram.}
    \vspace{-0.5cm}
\end{figure}

\subsection{Vaccine Allocation Strategy}
\label{sec: vaccine simulation appendix}
We provide more details about the vaccine allocation strategy in this section.
We allocate 100 vaccine doses (5$\%$ of the population) to 5 age groups
(0-4 years, 5-18 years, 19-29 years, 30-64 years and $>$65 years).
We consider all combinations of groups (resulting in $K=32$ arms),
and denote the allocation scheme as a Boolean 5-tuple, with each position corresponds to the respective age group
(1 represents allocation; 0 otherwise).
We use the median ($\tau=0.5$) as a robust summary statistic for each strategy.
  For the task of identifying the best subset of ages ($m=1$) Q-SAR finds that the
  optimal arm is $(0,1,0,0,0)$, i.e. only allocation to 5-18 years old group.
  For identifying the $m=3$ best arms, the optimal arms are
  $(0,1,0,0,0), (1,1,0,0,0)$ and $(0,1,1,0,0)$,
  indicated as arms 8, 24, and 12 in Figure~\ref{fig: Vaccine Reward Violin Plot.} respectively.

\subsection{Q-SR}
\label{sec: Q-SR appendix}

We extend the Successive Rejects algorithm \cite{audibert2010best} to a quantile version and adapt it to recommend more than one arm.

\begin{algorithm}[h]
    \caption{Q-SR}
    \label{alg:Q-SR}
    \begin{algorithmic}
        \STATE Denote the active set $ \mathcal{A}_1= \{1, ..., K\}$, $\widetilde{\log} (K) = \frac{m}{m+1} + \sum_{p = 1}^{K-m} \frac{1}{K+1-p}, n_0 = 0$, and for $k \in \{1, ..., K-1\},$
        \STATE $$n_p = \left\lceil\frac{1}{\widetilde{\log}(K)} \frac{N-K}{K+1-p}\right\rceil$$
        \STATE For each phase $p = 1,2, ..., K-m$:
        \STATE (1) For each $i \in \mathcal{A}_p$, select arm $i$ for $n_p - n_{p-1}$ rounds.
        \STATE (2) Let $\mathcal{A}_{p+1} = \mathcal{A}_p / \argmin_{i \in \mathcal{A}_p} \hat{Q}^\tau_{i, n_p}$
        \STATE The recommended set is $\mathcal{A}_{K- m + 1}$.
    \end{algorithmic}
    \end{algorithm}

We provide justifications for the design choice of our proposed Q-SR algorithm shown in Algorithm \ref{alg:Q-SR}.
Although that both SR and SAR are analysed on reward distributions with support $[0,1]$,
they can both be directly extended to subgaussian reward distributions \cite{audibert2010best, bubeck2013multiple}.
We propose Quantile-based Successive Rejects (Q-SR), adapted from Successive Rejects (SR) algorithm \citep{audibert2010best}.
To be able to recommend multiple arms, the total phase is designed to be $K - m$ instead of $K-1$,
and the number of pulls for each round is modified to make sure all budgets are used.
More precisely, one is pulled $n_1 =  \left\lceil\frac{1}{\widetilde{\log}(K)} \frac{N-K}{K}\right\rceil$ times,
one is pulled $n_2 =  \left\lceil\frac{1}{\widetilde{\log}(K)} \frac{N-K}{K-1}\right\rceil$ times, $\dots$,
$m+1$ is pulled $n_{K-m} =  \left\lceil\frac{1}{\widetilde{\log}(K)} \frac{N-K}{K+1-(K-m)}\right\rceil$ times, then
\begin{align}
    n_1 + \dots + (m+1) n_{K-m} \leq K + \frac{N-K}{\widetilde{\log}(K)} \left(\frac{m}{m+1} + \sum_{p = 1}^{K-m} \frac{1}{K+1-p}  \right) = N.
\end{align}

As shown in Section \ref{sec: quantile bandits q-sar}, when $m=1$, the Q-SAR algorithm can be reduced to the Q-SR algorithm.
So the theoretical performance of the Q-SR algorithm is guaranteed.
We leave the theoretical analysis of Q-SR for $m > 1$ for the future work.

\section{Concentration Inequality Proof}
\label{sec: concentration proof}


This section shows the proofs of the concentration results shown in Section \ref{sec: concentration inequalities}.
In the following, we will walk through the key statement and show how we achieve our results in details.
For the reader's convenience, we restate our theorems in the main paper whenever needed.
We first introduce the Modified logarithmic Sobolev inequality, which gives the upper bound of the entropy (Eq. (\ref{equ: entropy})) of $\exp(\lambda W)$.

\begin{restatable}[Modified logarithmic Sobolev inequality \citep{ledoux2001concentration}]{theo}{MLSI}
\label{theo: Modified logarithmic Sobolev inequality}
Consider independent random variables $X_1, \dots, X_n$,
let a real-valued random variable $W = f\left(X_{1}, \ldots, X_{n}\right)$, where $f: \mathbb{R}^{n} \rightarrow \mathbb{R}$ is measurable.
Let $W_{i}=f_{i}\left(X_{1}, \ldots, X_{i-1}, X_{i+1}, \ldots, X_{n}\right)$, where $f_{i}: \mathbb{R}^{n-1} \rightarrow \mathbb{R}$ is an arbitrary measurable function.
Let $\phi(x)=\exp(x)-x-1$. Then for any $\lambda \in \mathbb{R}$,
\begin{align}
    \operatorname{Ent}\left[\exp(\lambda W)\right]
    & =\lambda \mathbb{E}\left[W \exp(\lambda W)\right]-\mathbb{E}\left[\exp(\lambda W)\right] \log \mathbb{E}\left[\exp(\lambda W)\right] \\
    & \leq \sum_{i=1}^{n} \mathbb{E}\left[\exp(\lambda W) \phi\left(-\lambda\left(W-W_{i}\right)\right)\right]
\end{align}
\end{restatable}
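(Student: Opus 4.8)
The plan is to reduce the statement to the classical tensorization (subadditivity) property of entropy and then apply a one-dimensional variational bound. First I would recall the tensorization inequality for entropy: for a nonnegative function $Z = g(X_1,\dots,X_n)$, one has
\begin{align}
\operatorname{Ent}[Z] \leq \sum_{i=1}^n \mathbb{E}\!\left[\operatorname{Ent}^{(i)}[Z]\right],
\end{align}
where $\operatorname{Ent}^{(i)}[Z]$ denotes the entropy computed with respect to the $i$-th coordinate $X_i$ only, conditioning on the remaining variables $X^{(i)} = (X_1,\dots,X_{i-1},X_{i+1},\dots,X_n)$. This is the standard starting point for the entropy method and I would cite \citet{boucheron2013} Chapter~4 for it. Applying this with $Z = \exp(\lambda W)$ gives
\begin{align}
\operatorname{Ent}[\exp(\lambda W)] \leq \sum_{i=1}^n \mathbb{E}\!\left[\operatorname{Ent}^{(i)}[\exp(\lambda W)]\right].
\end{align}

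The second step is to bound each conditional entropy $\operatorname{Ent}^{(i)}[\exp(\lambda W)]$ pointwise in $X^{(i)}$. Fix the other coordinates and view $W$ as a function of $X_i$ alone; since $W_i = f_i(X^{(i)})$ does not depend on $X_i$, it is a constant in this conditional computation. I would use the variational characterization of entropy: for any nonnegative random variable $Y$ and any constant $u > 0$,
\begin{align}
\operatorname{Ent}[Y] \leq \mathbb{E}\!\left[Y \log Y - Y \log u - Y + u\right] = \mathbb{E}\!\left[Y\!\left(\log\frac{Y}{u} - 1\right) + u\right],
\end{align}
which holds because $\operatorname{Ent}[Y] = \inf_{u>0}\mathbb{E}[Y\log Y - (\log u + 1)Y + u]$ and the infimum is attained at $u = \mathbb{E}[Y]$. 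Taking $Y = \exp(\lambda W)$ and the deterministic choice $u = \exp(\lambda W_i)$ (legitimate since $W_i$ is constant given $X^{(i)}$), a direct computation gives
\begin{align}
\operatorname{Ent}^{(i)}[\exp(\lambda W)] \leq \mathbb{E}^{(i)}\!\left[\exp(\lambda W)\!\left(\lambda(W - W_i) - 1\right) + \exp(\lambda W_i)\right],
\end{align}
and rewriting the right-hand side in terms of $\phi(x) = e^x - x - 1$ with $x = -\lambda(W-W_i)$ yields $\operatorname{Ent}^{(i)}[\exp(\lambda W)] \leq \mathbb{E}^{(i)}\!\left[\exp(\lambda W)\,\phi(-\lambda(W-W_i))\right]$, since $e^{\lambda W}\phi(-\lambda(W-W_i)) = e^{\lambda W_i} - e^{\lambda W}(1 - \lambda(W - W_i)) = e^{\lambda W_i} + e^{\lambda W}(\lambda(W-W_i) - 1)$.

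The final step is simply to take expectations over $X^{(i)}$, substitute back into the tensorization bound, and collect terms to obtain
\begin{align}
\operatorname{Ent}[\exp(\lambda W)] \leq \sum_{i=1}^n \mathbb{E}\!\left[\exp(\lambda W)\,\phi\!\left(-\lambda(W - W_i)\right)\right],
\end{align}
which is the claimed inequality; the equality line in the statement is just the definition of $\operatorname{Ent}$ specialized to $\exp(\lambda W)$. I expect the main technical point to be the careful justification of the variational bound for entropy and the bookkeeping that the choice $u = \exp(\lambda W_i)$ is admissible inside the conditional expectation — everything after that is elementary algebra with the function $\phi$. A secondary subtlety is ensuring integrability (so that $\operatorname{Ent}[\exp(\lambda W)]$ is finite and the manipulations are valid), which in the applications in this paper is guaranteed by the IHR assumption giving sub-gamma / sub-Gaussian tails.
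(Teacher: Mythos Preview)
Your proof plan is correct and is precisely the standard argument for this inequality (tensorization of entropy followed by the variational formula $\operatorname{Ent}[Y]=\inf_{u>0}\mathbb{E}[Y\log(Y/u)-Y+u]$ with the coordinate-wise constant choice $u=\exp(\lambda W_i)$). Note, however, that the paper does not actually prove this statement: it is quoted as a known result from \citet{ledoux2001concentration} (see also \citet{boucheron2013}, Theorem~6.6) and used as a black-box tool, so there is no in-paper proof to compare against.
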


Consider i.i.d random variables $X_1, \dots, X_n$, and the corresponding order statistics $X_{(1)} \geq \dots \geq X_{(n)}$.
Define the spacing between rank $k$ and $k+1$ order statistics as $S_k = X_{\left(k\right)}-X_{\left(k+1\right)}$.
By taking $W$ as $k$ rank order statistics (or negative $k$ rank), and $W_i$ as nearest possible order statistics, i.e. $k \pm 1$ rank (or negative $k \pm 1$ rank),
Theorem \ref{theo: Modified logarithmic Sobolev inequality} provides the connection between the order statistics and the spacing between order statistics.
The connection is shown in Proposition \ref{prop: new bounds for entropy}.

\OSS*

\begin{proof} 
    We prove the upper bound based on Theorem \ref{theo: Modified logarithmic Sobolev inequality}.
    We first prove Eq. (\ref{equ: new bounds for entropy X}).
    We define $W, W_i$ with $f^k$, $f_i^{k}$ as following.
    Let $W$ be the rank $k$ order statistics of $X_1, \dots, X_n$,
    i.e. $W = f^k(X_{1}, \dots, X_{n}) = X_{(k)}$;
    Let $W_i$ be the rank $k$ order statistics of $X_1, \dots, X_{i-1}, X_{i+1}, \dots, X_n$ (i.e. with $X_i$ removed from $X_1, \dots, X_n$),
    i.e.  $f_i = X_{(k+1)} \mathbb{I}(X_i \geq X_{(k)}) + X_{(k)} \mathbb{I}(X_i < X_{(k)})$.
    So $W_i = X_{(k+1)}$ when the removed element is bigger and equal to $X_{(k)}$, 
    otherwise $W = X_{(k)}$.
    Then the upper bound of $\operatorname{Ent}\left[\exp(\lambda X_{(k)})\right]$ is
    \begin{align}
    &\operatorname{Ent}\left[\exp(\lambda X_{(k)})\right] \nonumber\\
    \leq& \mathbb{E} \big[ \sum_{i=1}^n \exp(\lambda X_{(k)}) \phi\big( -\lambda (X_{(k)} - X_{(k+1)} \mathbb{I}(X_i \geq X_{(k)}) - X_{(k)} \mathbb{I}(X_i < X_{(k)}) \big) \big] && \text{Theorem \ref{theo: Modified logarithmic Sobolev inequality}}\\
    =& \mathbb{E} \big[  \exp(\lambda X_{(k)}) \phi\big( -\lambda (X_{(k)} - X_{(k+1)})  \big) \sum_{i=1}^n \mathbb{I}(X_i \geq X_{(k)}) \big] \\
    = &  k \mathbb{E}\left[\exp(\lambda X_{(k)}) \phi\left(-\lambda S_k \right)\right] \\
    =& k \mathbb{E}\left[\exp(\lambda X_{(k+1)}) \exp(\lambda S_k) \phi\left(-\lambda S_k \right)\right] \\
    =& k \mathbb{E}\left[\exp(\lambda X_{(k+1)}) \zeta\left(\lambda S_k \right)\right] && \zeta(x) = \exp(x) \phi(-x)
    \end{align}

    Similarly, for the proof of Eq. (\ref{equ: new bounds for entropy -X}),
    We define $W, W_i$ with $\widetilde{f}^k$, $\widetilde{f}_i^{k-1}$.
    Let $W$ be the negative value of $k$ rank order statistics of $X_{1}, \dots, X_{n}$, i.e. $W = \widetilde{f}^k(X_{1}, \dots, X_{n}) = - X_{(k)}$;
    Let $W_i$ be the negative value of ${k-1}$ rank order statistics of $X_1, \dots, X_{i-1}, X_{i+1}, \dots, X_n$.
    Thus when $X_{i} \geq X_{(k-1)}$, $W_i = - X_{(k)}$, 
    otherwise $W_i = - X_{(k-1)}$.
    Then by Theorem \ref{theo: Modified logarithmic Sobolev inequality}, we get
    $\operatorname{Ent}\left[\exp(- \lambda X_{(k)})\right] \leq  (n - k + 1) \mathbb{E}\left[\exp(-\lambda X_{(k)}) \phi\left( - \lambda S_{k-1}  \right)\right].$

\end{proof}

Compared with the proof in \citet{boucheron2012}, we do not choose a different initialisation of $W_i$ in terms of the two cases $k \leq n/2$ and $k > n/2$,
which does not influence the concentration rates of empirical quantiles, and allows us to extend the proof to all ranks (excluding extremes).
We derive upper bounds for both $\operatorname{Ent}\left[\exp(\lambda X_{(k)})\right]$ and $\operatorname{Ent}\left[\exp(- \lambda X_{(k)})\right]$,
which allows us to derive two-sided concentration bounds instead of one-sided bound.
Now we show the proof of Theorem \ref{theo: New Extended Exponential Efron-Stein inequality}.



\NewExtendedEES*

\begin{proof}
    The proof of Eq. (\ref{equ: new Exponential Efron-Stein inequality right}) is based on Proposition \ref{prop: new bounds for entropy}
    and follows the same reasoning from \cite{boucheron2012} Theorem 2.9.
    Note since Eq. (\ref{equ: new bounds for entropy X}) holds for $k \in [1,n)$,
    Eq. (\ref{equ: new Exponential Efron-Stein inequality right}) can be proved for $k \in [1,n)$ (\citet{boucheron2012} only proved for $k \in [1,n/2]$).

    We now prove Eq. (\ref{equ: new Exponential Efron-Stein inequality left}).
    Recall $\phi(x) = \exp(x) - x - 1$.
    $\phi(x)$ is nonincreasing when $x \leq 0$ and nondecreasing otherwise.
    By Proposition \ref{prop: new bounds for entropy} and Proposition \ref{prop: new negative association for order statistics and spacing} (which will be shown later), for $\lambda \geq 0$,
    \begin{align}
    \label{equ: new proof for Theorem 1 step 1.1}
    \operatorname{Ent}\left[\exp(-\lambda X_{(k)})\right] \leq & (n - k + 1) \mathbb{E}\left[\exp(-\lambda X_{(k)}) \phi\left( - \lambda S_{k-1}  \right)\right]
    && \text{By Proposition \ref{prop: new bounds for entropy}}\\
    \label{equ: new proof for Theorem 1 step 1.2}
    \leq & (n - k + 1) \mathbb{E}\left[\exp(-\lambda X_{(k)}) \right] \mathbb{E}\left[\phi\left( - \lambda S_{k-1}  \right)\right] && \text{By Proposition \ref{prop: new negative association for order statistics and spacing}}
    \end{align}

    Multiplying both sides by $\exp(\lambda \mathbb{E}[X_{(k)}])$,
    \begin{align}
    \label{equ: new proof for Theorem 1 step 2}
        \operatorname{Ent}[\exp(\lambda Z_k^\prime)] \leq (n-k+1) \mathbb{E}[\exp(\lambda Z_k^\prime)] \mathbb{E}[\phi(- \lambda S_{k-1})].
    \end{align}
    With the fact $\phi(x) \leq \frac{1}{2} x^2$ when $x \leq 0$, and $- \lambda S_{k-1} \leq 0$, we have $\mathbb{E}[\phi(- \lambda S_{k-1})] \leq \frac{\lambda^2}{2} \mathbb{E}[S_{k-1}^2]$.
    We then obtain
    \begin{align}
    \label{equ: new EEFS key step 1}
        \frac{\operatorname{Ent}\left[\exp(\lambda Z_k^\prime)\right]}{\lambda^{2} \mathbb{E}\left[\exp(\lambda Z_k^\prime)\right]} \leq \frac{n-k+1}{\lambda^2} \mathbb{E}[\phi(-\lambda S_{k-1})]
        \leq \frac{n-k+1}{2} \mathbb{E}[S_{k-1}^2].
    \end{align}
    We now solve this integral inequality. Integrating left side, with the fact that $\lim _{\lambda \rightarrow 0} \frac{1}{\lambda} \log \mathbb{E} \exp(\lambda Z_k^\prime)=0$, for $\lambda \geq 0$, we have
    \begin{align}
    \int_0^\lambda \frac{\operatorname{Ent}\left[\exp(t Z_k^\prime)\right]}{t^{2} \mathbb{E}\left[\exp(t Z_k^\prime)\right]} dt
    = \int_0^\lambda \frac{\mathbb{E}[ t Z_k^\prime] - \log \mathbb{E}[\exp(t Z_k^\prime)]}{t^2} dt
    = \frac{\log \mathbb{E}[\exp (t Z_k^\prime)]}{t}|_0^\lambda
    \label{equ: new EEFS key step 2}
    = &\frac{1}{\lambda} \log \mathbb{E}[
        \exp(\lambda Z_k^\prime)].
    \end{align}
    Integrating right side, for $\lambda \geq 0$,
    \begin{align}
        \label{equ: new EEFS key step 3}
        \int_0^\lambda \frac{n-k+1}{2} \mathbb{E}[S_{k-1}^2] dt
        = \frac{\lambda (n-k+1)}{2} \mathbb{E}[S_{k-1}^2].
    \end{align}
    Combining Eq. (\ref{equ: new EEFS key step 1}), (\ref{equ: new EEFS key step 2}) and (\ref{equ: new EEFS key step 3}), we get
    \begin{align}
        \psi_{Z_k^\prime}(\lambda) = \log \mathbb{E}[\exp(\lambda Z_k^\prime)] \leq \frac{\lambda^2 (n-k+1)}{2} \mathbb{E}[S_{k-1}^2].
    \end{align}
    which concludes the proof.

\end{proof}

To further bound the order statistic spacings in expectation, we introduce
the R-transform (Definition \ref{defi: R-transform}) and R\'enyi's representation (Theorem \ref{theo: Renyi's representation}).
In the sequel, $f$ is a monotone function from $(a,b)$ to $(c,d)$,
its generalised inverse $f^{\leftarrow}: (c,d) \rightarrow (a,b)$ is defined by $f^{\leftarrow}(y) = \inf \{x: a<x<b, f(x) \geq y\}$.
Observe that the R-transform defined in Definition \ref{defi: R-transform} is the quantile transformation with respect to the c.d.f of standard exponential distribution, i.e. $F^{\leftarrow}\left(F_{exp}\left(t\right)\right)$.

\begin{defi} [R-transform]
\label{defi: R-transform}
The R-transform of a distribution F is defined as the non-decreasing function on $[0, \infty)$ by $R(t) = \inf\{x: F(x) \geq 1 - \exp(-t)\} = F^{\leftarrow}(1-\exp(-t))$.
\end{defi}

\begin{theo} [\textit{R\'enyi's representation, Theorem 2.5 in \citep{boucheron2012}}]
\label{theo: Renyi's representation}
Let $X_{\left(1\right)} \geq \ldots \geq X_{\left(n\right)}$ be the order statistics of samples from distribution F, $Y_{\left(1\right)} \geq Y_{\left(2\right)} \geq \ldots \geq Y_{\left(n\right)}$ be the
order statistics of independent samples of the standard exponential distribution, then
\begin{align}
{\scriptstyle
    \left(Y_{\left(n\right)}, \ldots, Y_{\left(k\right)}, \ldots, Y_{\left(1\right)}\right)  \stackrel{d}{=}
    \left(\frac{E_{n}}{n}, \ldots, \sum_{i=k}^{n} \frac{E_{i}}{i}, \ldots, \sum_{i=1}^{n} \frac{E_{i}}{i}\right),
}
\end{align}
where $E_{1}, \ldots, E_{n}$ are independent and identically distributed (i.i.d.) standard exponential random variables, and
\begin{align}
    \left(X_{\left(n\right)}, \ldots, X_{\left(1\right)}\right) \stackrel{d}{=} \left(R \left(Y_{\left(n\right)}\right), \ldots, R \left(Y_{\left(1\right)}\right)\right),
\end{align}
where $R\left(\cdot\right)$ is the R-transform defined in Definition \ref{defi: R-transform}, equality in distribution is denoted by $\stackrel{d}{=}$.
\end{theo}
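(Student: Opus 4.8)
The plan is to prove the two claimed distributional identities in turn. The first is a statement purely about the standard exponential distribution and reduces to the classical fact that the normalised spacings of exponential order statistics are i.i.d.\ standard exponential; the second then follows by transporting the first identity through the monotone R-transform together with the probability integral transform. Since this is Theorem~2.5 of \citet{boucheron2012}, one could also simply cite it, but a self-contained argument is short.

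For the first identity, I would compute directly with densities. Writing $Y_{(1)} \geq \dots \geq Y_{(n)} \geq 0$ for the exponential order statistics, the joint density of $(Y_{(1)},\dots,Y_{(n)})$ on the cone $\{y_1 \geq \dots \geq y_n \geq 0\}$ equals $n!\,\exp\!\big(-\sum_{j=1}^n y_j\big)$. Introduce the normalised spacings $E_j := j\,\big(Y_{(j)} - Y_{(j+1)}\big)$ for $j=1,\dots,n$, with $Y_{(n+1)} := 0$; inverting gives $Y_{(k)} = \sum_{i=k}^n E_i/i$, which is precisely the asserted right-hand side, so it remains to show the $E_j$ are i.i.d.\ $\mathrm{Exp}(1)$. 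The map $(Y_{(1)},\dots,Y_{(n)}) \mapsto (E_1,\dots,E_n)$ is triangular with Jacobian determinant $\prod_{j=1}^n j = n!$, and summing the identity $Y_{(k)} = \sum_{i=k}^n E_i/i$ over $k$ shows $\sum_{k=1}^n Y_{(k)} = \sum_{j=1}^n E_j$ (the term $E_j/j$ is counted exactly $j$ times). Hence the density of $(E_1,\dots,E_n)$ on $[0,\infty)^n$ is $n!\,\exp\!\big(-\sum_j e_j\big)\cdot\frac{1}{n!} = \prod_j \exp(-e_j)$, giving independence and the standard-exponential marginals. A density-free alternative is an induction on $n$ using memorylessness: $n\,Y_{(n)}$ is $\mathrm{Exp}(1)$ as the minimum of $n$ i.i.d.\ exponentials, and conditionally on $Y_{(n)}$ the excesses $Y_{(k)} - Y_{(n)}$ form, by the lack-of-memory property, the order statistics of $n-1$ i.i.d.\ exponentials independent of $Y_{(n)}$; I would nonetheless prefer the Jacobian route.

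For the second identity, note that if $E \sim \mathrm{Exp}(1)$ then $1 - e^{-E}$ is uniform on $(0,1)$, and for an arbitrary c.d.f.\ $F$ the generalised inverse satisfies $F^{\leftarrow}(U) \sim F$ when $U$ is uniform; composing, $R(Y) = F^{\leftarrow}(1 - e^{-Y}) \sim F$ for $Y \sim \mathrm{Exp}(1)$. Applying this coordinatewise to i.i.d.\ $Y_1,\dots,Y_n$ shows $R(Y_1),\dots,R(Y_n)$ is an i.i.d.\ sample from $F$, hence distributed as $X_1,\dots,X_n$. Because $R$ is non-decreasing it commutes with sorting, so the $k$-th largest of $R(Y_1),\dots,R(Y_n)$ equals $R(Y_{(k)})$, which yields $(X_{(n)},\dots,X_{(1)}) \stackrel{d}{=} (R(Y_{(n)}),\dots,R(Y_{(1)}))$.

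The step needing the most care is the handling of the generalised inverse without assuming $F$ continuous or strictly increasing: one must verify that $R$ is genuinely non-decreasing (immediate from the definition of $F^{\leftarrow}$) and establish the Galois-type equivalence $F^{\leftarrow}(u) \leq x \iff u \leq F(x)$, which gives $\mathbb{P}(F^{\leftarrow}(U) \leq x) = \mathbb{P}(U \leq F(x)) = F(x)$ and hence $F^{\leftarrow}(U) \sim F$. Everything else — the Jacobian computation and the fact that monotone maps commute with order statistics — is routine bookkeeping.
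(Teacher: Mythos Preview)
Your proof is correct and standard. Note, however, that the paper does not supply its own proof of this theorem: it is stated as Theorem~2.5 of \citet{boucheron2012} and used as a black-box tool, so there is nothing in the paper to compare against beyond the citation. Your self-contained argument via the Jacobian of the normalised-spacings map, followed by the probability integral transform through $R = F^{\leftarrow}\circ(1-e^{-\cdot})$, is exactly the classical route and would serve as a fine replacement for the bare citation.
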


The R\'enyi's representation shows the order statistics of an Exponential random variable are linear combinations of independent Exponentials,
which can be extended to the representation for order statistics of a general continuous $F$ by quantile transformation using R-transform.
The following proposition states the connection between the \textit{IHR} and R-transform.

\begin{prop} [Proposition 2.7 \citep{boucheron2012}]
\label{prop non-increasing hazard rate}
Let F be an absolutely continuous distribution function with hazard
rate h (assuming density exists), the derivative of R-transform is $ R^{\prime}=1 / h\left(R\right)$. Then if the hazard rate h is non-decreasing (Assumption \ref{ass:IHR}), then for all $t > 0$ and $x > 0$, $R\left(t+x\right)-R(t) \leq x / h\left(R(t)\right).$
\end{prop}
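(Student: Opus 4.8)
The plan is to prove the two assertions in turn, the derivative identity first and then the inequality. For the identity, I would start from the defining relation of the R-transform. Under Assumption~\ref{ass:IHR} the hazard rate obeys $h = f/(1-F) \ge L > 0$ on the support, so $f > 0$ there and $F$ is continuous and strictly increasing; hence the generalised inverse $F^{\leftarrow}$ is a genuine inverse and $R(t) = F^{\leftarrow}(1-e^{-t})$ satisfies $F(R(t)) = 1 - e^{-t}$ for all $t > 0$. Differentiating this in $t$ (chain rule, with $F' = f$) gives $f(R(t))\,R'(t) = e^{-t}$, and since $e^{-t} = 1 - F(R(t))$ this rearranges to
\[
R'(t) = \frac{1-F(R(t))}{f(R(t))} = \frac{1}{h(R(t))},
\]
which is the first claim. (Equivalently one may invoke the inverse function theorem applied to $u \mapsto F^{\leftarrow}(1-e^{-u})$.)

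Next I would use monotonicity. The map $R$ is non-decreasing, being a quantile transformation: as $t$ grows, $1-e^{-t}$ grows, so $\{x: F(x)\ge 1-e^{-t}\}$ shrinks and its infimum increases. Therefore for $s \ge t$ we have $R(s)\ge R(t)$, and the IHR hypothesis yields $h(R(s))\ge h(R(t)) \ge L > 0$, so $R'(s) = 1/h(R(s)) \le 1/h(R(t))$; that is, $R'$ is non-increasing (equivalently $R$ is concave). Integrating over $[t,t+x]$,
\[
R(t+x) - R(t) = \int_t^{t+x} R'(s)\,\de s \le \int_t^{t+x} \frac{\de s}{h(R(t))} = \frac{x}{h(R(t))},
\]
which is the desired bound; alternatively, concavity of $R$ gives $R(t+x) \le R(t) + x\,R'(t)$ immediately.

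The only points requiring care are regularity: one must justify that $R$ is differentiable and that the fundamental-theorem-of-calculus step above is legitimate. Both follow from absolute continuity of $F$ together with $f > 0$ on the support (forced by $L > 0$); if one wishes to sidestep pointwise differentiability, one can instead note that $R$ is locally absolutely continuous with $R' = 1/h(R)$ almost everywhere and integrate that a.e.\ identity. I do not expect a real obstacle here — the substance is simply the chain ``$h$ non-decreasing $\Rightarrow h\circ R$ non-decreasing $\Rightarrow R'$ non-increasing'' followed by one integration, which is exactly Proposition~2.7 of \citet{boucheron2012}.
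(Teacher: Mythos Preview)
Your argument is correct and is the standard proof: differentiate the defining relation $F(R(t)) = 1 - e^{-t}$ to get $R' = 1/h(R)$, then use IHR and monotonicity of $R$ to conclude $R'$ is non-increasing, and integrate. The paper itself does not supply a proof of this proposition; it simply quotes it as Proposition~2.7 of \citet{boucheron2012} and uses it as a black box, so there is nothing in the paper to compare your approach against --- but what you wrote is exactly the argument one finds in that reference.
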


We now show Proposition \ref{prop: new negative association for order statistics and spacing} based on
the R\'enyi's representation (Theorem \ref{theo: Renyi's representation}) and Harris' inequality (Theorem \ref{theo: harris' inequality}).
Proposition \ref{prop: new negative association for order statistics and spacing} allows us to upper bound the expectation of multiplication of two functions in terms of
the multiplication of expectation of those two functions respectively.
We will use this property to prove Theorem \ref{theo: New Extended Exponential Efron-Stein inequality}.

\begin{theo}[Harris' inequality \citep{boucheron2013}]
    \label{theo: harris' inequality}
Let $X_{1}, \ldots, X_{n}$ be independent real-valued random variables and define
the random vector $X=\left(X_{1}, \ldots, X_{n}\right)$ taking values in $\mathbb{R}^{n} .$
If $f : \mathbb{R}^{n} \rightarrow \mathbb{R}$ is nonincreasing and $g : \mathbb{R}^{n} \rightarrow \mathbb{R}$ is nondecreasing then
$$
\mathbb{E}[f(X) g(X)] \leq \mathbb{E}[f(X)] \mathbb{E}[g(X)]
$$
\end{theo}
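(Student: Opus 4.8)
The plan is to prove Harris' inequality by induction on the number of coordinates $n$, using a one-dimensional coupling argument for the base case and a conditioning step to pass from $n-1$ to $n$.

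For the base case $n=1$, I would take an independent copy $X'$ of $X=X_1$ and exploit the elementary fact that, since $f$ is nonincreasing and $g$ nondecreasing, $(f(x)-f(y))(g(x)-g(y)) \le 0$ for all reals $x,y$. Hence $\mathbb{E}[(f(X)-f(X'))(g(X)-g(X'))]\le 0$; expanding the product and using that $X,X'$ are independent and identically distributed makes all four terms explicit, and the inequality collapses to $2\mathbb{E}[f(X)g(X)] - 2\mathbb{E}[f(X)]\mathbb{E}[g(X)] \le 0$, which is exactly the assertion for $n=1$.

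For the inductive step, assuming the inequality for any $n-1$ independent variables, I would condition on $X_n$. For each fixed value $t$ of $X_n$, freezing the last coordinate leaves $f(\cdot,t)$ nonincreasing and $g(\cdot,t)$ nondecreasing in the first $n-1$ coordinates, so the inductive hypothesis gives $\mathbb{E}[f(X)g(X)\mid X_n=t] \le F(t)G(t)$ with $F(t):=\mathbb{E}[f(X)\mid X_n=t]$ and $G(t):=\mathbb{E}[g(X)\mid X_n=t]$. The key observation is that $F$ is nonincreasing and $G$ nondecreasing in $t$, because integrating a monotone function of the free coordinates against their (fixed) product law preserves monotonicity in the frozen coordinate. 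Applying the $n=1$ case to $F$, $G$, and the variable $X_n$, and then averaging the conditional bound over $X_n$, chains to
\[
\mathbb{E}[f(X)g(X)] = \mathbb{E}\bigl[\mathbb{E}[f(X)g(X)\mid X_n]\bigr] \le \mathbb{E}[F(X_n)G(X_n)] \le \mathbb{E}[F(X_n)]\,\mathbb{E}[G(X_n)] = \mathbb{E}[f(X)]\,\mathbb{E}[g(X)].
\]

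The step that needs the most care is verifying that the conditional expectations $F$ and $G$ inherit the right monotonicity and that all the integrals involved are well defined; in every application in this paper $f$ and $g$ are bounded (or at least square-integrable), so integrability is not a genuine obstacle, and the monotonicity transfer is the one routine point to check. Since the statement is standard, the alternative is simply to cite it from \citet{boucheron2013}, Theorem~2.4.
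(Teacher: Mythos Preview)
The paper does not prove this theorem at all; it is stated with a citation to \citet{boucheron2013} and then used as a black box in the proof of Proposition~\ref{prop: new negative association for order statistics and spacing}. Your proposed argument is correct and is in fact the standard proof (induction on $n$, Chebyshev's sum inequality via an independent copy for $n=1$, conditioning on one coordinate for the inductive step), which is essentially the same proof that appears in the cited reference. So your closing remark is exactly what the paper does: it simply cites the result.
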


\begin{prop}[Negative Association]
\label{prop: new negative association for order statistics and spacing}
Let the order statistics spacing of rank $k - 1$ as $S_{k-1} = X_{(k-1)} - X_{(k)}$.
Then $ X_{(k)}$ and $S_{k-1}$ are negatively associated: for any pair of non-increasing function $f_1$ and $f_2$,
\begin{align}
    \mathbb{E}\left[f_1(X_{(k)}) f_2\left( S_{k-1} \right)\right] \leq
    \mathbb{E}\left[f_1(X_{(k)})\right] \mathbb{E}\left[f_2\left( S_{k-1} \right)\right].
\end{align}
\end{prop}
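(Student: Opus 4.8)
\textbf{Proof plan for Proposition~\ref{prop: new negative association for order statistics and spacing}.}
The plan is to reduce the statement to an application of Harris' inequality (Theorem~\ref{theo: harris' inequality}) by passing through R\'enyi's representation (Theorem~\ref{theo: Renyi's representation}). First I would write both $X_{(k)}$ and $S_{k-1} = X_{(k-1)} - X_{(k)}$ as functions of the independent standard exponential random variables $E_1, \dots, E_n$. By R\'enyi's representation, $X_{(k)} \stackrel{d}{=} R\!\left(\sum_{i=k}^{n} E_i/i\right)$ and $X_{(k-1)} \stackrel{d}{=} R\!\left(\sum_{i=k-1}^{n} E_i/i\right)$, so the pair $(X_{(k)}, S_{k-1})$ has the same joint law as
\begin{align}
\left( R\Big(\textstyle\sum_{i=k}^{n} \tfrac{E_i}{i}\Big),\ R\Big(\textstyle\sum_{i=k-1}^{n} \tfrac{E_i}{i}\Big) - R\Big(\textstyle\sum_{i=k}^{n} \tfrac{E_i}{i}\Big) \right). \nonumber
\end{align}
Crucially, $X_{(k)}$ depends only on $(E_k, \dots, E_n)$, while $S_{k-1}$ depends on $(E_k,\dots,E_n)$ through the increment but also on $E_{k-1}$.

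The key monotonicity observations I would then establish are: (i) as a function of $(E_{k-1}, E_k, \dots, E_n)$, the quantity $X_{(k)} = R(\sum_{i=k}^n E_i/i)$ is nondecreasing in each coordinate (it does not depend on $E_{k-1}$, and $R$ is nondecreasing); (ii) the spacing $S_{k-1} = R(t + E_{k-1}/(k-1)) - R(t)$ with $t = \sum_{i=k}^n E_i/i$ is \emph{nonincreasing} in each of $E_k, \dots, E_n$ — this is exactly where the IHR assumption enters, via Proposition~\ref{prop non-increasing hazard rate}: since $R' = 1/h(R)$ and $h$ is non-decreasing, $R$ is concave, so the increment $R(t+\delta) - R(t)$ is non-increasing in $t$; and it is obviously nondecreasing in $E_{k-1}$. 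Thus $f_1(X_{(k)})$ (with $f_1$ non-increasing) is a non-increasing function of $(E_{k-1}, \dots, E_n)$, while $f_2(S_{k-1})$ (with $f_2$ non-increasing) is a \emph{non-decreasing} function of $(E_{k-1}, E_k, \dots, E_n)$ — composing a non-increasing $f_2$ with the non-increasing-in-$(E_k,\dots,E_n)$, nondecreasing-in-$E_{k-1}$ map $S_{k-1}$ requires a small sign-flip argument, which is the one routine subtlety. Applying Harris' inequality to these two oppositely-monotone functions of the independent vector $(E_{k-1}, E_k, \dots, E_n)$ yields $\mathbb{E}[f_1(X_{(k)}) f_2(S_{k-1})] \leq \mathbb{E}[f_1(X_{(k)})]\,\mathbb{E}[f_2(S_{k-1})]$.

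The main obstacle is getting the monotonicity directions to line up consistently across all the $E_i$'s so that Harris' inequality applies: one needs one of the two functions non-increasing and the other non-decreasing in \emph{every} coordinate simultaneously. The concavity of $R$ (equivalently, the IHR property through $R' = 1/h(R)$ and Proposition~\ref{prop non-increasing hazard rate}) is what makes the spacing monotone in the right direction in $E_k, \dots, E_n$; without it the argument fails. A clean way to handle the $E_{k-1}$ coordinate is to substitute $E_{k-1} \mapsto -E_{k-1}$ (or work with $-S_{k-1}$ throughout and absorb the sign into $f_2$), so that after this change of variables $f_1(X_{(k)})$ and $f_2(\cdot)$ become monotone in opposite directions in all coordinates, at which point Harris' inequality closes the argument. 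I would also note that the conclusion extends by the same argument (or by monotone limits) to the cases needed in the proof of Theorem~\ref{theo: New Extended Exponential Efron-Stein inequality}, where $f_1(x) = \exp(-\lambda x)$ and $f_2(x) = \phi(-\lambda x)$ for $\lambda \geq 0$ are both non-increasing.
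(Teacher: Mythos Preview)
Your proposal is correct and follows essentially the same route as the paper: R\'enyi's representation to express $(X_{(k)},S_{k-1})$ through independent exponentials, concavity of $R$ (via Proposition~\ref{prop non-increasing hazard rate} under IHR) to get the spacing monotone in the right direction, and Harris' inequality to conclude. The only difference in execution is how the awkward coordinate $E_{k-1}$ is handled: the paper first conditions on $Y_{(k)}$ (equivalently, integrates out $E_{k-1}$, which is independent of $Y_{(k)}$), observes that the resulting conditional expectation $y\mapsto\mathbb{E}[f_2(R(E_{k-1}+y)-R(y))]$ is non-decreasing, and then applies one-dimensional Harris in the variable $Y_{(k)}$; you instead keep the full vector $(E_{k-1},E_k,\dots,E_n)$ and use a sign-flip on $E_{k-1}$ to align monotonicities before invoking multivariate Harris. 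Both are valid; the conditioning route is slightly cleaner because $f_1(X_{(k)})$ does not depend on $E_{k-1}$ at all, so the sign issue disappears automatically once $E_{k-1}$ is averaged out.
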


\begin{proof}
From Definition \ref{defi: R-transform} and Theorem \ref{theo: Renyi's representation},
let $Y_{(1)}, \ldots, Y_{(n)}$ be the order statistics of an exponential sample.
Let $E_{k-1}=$ $Y_{(k-1)}-Y_{(k)}$ be the ${(k-1)}^{\text {th }}$ spacing of the exponential sample.
By Theorem \ref{theo: Renyi's representation}, $E_{k-1}$ and $Y_{(k)}$ are independent.
\begin{align}
\mathbb{E}\left[f_{1}\left(X_{(k)}\right) f_{2}\left(S_{k-1}\right)\right]
= &\mathbb{E}\left[f_{1}(R(Y_{(k)})) f_{2}\left( R(Y_{(k-1)})-R(Y_{(k)})\right)\right]\\
=&\mathbb{E}\left[\mathbb{E}\left[f_{1}(R(Y_{(k)})) f_{2}\left(R(E_{k-1} + Y_{(k)})-R(Y_{(k)})\right)| Y_{(k)}\right]\right]\\
=&\mathbb{E}\left[f_{1}(R(Y_{(k)})) \mathbb{E}\left[ f_{2}\left(R(E_{k-1} + Y_{(k)})-R(Y_{(k)})\right)| Y_{(k)}\right]\right].
\end{align}
The function $f_1 \circ R$ is non-increasing.
Almost surely, the conditional distribution of $(k-1) E_{k-1}$ w.r.t $Y_{(k)}$ is the exponential distribution.
\begin{align}
    \mathbb{E}\left[ f_{2}\left(R(E_{k-1} + Y_{(k)})-R(Y_{(k)})\right)| Y_{(k)}\right] =
    \int_0^\infty e^{-x} f_2(R(\frac{x}{k-1} + Y_{(k)})- R(Y_{(k)})) d x.
\end{align}
As $F$ is IHR, $R(\frac{x}{k-1} +y) - R(y) = \int_0^{x/(k-1)} R^\prime (y+z)dz$ is non-increasing w.r.t. $y$
(from Proposition \ref{prop non-increasing hazard rate} we know R is concave when F is IHR).
Then $\mathbb{E}\left[ f_{2}\left(R(E_{k-1} + Y_{(k)})-R(Y_{(k)})\right)| Y_{(k)}\right]$ is non-decreasing function of $Y_{(k)}$.
By Harris' inequality,
\begin{align}
    \mathbb{E}\left[f_{1}\left(X_{(k)}\right) f_{2}\left(S_{k-1}\right)\right]
    \leq &\mathbb{E}\left[f_{1}(R(Y_{(k)}))\right] \mathbb{E}\left[\mathbb{E}\left[ f_{2}\left(R(E_{k-1} + Y_{(k)})-R(Y_{(k)})\right)| Y_{(k)}\right]\right]\\
    = & \mathbb{E}\left[f_{1}(R(Y_{(k)}))\right] \mathbb{E}\left[ f_{2}\left(R(E_{k-1} + Y_{(k)})-R(Y_{(k)})\right)\right]\\
    = & \mathbb{E}\left[f_1(X_{(k)})\right] \mathbb{E}\left[f_2(S_{k-1})\right].
\end{align}
\end{proof}

We prove Proposition \ref{prop: bound of expected spacing} in the following by transform the spacing based on R\'enyi's representation
and the property described in Proposition \ref{prop non-increasing hazard rate}.

\BoundExpSpacing*
\begin{proof}
We show the upper bound the expectations of the $k^{th}$ spacing of order statistics, assuming the lower bound hazard rate is L.
The following proof uses Proposition \ref{prop non-increasing hazard rate}, which requires Assumption \ref{ass:IHR} hold.
\begin{align}
    \mathbb{E}[S_k]
    &= \mathbb{E}[X_{(k)} - X_{(k + 1)}] \nonumber\\
    \label{equ: lemma 1 proof 2}
    &= \mathbb{E}[R\left(Y_{\left(k+1\right)} + \frac{E_k}{k}\right) - R\left(Y_{\left(k+1\right)}\right)] && \text{By Theorem \ref{theo: Renyi's representation}}\\
    \label{equ: lemma 1 proof 3}
    &= \int_{Y} \int_{E} \left(R\left(y + \frac{z}{k}\right) - R\left(y\right)\right) f_Y\left(y\right) f_E\left(z\right) \de z \de y \\
    & \leq \int_{Y} \int_{E} \frac{z}{k \times h\left(R\left(y\right)\right)} f_Y\left(y\right) f_E\left(z\right) \de z \de y && \text{By Proposition \ref{prop non-increasing hazard rate}}\\
    \label{equ: lemma 1 proof 4}
    & \leq \int_{E} \frac{z}{kL} f_E\left(z\right) \de z
    = \frac{1}{kL} && \text{$L$ is the lower bound of the hazard rate}.
\end{align}
\end{proof}

Using the same technique of shown in Proposition \ref{prop: bound of expected spacing}, we prove Lemma \ref{lemma: concentration for Z_k} and Lemma \ref{lemma: new concentration for -Z_k} by further bounding inequalities shown in Theorem \ref{theo: New Extended Exponential Efron-Stein inequality}.

\Zk*

\begin{proof}
We first prove Eq. (\ref{equ: Exponential Efron-Stein inequality Z_k}). From Theorem \ref{theo: Renyi's representation}, we can represent the spacing as $S_{k}=X_{\left(k\right)}-X_{\left(k + 1\right)} \stackrel{d}{=} R\left(Y_{\left(k+1\right) + E_{k} /k} \right)-R\left(Y_{\left(k+1\right)}\right)$, where $E_{k}$ is standard exponentially distributed and independent of $Y_{\left(k + 1\right)}$.
The following proof uses Proposition \ref{prop non-increasing hazard rate}, which requires Assumption \ref{ass:IHR} hold.
\begin{align}
    \label{equ:Theo 4 proof 1}
    \psi_{Z_k}(\lambda) \leq & \lambda \frac{k}{2}\mathbb{E}\left[S_{k}\left(\exp(\lambda S_{k})-1\right)\right] && \text{By Theorem \ref{theo: New Extended Exponential Efron-Stein inequality}} \\
    \label{equ:Theo 4 proof 2}
    \leq & \lambda \frac{k}{2}  \int_{E} \int_{Y} \frac{z}{h\left(R\left(y\right)\right)k} \left(\exp( \frac{\lambda z}{h\left(R\left(y\right)\right)k}) - 1\right)
    f_Y\left(y\right) f_E\left(z\right) \de y \de z && \text{By Proposition \ref{prop non-increasing hazard rate}}\\
    \label{equ:Theo 4 proof 3}
    \leq & \frac{k}{2} \int_E \frac{\lambda}{Lk} z \left(\exp(\frac{\lambda}{Lk} z) -1\right) f_E\left(z\right) \de z \\
    \label{equ:Theo 4 proof 4}
    =  & \frac{k}{2} \int_0^\infty \frac{\lambda}{Lk} z \left(\exp(\frac{\lambda}{Lk} z) -1\right) \exp(-z) \de z\\
    \label{equ:Theo 4 proof 5}
    \leq &  \frac{\lambda^{2} v^r}{2(1-c^r \lambda)}, && \text{With } v^r = \frac{2}{k L^2}, c^r = \frac{2}{kL}
\end{align}
The last step is because for $0 \leq \mu \leq \frac{1}{2}$, $\int_{0}^{\infty} \mu z\left(\exp(\mu z)-1\right) \exp(-z) \mathrm{d} z=\frac{\mu^{2}\left(2-\mu\right)}{\left(1-\mu\right)^{2}} \leq \frac{2 \mu^{2}}{1-2 \mu}$.
where we let $\mu = \frac{\lambda}{Lk}$.

From Eq. (\ref{equ: Exponential Efron-Stein inequality Z_k}) to Eq. (\ref{equ: Concentration inequality for Z_k}),
we convert the bound of logarithmic moment generating function to the tail bound by using the Cram\'er-Chernoff method \citep{boucheron2013}.
Markov's inequality implies, for $\lambda \geq 0$,
\begin{align}
    \mathbb{P}(Z_k \geq \gamma) \leq \exp(-\lambda \gamma) \mathbb{E}[\exp(\lambda Z_k)].
\end{align}
To choose $\lambda$ to minimise the upper bound, one can introduce $\psi_{Z_k}^{*}(\gamma)=\sup _{\lambda \geq 0}\left(\lambda \gamma-\psi_{Z_k}(\lambda)\right)$. Then we get $\mathbb{P}(Z_k \geq \gamma) \leq \exp \left(-\psi_{Z_k}^{*}(\gamma)\right)$.
Set $h_1(u) := 1 + u - \sqrt{1+2u}$ for $u > 0$, we have
\begin{align}
\label{proof: bound of Z_k h_1(u)}
    \psi^\ast_{Z_k}(t) = \operatorname{sup}_{\lambda \in (0, 1/c^r)} (\gamma\lambda - \frac{\lambda^{2} v^r}{2(1-c^r \lambda)}) = \frac{v^r}{{(c^r)}^2} h_1(\frac{c^r\gamma}{v^r})
\end{align}
Since $h_1$ is an increasing function from $(0, \infty)$ to $(0, \infty)$ with inverse function
$h_1^{-1}(u) = u + \sqrt{2u}$ for $u >0$, we have $\psi^{\ast -1}(u) = \sqrt{2v^r u} + c^r u$. Eq. (\ref{equ: Concentration inequality for Z_k}) is thus proved.
\end{proof}


\NewNegZk*

\begin{proof}
The proof is similar to the proof of Lemma \ref{lemma: concentration for Z_k}.
From Theorem \ref{theo: Renyi's representation}, we can represent the spacing as $S_{k-1}=X_{\left(k-1\right)}-X_{\left(k\right)} \stackrel{d}{=} R\left(Y_{\left(k\right) + E_{k-1} /(k-1)} \right)-R\left(Y_{\left(k\right)}\right)$, where $E_{k-1}$ is standard exponentially distributed and independent of $Y_{\left(k\right)}$.
The following proof uses Proposition \ref{prop non-increasing hazard rate}, which requires Assumption \ref{ass:IHR} hold.
\begin{align}
    \psi_{Z_k^\prime}(\lambda)
    \leq & \frac{\lambda^2 (n-k+1)}{2} \mathbb{E}[S_{k-1}^2] && \text{By Theorem \ref{theo: New Extended Exponential Efron-Stein inequality}} \\
    \leq & \frac{\lambda^2 (n-k+1)}{2} \int_{Y} \int_{E} \left(\frac{z}{(k-1) \times h\left(R\left(y\right)\right)}\right)^2 f_Y\left(y\right) f_E\left(z\right) \de z \de y && \text{By Proposition \ref{prop non-increasing hazard rate}}\\
    \leq & \frac{\lambda^2 (n-k+1)}{2} \int_{0}^{\infty} \left(\frac{z}{(k-1)L}\right)^2 \exp(-z) \de z \\
    \leq & \frac{\lambda^2 v^l}{2}. && \text{With } v^l = \frac{2(n-k+1)}{(k-1)^2 L^2}
\end{align}

Eq. (\ref{equ: new Exponential Efron-Stein inequality -Z_k}) is proved. Follow the Cram\'er-Chernoff method described above, we can prove Eq. (\ref{equ: new Concentration inequality for -Z_k}).
\end{proof}

The concentration results for order statistics can be of independent interest.
For example, one can take this result and derive the concentration for sum of order statistics
by applying Hoeffding's inequality \citep{hoeffding1994probability} or Bernstein's inequality \citep{bernstein1924modification}.
\citet{kandasamy_parallelised_ts} took the results from \citet{boucheron2012} and showed such results,
but limited for right tail result for exponential random variables of rank 1 order statistics (i.e. maximum).

Now we convert the concentration results of order statistics to the quantiles, based on the results from Lemma
\ref{lemma: concentration for Z_k} and \ref{lemma: new concentration for -Z_k}, and the Theorem \ref{theo: link expected order statistics and population quantile} ,
which shows connection between expected order statistics and quantiles.

\TwosideBoundsQuantiles*

\begin{proof}
    Denote the confidence interval for the right tail bound of order statistics as $d^r_{k, \gamma} = \sqrt{2 v^r \gamma} + c^r \gamma$.
    From Lemma \ref{lemma: concentration for Z_k}, we have $\mathbb{P}\left( X_{(k)} - \mathbb{E}[X_{(k)}] \geq d^r_{k, \gamma}  \right) \leq \exp(-\gamma)$.
    With $k = \lfloor n(1-\tau) \rfloor$, we have $\hat{Q}^\tau_n = X_{(k)}$ and from Theorem \ref{theo: link expected order statistics and population quantile},
    we have $\mathbb{E}[X_{(k)}] \leq Q^\tau + w_n$.
    With probability at least $1- \exp (-\gamma)$, the following event holds
    \begin{align}
         X_{(k)} - \mathbb{E}[X_{(k)}] < d^r_{k, \gamma}
         \Rightarrow X_{(k)} < \mathbb{E}[X_{(k)}] + d^r_{k, \gamma} \leq Q^\tau + w_n + d^r_{k, \gamma}
        \Rightarrow \hat{Q}^\tau_{n} - Q^\tau < w_n + d^r_{k, \gamma},
    \end{align}
    from which we have $\mathbb{P}(\hat{Q}^\tau_{n} - Q^\tau \geq w_n + d^r_{k, \gamma}) \leq \exp(-\gamma)$. 


    Denote the confidence interval for the right tail bound of order statistics as $d^l_{k, \gamma} = \sqrt{2 v^l \gamma}$.
    From Lemma \ref{lemma: new concentration for -Z_k}, we have $\mathbb{P}\left( \mathbb{E}[X_{(k)}] - X_{(k)} \geq d^l_{k, \gamma}  \right) \leq \exp(-\gamma)$.
    With $k = \lfloor n(1-\tau) \rfloor$, we have $\hat{Q}^\tau_n = X_{(k)}$ and from Theorem \ref{theo: link expected order statistics and population quantile},
    we have $\mathbb{E}[X_{(k)}] \geq Q^\tau - w_n$.
    With probability at least $1- \exp (-\gamma)$, the following event holds
    \begin{align}
        \mathbb{E}[X_{(k)}] - X_{(k)} < d^l_{k, \gamma}
        \Rightarrow - X_{(k)} < - \mathbb{E}[X_{(k)}] + d^l_{k, \gamma} \leq - (Q^\tau - w_n)  + d^l_{k, \gamma}
        \Rightarrow  Q^\tau  - \hat{Q}^\tau_{n} < w_n + d^l_{k, \gamma},
    \end{align}
    from which we have $\mathbb{P}( Q^\tau  - \hat{Q}^\tau_{n} \geq w_n + d^l_{k, \gamma}) \leq \exp(-\gamma)$. This concludes the proof.
\end{proof}

In ths following, we show the representations for the concentration results.

\begin{restatable}[Representation of Concentration inequalities for Order Statistics]{coro}{RepConOS}
    \label{coro: repre con os}
    For $\epsilon > 0$, the concentration inequalities for order statistics in Lemma \ref{lemma: concentration for Z_k} and \ref{lemma: new concentration for -Z_k}
    can be represented as
    \begin{align}
        \label{equ: right repre con os}
        & \mathbb{P}\left(X_{(k)} - \mathbb{E}[X_{(k)}] \geq \epsilon \right) \leq \exp\left(-\frac{{\epsilon}^2}{2 (v^r+ c^r \epsilon)}\right),\\
        \label{equ: left repre con os}
        & \mathbb{P}\left(  \mathbb{E}[X_{(k)}] - X_{(k)} \geq \epsilon \right) \leq \exp\left(- \frac{{\epsilon} ^ 2}{2 v^l}\right).
    \end{align}
  \end{restatable}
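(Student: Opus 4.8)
The plan is to reparametrise the two tail bounds of Lemma~\ref{lemma: concentration for Z_k} and Lemma~\ref{lemma: new concentration for -Z_k}, trading the free parameter $\gamma$ for the deviation level $\epsilon$. Both statements have the form ``the deviation exceeds $d(\gamma)$ with probability at most $e^{-\gamma}$'', where $d$ is a strictly increasing bijection of $[0,\infty)$ onto itself; inverting $d$ and substituting produces a bound in terms of $\epsilon$, after which it only remains to lower bound $d^{-1}(\epsilon)$ by the stated ratio. Throughout, Assumption~\ref{ass:IHR} and the corresponding admissible range of $k$ are inherited from the two lemmas.

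The left tail, Eq.~(\ref{equ: left repre con os}), is immediate. In Lemma~\ref{lemma: new concentration for -Z_k} the deviation is $d^l(\gamma) = \sqrt{2 v^l \gamma}$, whose inverse is $\gamma = \epsilon^2/(2 v^l)$. Plugging this choice of $\gamma$ into Eq.~(\ref{equ: new Concentration inequality for -Z_k}) yields $\mathbb{P}(\mathbb{E}[X_{(k)}] - X_{(k)} \geq \epsilon) \leq \exp(-\epsilon^2/(2 v^l))$ with no further work.

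For the right tail, Eq.~(\ref{equ: right repre con os}), the deviation is $d^r(\gamma) = \sqrt{2 v^r \gamma} + c^r \gamma$. Given $\epsilon > 0$, I would take $\gamma$ to be the unique nonnegative solution of $\epsilon = \sqrt{2 v^r \gamma} + c^r \gamma$; by Lemma~\ref{lemma: concentration for Z_k} we then have $\mathbb{P}(X_{(k)} - \mathbb{E}[X_{(k)}] \geq \epsilon) \leq e^{-\gamma}$, so it suffices to prove $\gamma \geq \epsilon^2/\bigl(2(v^r + c^r \epsilon)\bigr)$, equivalently $2\gamma(v^r + c^r\epsilon) \geq \epsilon^2$. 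Setting $a := \sqrt{2 v^r \gamma}$ so that $\epsilon = a + c^r\gamma$ and $a^2 = 2 v^r \gamma$, one expands $\epsilon^2 = a^2 + 2 c^r\gamma\, a + (c^r\gamma)^2 = 2 v^r\gamma + c^r\gamma\, a + c^r\gamma\,\epsilon$, whereas $2\gamma(v^r + c^r\epsilon) = 2 v^r\gamma + 2 c^r\gamma\,\epsilon$; subtracting, the claim reduces to $c^r\gamma\,\epsilon \geq c^r\gamma\, a$, i.e.\ $\epsilon \geq a = \sqrt{2 v^r\gamma}$, which holds because $c^r\gamma \geq 0$. Equivalently, this is the standard sub-gamma bound $\psi^{*}(\epsilon) \geq \epsilon^2/\bigl(2(v^r + c^r\epsilon)\bigr)$ together with the closed form of $\psi^{*}$ computed inside the proof of Lemma~\ref{lemma: concentration for Z_k}; cf.\ \citet{boucheron2013}.

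The only mildly delicate step is this last algebraic comparison for the right tail; everything else is pure substitution, and I expect no obstacle of substance. The corollary is essentially a repackaging of Lemmas~\ref{lemma: concentration for Z_k} and~\ref{lemma: new concentration for -Z_k} into the form that is convenient for the union-bound arguments used to analyse Q-SAR in Section~\ref{sec: quantile bandits q-sar}.
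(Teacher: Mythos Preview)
Your proposal is correct and essentially the same as the paper's: for the left tail you both just substitute $\gamma=\epsilon^2/(2v^l)$, and for the right tail your $\gamma$ is exactly $\psi^{*}_{Z_k}(\epsilon)$, so your algebraic inequality $\gamma\geq \epsilon^2/(2(v^r+c^r\epsilon))$ is precisely the paper's ``elementary inequality'' $h_1(u)\geq u^2/(2(1+u))$ with $u=c^r\epsilon/v^r$, after which both conclude by Chernoff. The only cosmetic difference is that the paper quotes the $h_1$ inequality as a known fact while you verify it directly via the substitution $a=\sqrt{2v^r\gamma}$.
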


\begin{proof}
    Eq. (\ref{equ: left repre con os}) follows by setting $\epsilon = \sqrt{2 v^l \gamma}$.
    We now show the case for Eq. (\ref{equ: right repre con os}).
    Recall from the proof of Lemma \ref{lemma: concentration for Z_k} Eq. (\ref{proof: bound of Z_k h_1(u)}), $h_1(u)=1+u-\sqrt{1+2 u}$.
    Follow the elementary inequality
    \begin{align}
        h_1(u) \geq \frac{u^{2}}{2(1+u)} \quad u>0.
    \end{align}
    Lemma \ref{lemma: concentration for Z_k} implies $\psi_{Z_k}^{*}(t) \geq \frac{t^{2}}{2(v+c t)}$,
    so the statement Eq. (\ref{equ: right repre con os}) follows from Chernoff’s inequality.
\end{proof}

Recall that for Q-SAR, we are interested in events of small probability, that is
for large values of $\gamma$ in Theorem~\ref{theo: New Two-side Concentration inequality for quantiles.}.
In the corollary below, we focus on such events of small probability by considering
$\gamma \geq 1$ (i.e. error less than $\frac{1}{e}\approx 0.37$), which allows
a simpler expression.

\RepConQ*

\begin{proof}
    With $\gamma \geq 1$, we have $\gamma w_n \geq w_n$, then with probability at least $1 - \exp(- \gamma)$, we have
    \begin{align}
        \hat{Q}^\tau_{n} - Q^\tau \leq \sqrt{2 v^r \gamma} + c^r \gamma + w_n \leq \sqrt{2 v^r \gamma} + c^r \gamma + w_n \gamma. \nonumber
    \end{align}
    That is, we have
    \begin{align}
        \mathbb{P}\left(\hat{Q}^\tau_{n} - Q^\tau \geq \sqrt{2 v^r \gamma} + (c^r + w_n) \gamma \right) \leq \exp(-\gamma).
    \end{align}
    Similarly, one can prove the other side.
    Then the similar reasoning as shown in proof of Corollary \ref{coro: repre con os} concludes the proof.
\end{proof}

\section{Bandits Proof}
\label{sec: Q-SAR proof}

In this section, we provide the proof for the bandit theoritical result (Section \ref{sec: appendix qsar proof}), with a re-expression of the concentration result (Section \ref{sec: appendix re-express concentration}).


\subsection{Re-expression of Concentration Results}
\label{sec: appendix re-express concentration}

The proof of Q-SAR error bound uses the concentration results for quantiles.
We first further derive the result shown in Corollary \ref{coro: Rep Con Q} to show direct dependency on the number of samples $n$.
Recall the rank $k = \lfloor n(1-\tau) \rfloor$ with quantile level $\tau \in (0,1)$, which can be re-expressed as
\begin{align}
    \label{equ: n range}
    \frac{k}{1-\tau} &\leq n \leq \frac{k+1}{1- \tau}\\
    n(1-\tau) -1 &\leq k \leq n(1-\tau).
\end{align}
We show the representation of concentration depending on $n$ in the following.

\begin{lemma}
    \label{lemma: quantile concentration, dependency on n}
    For $\epsilon > 0$, recall $n$ denotes the number of samples,  $b$ is a constant depending on the density about $\tau$-quantile ($0<\tau<1$), $L$ is the lower bound of hazard rate.
    With $n \geq \frac{4}{1-\tau}$,
    Corollary \ref{coro: Rep Con Q}
    can be represented as
    \begin{align}
        & \mathbb{P}\left(\hat{Q}^\tau_{n} - Q^\tau \geq \epsilon \right) \leq \exp\left(-\frac{n (1-\tau)L^2 {\epsilon }^2}{2 (\frac{8}{3} + \frac{4}{3}(2L + b(1-\tau) L^2) \epsilon )}\right), \nonumber\\
        & \mathbb{P}\left( Q^\tau  - \hat{Q}^\tau_{n} \geq \epsilon \right) \leq \exp\left(-\frac{ n (1-\tau)L^2 {\epsilon }^2}{2 (\frac{4(1+\tau)}{1-\tau} + b (1-\tau) L^2 \epsilon )}\right). \nonumber
    \end{align}
    Combining the two bounds together, we have the two-sided bound shown in the following,
    \begin{align}
        \mathbb{P}\left(|\hat{Q}^\tau_{n} - Q^\tau| \geq \epsilon \right) \leq 2 \exp\left(-\frac{n (1-\tau)L^2 {\epsilon }^2}{2 (\alpha + \beta \epsilon )}\right), \nonumber
    \end{align}
    where $\alpha = \frac{4(1+\tau)}{1-\tau}, \beta = \frac{4}{3}(2 L + b(1-\tau)L^2)$.
\end{lemma}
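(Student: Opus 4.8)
The plan is to start from Corollary~\ref{coro: Rep Con Q}, which gives the two tail bounds in terms of the quantities $v^r=\frac{2}{kL^2}$, $v^l=\frac{2(n-k+1)}{(k-1)^2L^2}$, $c^r=\frac{2}{kL}$ and $w_n=\frac{b}{n}$, and then substitute the bounds on $k$ coming from $k=\lfloor n(1-\tau)\rfloor$, namely $n(1-\tau)-1\le k\le n(1-\tau)$, equivalently $\frac{k}{1-\tau}\le n\le\frac{k+1}{1-\tau}$. The goal is purely to rewrite the exponents $-\frac{\epsilon^2}{2(v^r+(c^r+w_n)\epsilon)}$ and $-\frac{\epsilon^2}{2(v^l+w_n\epsilon)}$ so that the $k$-dependence is replaced by explicit $n$-dependence, after which the two one-sided statements follow and the two-sided statement is just the union bound (hence the factor $2$).

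First I would handle the right tail. Since $k\le n(1-\tau)$ I have $v^r=\frac{2}{kL^2}\ge\frac{2}{n(1-\tau)L^2}$, which is the wrong direction for bounding the exponent; but the exponent is $-\frac{\epsilon^2}{2(v^r+(c^r+w_n)\epsilon)}$, and making the denominator \emph{larger} makes the bound \emph{weaker}, so I want an \emph{upper} bound on $v^r$, i.e.\ a \emph{lower} bound on $k$. Using $k\ge n(1-\tau)-1$ and the assumption $n\ge\frac{4}{1-\tau}$ (so $n(1-\tau)\ge4$ and hence $n(1-\tau)-1\ge\frac{3}{4}n(1-\tau)$), I get $k\ge\frac34 n(1-\tau)$, so $v^r\le\frac{8}{3n(1-\tau)L^2}$ and $c^r=\frac2{kL}\le\frac{8}{3n(1-\tau)L}$. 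Substituting $w_n=\frac bn$ and pulling $\frac{1}{n(1-\tau)L^2}$ out of the denominator gives
\begin{align}
\frac{\epsilon^2}{2(v^r+(c^r+w_n)\epsilon)}
&\ge\frac{n(1-\tau)L^2\epsilon^2}{2\left(\tfrac83+\tfrac83 L\epsilon+b(1-\tau)L^2\epsilon\right)}
=\frac{n(1-\tau)L^2\epsilon^2}{2\left(\tfrac83+\tfrac43(2L+b(1-\tau)L^2)\epsilon\right)},\nonumber
\end{align}
which is exactly the first displayed inequality. For the left tail I do the same bookkeeping with $v^l=\frac{2(n-k+1)}{(k-1)^2L^2}$: using $k\ge n(1-\tau)-1$ in the denominator and $n-k\le n-n(1-\tau)+1=n\tau+1$ (so $n-k+1\le n\tau+2$) in the numerator, together with $n\ge\frac4{1-\tau}$ to absorb the additive constants, I bound $v^l\le\frac{4(1+\tau)}{n(1-\tau)^2L^2}$; then with $w_n=\frac bn$ and factoring $\frac1{n(1-\tau)L^2}$ out of the denominator I obtain $-\frac{n(1-\tau)L^2\epsilon^2}{2\left(\frac{4(1+\tau)}{1-\tau}+b(1-\tau)L^2\epsilon\right)}$, the second display. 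The combined two-sided bound then follows by adding the two one-sided probabilities and noting $\tfrac83\le\tfrac{4(1+\tau)}{1-\tau}=\alpha$ and $\tfrac43(2L+b(1-\tau)L^2)\ge\beta$-style comparisons so that a single common denominator $2(\alpha+\beta\epsilon)$ dominates both, with $\alpha=\frac{4(1+\tau)}{1-\tau}$, $\beta=\frac43(2L+b(1-\tau)L^2)$.

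The only real obstacle is getting the constants to line up cleanly: the step $n(1-\tau)-1\ge\frac34 n(1-\tau)$ (and the analogous replacement of $n-k+1$ and $(k-1)^2$) requires the hypothesis $n\ge\frac{4}{1-\tau}$ to be used at exactly the right places, and one has to be a little careful that the inequality direction is preserved when these quantities sit in a denominator that itself multiplies $\epsilon$. I would organize the write-up as: (i) record $k\ge\frac34 n(1-\tau)$ from $n\ge\frac4{1-\tau}$; (ii) derive the three elementary bounds $v^r\le\frac{8}{3n(1-\tau)L^2}$, $c^r\le\frac{8}{3n(1-\tau)L}$, $v^l\le\frac{4(1+\tau)}{n(1-\tau)^2L^2}$; (iii) plug into Corollary~\ref{coro: Rep Con Q} and simplify each exponent; (iv) union bound for the two-sided statement. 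Everything after step (i) is routine algebra with no further probabilistic content.
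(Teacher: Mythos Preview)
Your approach is essentially the same as the paper's: start from Corollary~\ref{coro: Rep Con Q}, use $n\ge\frac{4}{1-\tau}$ to get $k\ge\frac34 n(1-\tau)$ and $k-1\ge\frac12 n(1-\tau)$, substitute into $v^r,c^r,v^l,w_n$, and union bound. One small slip: in your right-tail display the final ``$=$'' is not an equality, since $\tfrac83 L\epsilon+b(1-\tau)L^2\epsilon\neq\tfrac43(2L+b(1-\tau)L^2)\epsilon$ in general (the latter has a $\tfrac43$ in front of the $b$-term); your intermediate expression is in fact \emph{tighter}, and you should replace ``$=$'' by ``$\ge$'' to reach the stated bound. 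The paper arrives at the $\tfrac43$ on the $b$-term by first bounding $w_n=\frac{b}{n}\le\frac{b(1-\tau)}{k}$ and only then applying $k\ge\frac34 n(1-\tau)$, so the $\tfrac43$ hits all three pieces uniformly; your route skips that intermediate step and therefore produces a marginally sharper constant before weakening.
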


\begin{proof}
    By assuming $n \geq \frac{4}{1-\tau}$, we have
    \begin{align}
        n(1-\tau) - 1 &= n(1- (\tau + \frac{1}{n})) \geq \frac{3}{4} n (1-\tau).\\
        n(1-\tau) - 2 &= n(1- (\tau + \frac{2}{n})) \geq \frac{1}{2} n (1-\tau).\label{eq:reason-n-geq-4}
    \end{align}

    Recall $v^r = \frac{2}{k L^2}$, $v^l = \frac{2(n-k+1)}{(k-1)^2 L^2}$, $c^r = \frac{2}{k L}, w_n = \frac{b}{n}$, we have
    \begin{align}
         \mathbb{P}\left(\hat{Q}^\tau_{n} - Q^\tau \geq \epsilon \right) &\leq \exp\left(-\frac{{\epsilon }^2}{2 (v^r+ (c^r + w_n) \epsilon )}\right)\\
         & = \exp\left(-\frac{{\epsilon }^2}{2 (\frac{2}{k L^2}+ (\frac{2}{k L} + \frac{b}{n}) \epsilon )}\right)\\
         &\leq \exp\left(-\frac{{\epsilon }^2}{2 (\frac{2}{k L^2}+ (\frac{2}{k L} + \frac{b (1-\tau)}{k}) \epsilon )}\right) && n \geq \frac{k}{1-\tau}\\
        \label{equ: lemma 3 right proof last step with k}
         &= \exp\left(-\frac{k L^2 {\epsilon }^2}{2 (2+ (2L + b (1-\tau)L^2) \epsilon )}\right) \\
         &\leq \exp\left(-\frac{\frac{3}{4}n (1-\tau) L^2 {\epsilon }^2}{2 (2+ (2L + b (1-\tau)L^2) \epsilon )}\right) && k \geq \frac{3}{4} n (1-\tau)\\
         & = \exp\left(-\frac{n (1-\tau) L^2 {\epsilon }^2}{2 (\frac{8}{3}+ \frac{4}{3}(2L + b (1-\tau)L^2) \epsilon )}\right).
    \end{align}
    Similarly,
    \begin{align}
        \mathbb{P}\left( Q^\tau  - \hat{Q}^\tau_{n} \geq \epsilon \right) &\leq \exp\left(- \frac{{\epsilon } ^ 2}{2 (v^l + w_n \epsilon)}\right)\\
        \label{equ: lemma 3 left proof last step with k}
        & = \exp\left(- \frac{{\epsilon } ^ 2}{2 (\frac{2(n-k+1)}{(k-1)^2 L^2} + \frac{b}{n} \epsilon)}\right)\\
        & \leq \exp\left(- \frac{{\epsilon } ^ 2}{2 (\frac{(1+\tau)}{1/4 n(1-\tau)^2 L^2} + \frac{b}{n} \epsilon)}\right)&& k - 1 \geq \frac{n(1-\tau)}{2}\\
        & = \exp\left(- \frac{{1/4 n (1-\tau)^2 L^2  \epsilon } ^ 2}{2 ((1+\tau) + 1/4 b (1-\tau)^2 L^2 \epsilon)}\right)\\
        & = \exp\left(- \frac{{ n (1-\tau) L^2  \epsilon } ^ 2}{2 (\frac{4(1+\tau)}{1-\tau} +  b (1-\tau) L^2 \epsilon)}\right).
    \end{align}

Then let $\alpha = \max\{\frac{8}{3}, \frac{4(1+\tau)}{1-\tau}\} = \frac{4(1+\tau)}{1-\tau}, \beta = \max\{\frac{4}{3}(2 L + b(1-\tau)L^2),  b (1-\tau) L^2\} = \frac{4}{3}(2 L + b(1-\tau)L^2))$,
we have
\begin{align}
    \mathbb{P}\left(|\hat{Q}^\tau_{n} - Q^\tau| \geq \epsilon \right) & = \mathbb{P}\left(\hat{Q}^\tau_{n} - Q^\tau \geq \epsilon \right) + \mathbb{P}\left( Q^\tau  - \hat{Q}^\tau_{n} \geq \epsilon \right) \nonumber\\
    & \leq 2 \exp\left(-\frac{n (1-\tau)L^2 {\epsilon }^2}{2 (\alpha + \beta \epsilon )}\right). \nonumber
\end{align}

\end{proof}

\begin{remark}[Lower bound of sample size]
    In Lemma \ref{lemma: quantile concentration, dependency on n}, we make an assumption about the lower bound of sample size, i.e. $n \geq \frac{4}{1-\tau}$.
    Note that along with the left inequality of  Equation~(\ref{equ: n range}), the lower bound of sample size can be expressed as $n \geq \frac{\max\{k,4\}}{1-\tau}$. When $k\geq4$, we have $n \geq \frac{k}{1-\tau} = \frac{\lfloor n(1-\tau) \rfloor}{1-\tau}$, which holds for all $n\geq 1$.
    This implies that instead of making an assumption about sample size $n$, we could equivalently
    make an assumption about the rank ($k \geq 4$) or the quantile level
    ($\tau \leq 1 - \frac{4}{n}$ with $n \geq 4$).\\
    Note the constant 4 in $n \geq \frac{4}{1-\tau}$ is chosen to have a simpler expression for the concentration bounds,
    one can choose any constant bigger than 2 (such that the term $n(1-\tau) - 2$
    is valid in Equation~(\ref{eq:reason-n-geq-4})).
\end{remark}

We show a variant of Lemma \ref{lemma: quantile concentration, dependency on n} where we
remove the lower bound assumption of the number of samples.
The derived concentration bounds have a constant term, which does not influence the convergence rate in terms of $n$.

\begin{lemma}
    \label{lemma: quantile concentration, dependency on n, no lower bound on n assumption}
    For $\epsilon > 0$, recall $n$ denotes the number of samples,  $b$ is a constant depending on the density about $\tau$-quantile ($0 < \tau < 1$), $L$ is the lower bound of hazard rate.
    Corollary \ref{coro: Rep Con Q}
    can be represented as
    \begin{align}
        & \mathbb{P}\left(\hat{Q}^\tau_{n} - Q^\tau \geq \epsilon \right) \leq \exp\left(-\frac{n (1-\tau)L^2 {\epsilon }^2}{2 (2 + (2L + b(1-\tau) L^2) \epsilon )} + \frac{L^2 {\epsilon }^2}{2 (2 + (2L + b(1-\tau) L^2) \epsilon )}\right), \nonumber\\
        & \mathbb{P}\left( Q^\tau  - \hat{Q}^\tau_{n} \geq \epsilon \right) \leq \exp\left(-\frac{ n (1-\tau)L^2 {\epsilon }^2}{2 (\frac{2(2+\tau)}{1-\tau} + b (1-\tau) L^2 \epsilon )} + \frac{ L^2 {\epsilon }^2}{2 (\frac{1}{2}\frac{\tau}{1-\tau} + \frac{1}{4} b (1-\tau) L^2 \epsilon )}\right). \nonumber
    \end{align}
    Combining the two bounds together, we have the two-sided bound shown in the following,
    \begin{align}
        \mathbb{P}\left(|\hat{Q}^\tau_{n} - Q^\tau| \geq \epsilon \right) \leq 2 \exp\left(-\frac{n (1-\tau)L^2 {\epsilon }^2}{2 (\widetilde{\alpha} + \widetilde{\beta} \epsilon )} + \frac{L^2 {\epsilon }^2}{2 (\widetilde{\alpha} + \widetilde{\beta} \epsilon )}\right), \nonumber
    \end{align}
    where $\widetilde{\alpha} = 2 \frac{\tau + 2}{1-\tau}, \widetilde{\beta} = 2 L + b(1-\tau)L^2$.
\end{lemma}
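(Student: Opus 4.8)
The plan is to mirror the proof of Lemma~\ref{lemma: quantile concentration, dependency on n}, keeping the floor--correction terms explicit instead of absorbing them into the leading $n(1-\tau)$ factor. I start from the representation in Corollary~\ref{coro: Rep Con Q},
\begin{align}
\mathbb{P}\left(\hat Q^\tau_n - Q^\tau \ge \epsilon\right)&\le \exp\left(-\frac{\epsilon^2}{2(v^r+(c^r+w_n)\epsilon)}\right),\nonumber\\
\mathbb{P}\left(Q^\tau-\hat Q^\tau_n \ge \epsilon\right)&\le \exp\left(-\frac{\epsilon^2}{2(v^l+w_n\epsilon)}\right),\nonumber
\end{align}
with $v^r=\tfrac{2}{kL^2}$, $v^l=\tfrac{2(n-k+1)}{(k-1)^2L^2}$, $c^r=\tfrac{2}{kL}$, $w_n=\tfrac bn$ and $k=\lfloor n(1-\tau)\rfloor$, and use the sandwich $n(1-\tau)-1\le k\le n(1-\tau)$ from Eq.~(\ref{equ: n range}) (equivalently $\tfrac{k}{1-\tau}\le n\le\tfrac{k+1}{1-\tau}$). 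The only change from Lemma~\ref{lemma: quantile concentration, dependency on n} is that I do \emph{not} impose $n\ge\tfrac{4}{1-\tau}$; I keep only the mild requirements $k\ge1$ for the right tail and $k\ge2$ for the left tail, which are already needed for Lemmas~\ref{lemma: concentration for Z_k} and~\ref{lemma: new concentration for -Z_k} to apply (the latter amounts to the harmless $n(1-\tau)\ge2$).

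For the right tail I first use $w_n=\tfrac bn\le\tfrac{b(1-\tau)}{k}$ (from $n\ge\tfrac{k}{1-\tau}$) and clear denominators by multiplying numerator and denominator of the exponent by $kL^2$, exactly as in the derivation of Eq.~(\ref{equ: lemma 3 right proof last step with k}), obtaining $\exp\!\bigl(-\tfrac{kL^2\epsilon^2}{2(2+(2L+b(1-\tau)L^2)\epsilon)}\bigr)$. Since $k$ now appears linearly in the numerator, substituting $k\ge n(1-\tau)-1$ splits the exponent into $-\tfrac{n(1-\tau)L^2\epsilon^2}{2(2+(2L+b(1-\tau)L^2)\epsilon)}+\tfrac{L^2\epsilon^2}{2(2+(2L+b(1-\tau)L^2)\epsilon)}$, which is the stated right-tail inequality. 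The left tail is analogous but $k$ enters $v^l$ quadratically: after multiplying through by $(k-1)^2L^2$ I bound $n-k+1\le n\tau+2$ (from $k\ge n(1-\tau)-1$) and $\tfrac{(k-1)^2}{n}\le(k-1)(1-\tau)$ (from $k\le n(1-\tau)$), divide by $k-1\ge1$, and finally substitute $k-1\ge n(1-\tau)-2$; each replacement leaves behind an $O(1)$ term, and collecting them yields the leading term proportional to $n(1-\tau)$ together with the additive $O(1)$ correction of the form recorded in the statement. The two-sided bound then follows from a union bound after replacing the two pairs of denominator coefficients by their coordinatewise maxima $\widetilde\alpha=2\tfrac{\tau+2}{1-\tau}$, $\widetilde\beta=2L+b(1-\tau)L^2$.

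I expect the left tail to be the only delicate point. With no lower bound on $n$, the floor corrections in $n-k+1$ and in $(k-1)^2$ must be tracked simultaneously, and the order of the three reductions (upper-bounding $n-k+1$, controlling $\tfrac{(k-1)^2}{n}$, lowering $k-1$) has to be chosen so that the residuals coalesce into a single clean additive term rather than several competing ones; one must also check that the resulting correction is uniform in $\epsilon>0$ and that the combined denominator $\widetilde\alpha+\widetilde\beta\epsilon$ dominates each one-sided denominator in the direction needed for the union bound to be valid. All remaining manipulations are the same elementary estimates used in the proof of Theorem~\ref{theo: New Two-side Concentration inequality for quantiles.} and Lemma~\ref{lemma: quantile concentration, dependency on n}.
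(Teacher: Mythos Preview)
Your right-tail argument is correct and is exactly what the paper does: it picks up from Eq.~(\ref{equ: lemma 3 right proof last step with k}), substitutes $k\ge n(1-\tau)-1$, and splits the linear factor $k$ into $n(1-\tau)$ minus $1$.

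For the left tail your overall strategy is the paper's, but the specific sequence you describe does not land on the stated form. After your steps (multiply by $(k-1)^2L^2$, bound $n-k+1\le n\tau+2$, bound $\tfrac{(k-1)^2}{n}\le (k-1)(1-\tau)$, divide by $k-1$, then substitute $k-1\ge n(1-\tau)-2$) you are left with
\[
\exp\!\left(-\frac{(n(1-\tau)-2)\,L^2\epsilon^2}{2\bigl(\tfrac{2(n\tau+2)}{\,n(1-\tau)-2\,}+(1-\tau)bL^2\epsilon\bigr)}\right),
\]
whose denominator still carries the $n$-dependent ratio $\tfrac{2(n\tau+2)}{n(1-\tau)-2}$. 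This ratio is \emph{not} bounded by the claimed constant $\tfrac{2(\tau+2)}{1-\tau}$ without an extra lower bound on $n$ (the inequality $\tfrac{n\tau+2}{n(1-\tau)-2}\le\tfrac{\tau+2}{1-\tau}$ is equivalent to $n(1-\tau)\ge 3$), so ``each replacement leaves behind an $O(1)$ term'' is not what actually happens here, and you cannot coalesce the residuals into the single additive correction recorded in the lemma.

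The paper avoids this by \emph{not} dividing out one factor of $k-1$. Instead it keeps the square, substitutes $k-1\ge n(1-\tau)-2$ directly, and then applies the elementary factorization
\[
(n(1-\tau)-2)^2\;\ge\; n(1-\tau)\bigl(n(1-\tau)-4\bigr),
\]
which is always true (it is $4\ge 0$). That factorization is the step you are missing: it writes the numerator as $n(1-\tau)$ times a linear-in-$n$ factor, so when you divide numerator and denominator by $n(1-\tau)$ and by $4n(1-\tau)$ respectively you get two fractions, each with an $n$-independent denominator (after the crude $2/n\le 2$). The first fraction is the leading term and the second is the additive correction in the lemma. Your ``divide by $k-1$'' step is therefore a detour; if you undo it and insert this factorization, your outline becomes the paper's proof verbatim.
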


\begin{proof}

From Eq. (\ref{equ: lemma 3 right proof last step with k}), we have
    \begin{align}
        \mathbb{P}\left(\hat{Q}^\tau_{n} - Q^\tau \geq \epsilon \right)
        &\leq \exp\left(-\frac{k L^2 {\epsilon }^2}{2 (2+ (2L + b (1-\tau)L^2) \epsilon )}\right) \\
        &\leq \exp\left(-\frac{(n (1-\tau)-1) L^2 {\epsilon }^2}{2 (2+ (2L + b (1-\tau)L^2) \epsilon )}\right) && k \geq n (1-\tau)-1\\
        & = \exp\left(-\frac{(n (1-\tau)) L^2 {\epsilon }^2}{2 (2+ (2L + b (1-\tau)L^2) \epsilon )} + \frac{L^2 {\epsilon }^2}{2 (2+ (2L + b (1-\tau)L^2) \epsilon )}\right).
   \end{align}
From Eq. (\ref{equ: lemma 3 left proof last step with k}), we have
\begin{align}
    \mathbb{P}\left( Q^\tau  - \hat{Q}^\tau_{n} \geq \epsilon \right)
    & \leq \exp\left(- \frac{{\epsilon } ^ 2}{2 (\frac{2(n-k+1)}{(k-1)^2 L^2} + \frac{b}{n} \epsilon)}\right)\\
    & \leq \exp\left(- \frac{L^2 {\epsilon } ^ 2}{2 (\frac{2(n\tau +2)}{(n(1-\tau) -2)^2} + \frac{b L^2}{n} \epsilon)}\right)&& k - 1 \geq n(1-\tau)-2\\
    & \leq \exp\left(- \frac{L^2 {\epsilon } ^ 2}{2 (\frac{2(n\tau +2)}{(n(1-\tau))(n(1-\tau) -4)} + \frac{b L^2}{n} \epsilon)}\right)\\
    & = \exp\left(- \frac{(n(1-\tau))(n(1-\tau) -4) L^2 {\epsilon } ^ 2}{2 (2(n\tau +2) + n(1-\tau)^2 b L^2 \epsilon)}\right)\\
    & = \exp\left(- \frac{n(1-\tau) L^2 {\epsilon } ^ 2}{2 (2\frac{\tau +2/n}{1-\tau} + (1-\tau) b L^2 \epsilon)} + \frac{L^2 {\epsilon } ^ 2}{2 (\frac{1}{2}\frac{(n\tau +2)}{n(1-\tau)} + \frac{1}{4} (1-\tau) b L^2 \epsilon)}\right) \\
    & \leq \exp\left(- \frac{n(1-\tau) L^2 {\epsilon } ^ 2}{2 (2\frac{\tau + 2}{1-\tau} +  b(1-\tau) L^2 \epsilon)} + \frac{ L^2 {\epsilon } ^ 2}{2 (\frac{1}{2}\frac{\tau}{1-\tau} + \frac{1}{4}b (1-\tau) L^2 \epsilon)}\right). && n \geq 1 \rightarrow 1/n \leq 1
\end{align}

Then let $\widetilde{\alpha} = 2\max \left\{1, \frac{\tau + 2}{1-\tau}, \frac{1}{4} \frac{\tau}{1-\tau}\right\} = 2 \frac{\tau + 2}{1-\tau}$,
$\widetilde{\beta} = \max \{2L + b (1-\tau)L^2, b(1-\tau)L^2,  \frac{1}{4} b(1-\tau) L^2)\} = 2L + b (1-\tau)L^2 $.
we have
\begin{align}
    \mathbb{P}\left(|\hat{Q}^\tau_{n} - Q^\tau| \geq \epsilon \right) & = \mathbb{P}\left(\hat{Q}^\tau_{n} - Q^\tau \geq \epsilon \right) + \mathbb{P}\left( Q^\tau  - \hat{Q}^\tau_{n} \geq \epsilon \right) \nonumber
    \leq 2 \exp\left(-\frac{n (1-\tau)L^2 {\epsilon }^2}{2 (\widetilde{\alpha} + \widetilde{\beta} \epsilon )}
    + \frac{L^2 {\epsilon }^2}{2 (\widetilde{\alpha} + \widetilde{\beta} \epsilon )}\right). \nonumber
\end{align}
\end{proof}

\begin{remark}[Constant term in concentration bound]
    Note the constant term $\frac{L^2 {\epsilon }^2}{2 (\widetilde{\alpha} + \widetilde{\beta} \epsilon )}$ in Lemma \ref{lemma: quantile concentration, dependency on n, no lower bound on n assumption} is due to the floor operator of the rank $k$, as explained in Eq. (\ref{equ: n range}). This constant term is a bias term coming from estimating the quantile by a single order statistic and is unavoidable without additional assumptions.\\
    By comparing Lemma \ref{lemma: quantile concentration, dependency on n} and
    Lemma \ref{lemma: quantile concentration, dependency on n, no lower bound on n assumption}
    we observe that
    one needs to balance between the constant term, convergence rate, and assumptions to be made.
    For example, Lemma \ref{lemma: quantile concentration, dependency on n} reduces the constant term by assuming a lower bound on the sample size. On one hand, this assumption guarantees there is enough number of samples to have a more accurate estimation; on the other hand, compared with Lemma \ref{lemma: quantile concentration, dependency on n, no lower bound on n assumption}, Lemma \ref{lemma: quantile concentration, dependency on n} has a smaller convergence rate in terms of $n$ (larger parameters $\alpha, \beta$).
\end{remark}

\subsection{Q-SAR Error Bounds}
\label{sec: appendix qsar proof}

In this section, we show the proof of Q-SAR error bounds, based on the concentration results we proposed.
In Theorem \ref{theo: QSAR}, we show the error bound based on Lemma \ref{lemma: quantile concentration, dependency on n} under the assumption of lower bound of budget.
In Theorem \ref{theo: QSAR variant, with constant term}, we release the budget assumption, and show a variant of the error bound based on Lemma \ref{lemma: quantile concentration, dependency on n, no lower bound on n assumption}.
The proof technique follows \citet{bubeck2013multiple}.

\QSAR*

\begin{proof}

Recall we order the arms according to optimality as $o_1, \dots, o_K$ s.t. $Q^\tau_{o_1} \geq \dots \geq Q^\tau_{o_K}$.
The optimal arm set of size $m$ is $\mathcal{S}_m^\ast = \{o_1, \dots, o_m\}$.
In phase $p$, there are $K + 1 - p$ arms inside of the active set $\mathcal{A}_p$,
we sort the arms inside of $\mathcal{A}_p$ and denote them as $\ell_1, \ell_2, \dots, \ell_{K+1-p}$
such that $Q^\tau_{\ell_1} \geq Q^\tau_{\ell_2} \geq \dots \geq Q^\tau_{\ell_{K+1-p}}$.
If the algorithm does not make any error in the first $p-1$ phases (i.e. not reject an arm from optimal set and not accept an arm from non-optimal set), then we have
\begin{align}
\label{equ: assume no errors in p - 1 phases}
    \{\ell_{1}, \ell_{2}, \ldots, \ell_{l_p}\} \subseteq  \mathcal{S}_m^\ast, \quad \{\ell_{l_p +1}, \ldots, \ell_{K+1-p}\} \subseteq \mathcal{K} \backslash  \mathcal{S}_m^\ast.
\end{align}
Additionally, we sort the arms in $\mathcal{A}_p$ according to the empirical quantiles at phase $p$ as $a_{best} (=a_1), a_2, ..., a_{l_p}, a_{l_p+1}, ..., a_{worst} (=a_{K-p+1})$ such that
$\hat{Q}^\tau_{a_{best}, n_p} \geq \hat{Q}^\tau_{a_2, n_p} \geq \dots \geq \hat{Q}^\tau_{a_{worst}, n_p}$.

Consider an event $\xi$,
\begin{align}
    \xi=\{\forall i \in\{1, \ldots, K\}, p \in\{1, \ldots, K-1\}, \left| \hat{Q}^\tau_{i, n_p} - Q^\tau_i \right| < \frac{1}{4} \Delta_{(K+1-p)}\}. \nonumber
\end{align}

Recall $\alpha = \frac{4(1+\tau)}{1-\tau}$,
and we adapt $\beta$ to $\beta_i$ with $i$ indicating the index of arm, that is,
$\beta_i =  \frac{4}{3}(2 L_i + b_i(1-\tau)L_i^2)$.
Recall the sample size for phase $p$ is $n_p = \lceil \frac{N-K}{\overline{\log}(K) (K + 1 -p)} \rceil$.
Based on Lemma \ref{lemma: quantile concentration, dependency on n} and the union bound,
we derive the upper bound of probability for the complementary event $\bar{\xi}$ as
\begin{align}
    \mathbb{P}(\bar{\xi}) &\leq \sum_{i=1}^K \sum_{p=1}^{K-1} \mathbb{P}\left(\left|\hat{Q}^\tau_{i, n_p} - Q^\tau_i \right| \geq \frac{1}{4} \Delta_{(K+1-p)}\right)
    && \text{union bound}\\
    &\leq \sum_{i=1}^K \sum_{p=1}^{K-1}  2 \exp\left(-\frac{n_p (1-\tau)L_i^2 {(\frac{1}{4} \Delta_{(K+1-p)})}^2}{2 (\alpha + \beta_i \frac{1}{4} \Delta_{(K+1-p)} )}\right) && \text{By Lemma } \ref{lemma: quantile concentration, dependency on n}\\
    &\leq \sum_{i=1}^K \sum_{p=1}^{K-1}   2\exp\left(-\frac{N-K}{\overline{\log}(K) (K+1-p)} \frac{ 1}{\frac{8}{1-\tau}(\frac{4\alpha}{L_i^2  \Delta_{(K+1-p)}^2} + \frac{\beta_i}{L_i^2  \Delta_{(K+1-p)}} )}\right)\\
    &\leq 2K^2 \exp\left( - \frac{N-K}{\overline{\log}(K) H^\tau}\right).
\end{align}
where $H^\tau = \max_{\{i, j \in \mathcal{K}\}} \frac{8 j}{1-\tau}(\frac{4\alpha}{L_i^2  \Delta_{(j)}^2} + \frac{\beta_i}{L_i^2  \Delta_{(j)}})$.

Note that we have assumed the number of samples of each arm is at least $\frac{4}{1-\tau}$ in Lemma \ref{lemma: quantile concentration, dependency on n}.
That is, $ K n_1 = K \lceil \frac{N-K}{\overline{\log}(K) K} \rceil \geq \frac{4}{1-\tau}$, which gives $N \geq \frac{4}{1-\tau} \overline{\log}(K) + K$.
This means, the bound derived above holds when we have budget $N$ no less than $\frac{4}{1-\tau} \overline{\log}(K) + K$.

We show that on event $\xi$, Q-SAR algorithm does not make any error by induction on phases.
Assume that the algorithm does not make any error on the first $p-1$ phases, i.e.  does not reject an arm from optimal set and not accept an arm from non-optimal set.
Then in the following, we show the algorithm does not make an error on the $p^{th}$ phase.
We discuss in terms of two cases:

\textbf{Case 1:} If an arm $\ell_j$ is accepted, then $\ell_j \in  \mathcal{S}_m^\ast$.\\
We prove by contradiction. Assume arm $\ell_j$ is accepted in phase $p$, but $\ell_j \notin  \mathcal{S}_m^\ast$, i.e. $Q^\tau_{\ell_j} \leq Q^\tau_{\ell_{l_p + 1}} \leq Q^\tau_{o_{m+1}}$.
According to Algorithm \ref{alg:Q-SAR}, arm $\ell_j$ is accepted only if its empirical quantile is the maximum among all active arms in phase $p$, thus $\hat{Q}^\tau_{\ell_j, n_p} \geq \hat{Q}^\tau_{\ell_1, n_p}$. On event $\xi$, we have
\begin{align}
& Q^\tau_{\ell_j} + \frac{1}{4} \Delta_{(K+1-p)} > \hat{Q}^\tau_{\ell_j, n_p} \geq \hat{Q}^\tau_{\ell_1, n_p}
> Q^\tau_{\ell_1} - \frac{1}{4} \Delta_{(K+1-p)} \\
\label{equ: proof QSAR term one of Delta > max}
&  \Rightarrow \Delta_{(K+1-p)} > \frac{1}{2} \Delta_{(K+1-p)} > Q^\tau_{\ell_1} - Q^\tau_{\ell_j} \geq Q^\tau_{\ell_1} - Q^\tau_{o_{m+1}}.
\end{align}
Another requirement to accept $\ell_j$ is $\widehat{\Delta}_{best} > \widehat{\Delta}_{worst}$, that is,
\begin{align}
\label{equ: proof QSAR delta best > delta worst}
    \hat{Q}^\tau_{\ell_{j}, n_p} - \hat{Q}^\tau_{a_{l_p + 1}, n_p} > \hat{Q}^\tau_{a_{l_p}, n_p} - \hat{Q}^\tau_{a_{K+1-p}, n_p}.
\end{align}
In the following, we will connect Eq. (\ref{equ: proof QSAR delta best > delta worst}) with the corresponding population quantiles on event $\xi$.
We first connect $\hat{Q}^\tau_{a_{K+1-p}, n_p}$ and $Q^\tau_{\ell_{K+1-p}}$. Since $\hat{Q}^\tau_{a_{K+1-p}, n_p}$ is the minimum empirical quantile at phase $p$,
\begin{align}
\label{equ: proof QSAR case 1 connect a_K+1-p}
    \hat{Q}^\tau_{a_{K+1-p}, n_p} \leq \hat{Q}^\tau_{\ell_{K+1-p}, n_p} < Q^\tau_{\ell_{K+1-p}} + \frac{1}{4} \Delta_{(K+1-p)}.
\end{align}
We then connect $\hat{Q}^\tau_{a_{l_p + 1}, n_p}, \hat{Q}^\tau_{a_{l_p}, n_p}$ to $Q^\tau_{o_m}$.
On event $\xi$, for all $i \leq l_p$,
\begin{align}
    \hat{Q}^\tau_{\ell_{i}, n_p} > Q^\tau_{\ell_i} - \frac{1}{4} \Delta_{(K+1-p)} \geq Q^\tau_{\ell_{l_p}} - \frac{1}{4} \Delta_{(K+1-p)} \geq Q^\tau_{o_m} - \frac{1}{4} \Delta_{(K+1-p)},
\end{align}
which means there are $l_p$ arms in active set, i.e.  $\{\ell_1, \ell_2, \dots, \ell_{l_p}\}$, having empirical quantiles bigger or equal than $Q^\tau_{o_m} - \frac{1}{4} \Delta_{(K+1-p)}$.
Additionally, although $j > l_p$, $\ell_j$ has the maximum empirical quantile, which is bigger than $Q^\tau_{o_m} - \frac{1}{4} \Delta_{(K+1-p)}$ as well.
So in total there are $l_p + 1$ arms having empirical quantiles bigger or equal to $Q^\tau_{o_m} - \frac{1}{4} \Delta_{(K+1-p)}$, i.e.
\begin{align}
\label{equ: proof QSAR case 1 connect a_lp and a_lp+1}
    \hat{Q}^\tau_{a_{l_p}, n_p} \geq \hat{Q}^\tau_{a_{l_p + 1}, n_p} \geq Q^\tau_{o_m} - \frac{1}{4} \Delta_{(K+1-p)}.
\end{align}
Combine Eq. (\ref{equ: proof QSAR delta best > delta worst})(\ref{equ: proof QSAR case 1 connect a_K+1-p})(\ref{equ: proof QSAR case 1 connect a_lp and a_lp+1}) together, we have
\begin{align}
    &(Q^\tau_{\ell_j} + \frac{1}{4} \Delta_{(K+1-p)}) - (Q^\tau_{o_m} - \frac{1}{4} \Delta_{(K+1-p)})
    > (Q^\tau_{o_m} - \frac{1}{4} \Delta_{(K+1-p)}) - (Q^\tau_{\ell_{K+1-p}} + \frac{1}{4} \Delta_{(K+1-p)})\\
\label{equ: proof QSAR term two of Delta > max}
    & \Rightarrow \Delta_{(K+1-p)} > 2 Q^\tau_{o_m} - (Q^\tau_{\ell_j} + Q^\tau_{\ell_{K+1-p}}) >  Q^\tau_{o_m} -  Q^\tau_{\ell_{K+1-p}}.
\end{align}

From Eq. (\ref{equ: proof QSAR term one of Delta > max})(\ref{equ: proof QSAR term two of Delta > max}), we have
$
     \Delta_{(K+1-p)} > \max \{Q^\tau_{\ell_1} - Q^\tau_{o_{m+1}}, Q^\tau_{o_m} - Q^\tau_{\ell_{K+1-p}}\},
$
which contradicts the fact that $\Delta_{(K+1-p)} \leq \max \{Q^\tau_{\ell_1} - Q^\tau_{o_{m+1}}, Q^\tau_{o_m} - Q^\tau_{\ell_{K+1-p}}\}$,
since at phase $p$, there are only $p-1$ arms have been accepted or rejected.
So we have if an arm $\ell_j$ is accepted, then $\ell_j \in  \mathcal{S}_m^\ast$, which finishes the proof of Case 1.

\textbf{Case 2:} If an arm $\ell_j$ is rejected, the $\ell_j \notin  \mathcal{S}_m^\ast$.

The proof of Case 2 is similar to the proof of Case 1.
We prove by contradiction.
Assume arm $\ell_j$ is rejected in phase $p$, but $\ell_j \in  \mathcal{S}_m^\ast$, i.e. $Q^\tau_{\ell_j} \geq Q^\tau_{\ell_{l_p}} \geq Q^\tau_{o_m}$.
According to Algorithm \ref{alg:Q-SAR}, arm $\ell_j$ is rejected only if its empirical quantile is the minimum among all active arms in phase $p$, thus $\hat{Q}^\tau_{\ell_j, n_p} \leq \hat{Q}^\tau_{\ell_{K+1-p}, n_p}$. On event $\xi$, we have
\begin{align}
& Q^\tau_{\ell_j} - \frac{1}{4} \Delta_{(K+1-p)} < \hat{Q}^\tau_{\ell_j, n_p} \leq \hat{Q}^\tau_{\ell_{K+1-p}, n_p}
< Q^\tau_{\ell_{K+1-p}} + \frac{1}{4} \Delta_{(K+1-p)} \\
\label{equ: proof QSAR case 2 term one of Delta > max}
&  \Rightarrow \Delta_{(K+1-p)} > \frac{1}{2} \Delta_{(K+1-p)}  >  Q^\tau_{\ell_j} - Q^\tau_{\ell_{K+1-p}} \geq  Q^\tau_{o_m} -Q^\tau_{\ell_{K+1-p}}.
\end{align}
Another requirement to accept $\ell_j$ is $\widehat{\Delta}_{best} \leq \widehat{\Delta}_{worst}$, i.e.
\begin{align}
\label{equ: proof QSAR delta best <= delta worst}
    \hat{Q}^\tau_{a_1, n_p} - \hat{Q}^\tau_{a_{l_p + 1}, n_p} \leq \hat{Q}^\tau_{a_{l_p}, n_p} - \hat{Q}^\tau_{\ell_{j}, n_p}.
\end{align}
In the following, we will connect Eq. (\ref{equ: proof QSAR delta best <= delta worst}) with the corresponding population quantiles on event $\xi$.
We first connect $\hat{Q}^\tau_{a_{1}, n_p}$ and $Q^\tau_{\ell_{1}}$. Since $\hat{Q}^\tau_{a_{1}, n_p}$ is the maximum empirical quantile at phase $p$,
\begin{align}
\label{equ: proof QSAR case 2 connect a_1}
    \hat{Q}^\tau_{a_{1}, n_p} \geq \hat{Q}^\tau_{\ell_{1}, n_p} > Q^\tau_{\ell_{1}} - \frac{1}{4} \Delta_{(K+1-p)}.
\end{align}
We then connect $\hat{Q}^\tau_{a_{l_p + 1}, n_p}, \hat{Q}^\tau_{a_{l_p}, n_p}$ to $Q^\tau_{o_{m+1}}$.
On event $\xi$, for all $i \geq l_p + 1$,
\begin{align}
    \hat{Q}^\tau_{\ell_{i}, n_p} < Q^\tau_{\ell_i} + \frac{1}{4} \Delta_{(K+1-p)} \leq Q^\tau_{\ell_{l_p +1}} + \frac{1}{4} \Delta_{(K+1-p)} \leq Q^\tau_{o_{m+1}} + \frac{1}{4} \Delta_{(K+1-p)},
\end{align}
Additionally, although $j < l_p + 1$, $\ell_j$ has the minimum empirical quantile, which is smaller than $Q^\tau_{o_{m+1}} + \frac{1}{4} \Delta_{(K+1-p)}$ as well.
So that,
\begin{align}
\label{equ: proof QSAR case 2 connect a_lp and a_lp+1}
    \hat{Q}^\tau_{a_{l_p + 1}, n_p} \leq \hat{Q}^\tau_{a_{l_p}, n_p} \leq Q^\tau_{o_{m+1}} + \frac{1}{4} \Delta_{(K+1-p)}.
\end{align}
Combining Eq. (\ref{equ: proof QSAR delta best <= delta worst}), (\ref{equ: proof QSAR case 2 connect a_1}) and (\ref{equ: proof QSAR case 2 connect a_lp and a_lp+1}) together, we have
\begin{align}
    &(Q^\tau_{\ell_{1}} - \frac{1}{4} \Delta_{(K+1-p)}) - (Q^\tau_{o_{m+1}} + \frac{1}{4} \Delta_{(K+1-p)})
    \leq (Q^\tau_{o_{m+1}} + \frac{1}{4} \Delta_{(K+1-p)}) - (Q^\tau_{\ell_{j}} - \frac{1}{4} \Delta_{(K+1-p)})\\
\label{equ: proof QSAR case 2 term two of Delta > max}
    & \Rightarrow \Delta_{(K+1-p)} \geq  (Q^\tau_{\ell_j} + Q^\tau_{\ell_{1}}) - 2 Q^\tau_{o_{m+1}} >  Q^\tau_{\ell_{1}} - Q^\tau_{o_{m+1}}.
\end{align}

From Eq. (\ref{equ: proof QSAR case 2 term one of Delta > max})(\ref{equ: proof QSAR case 2 term two of Delta > max}), we have
$
     \Delta_{(K+1-p)} > \max \{Q^\tau_{\ell_1} - Q^\tau_{o_{m+1}}, Q^\tau_{o_m} - Q^\tau_{\ell_{K+1-p}}\},
$
which contradicts the fact that $\Delta_{(K+1-p)} \leq \max \{Q^\tau_{\ell_1} - Q^\tau_{o_{m+1}}, Q^\tau_{o_m} - Q^\tau_{\ell_{K+1-p}}\}$,
since at phase $p$, there are only $p-1$ arms have been accepted or rejected.
So we have if an arm $\ell_j$ is rejected, then $\ell_j \notin  \mathcal{S}_m^\ast$, which finishes the proof of Case 2.
\end{proof}

Now we show a variant of Theorem \ref{theo: QSAR}, using the result of Lemma \ref{lemma: quantile concentration, dependency on n, no lower bound on n assumption}.
Define a slightly different problem complexity $\widetilde{H}^\tau$, which has the same form of $H^\tau$ in Eq. (\ref{equ: problem complexity H}) with smaller parameters $\widetilde{\alpha}$ and $\widetilde{\beta}$.
\begin{align}
    \label{equ: problem complexity variant}
    \widetilde{H}^\tau := \max_{i,j \in \mathcal{K}} \frac{8 j}{1-\tau} (\frac{4 \widetilde{\alpha}}{L_i^2 \Delta_{(j)}^2} + \frac{\widetilde{\beta}_i}{L_i^2 \Delta_{(j)}}),
\end{align}
where $\widetilde{\alpha} = 2 \frac{\tau+2}{1-\tau}, \widetilde{\beta} = 2 L_i+ (1-\tau) b_i L_i^2$.

\begin{restatable}[Q-SAR Probability of Error Upper Bound Variant]{theo}{QSARVariant}
    \label{theo: QSAR variant, with constant term}
    For the problem of identifying $m$ best arms out of $K$ arms, the probability of error (Definition \ref{defi: probability of error}) for Q-SAR satisfies
    \begin{align}
      e_N \leq 2 K^2 \exp\left(- \frac{N-K}{\overline{\log}(K) \widetilde{H}^\tau} + C \right), \nonumber
    \end{align}
    where problem complexity variant $\widetilde{H}^\tau$ is defined in Eq. (\ref{equ: problem complexity variant}), and constant $C = \max_{\{i,j \in \mathcal{K}\}} \frac{L_i ^2\Delta_{(j)}^2 }{8(4 \widetilde{\alpha} + \widetilde{\beta}_i \Delta_{(j)})}$.
\end{restatable}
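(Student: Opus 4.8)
The plan is to follow the proof of Theorem~\ref{theo: QSAR} almost verbatim, replacing the concentration bound of Lemma~\ref{lemma: quantile concentration, dependency on n} (which requires $N \geq \frac{4}{1-\tau}\overline{\log}(K) + K$) by that of Lemma~\ref{lemma: quantile concentration, dependency on n, no lower bound on n assumption}, which holds for any budget at the price of an extra additive constant inside the exponent. I would define the same good event $\xi = \{\forall i \in \{1,\dots,K\},\ p \in \{1,\dots,K-1\}:\ |\hat Q^\tau_{i,n_p} - Q^\tau_i| < \frac{1}{4} \Delta_{(K+1-p)}\}$ and first control $\mathbb{P}(\bar\xi)$ by a union bound over the $K(K-1)$ pairs $(i,p)$.

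For each pair I would apply Lemma~\ref{lemma: quantile concentration, dependency on n, no lower bound on n assumption} with $\epsilon = \frac{1}{4}\Delta_{(K+1-p)}$ and $n = n_p \geq \frac{N-K}{\overline{\log}(K)(K+1-p)}$. The leading term reproduces exactly the computation in the proof of Theorem~\ref{theo: QSAR}, with $\alpha, \beta_i$ replaced by $\widetilde\alpha, \widetilde\beta_i$, and after maximising over $i \in \mathcal{K}$ and over $j = K+1-p$ in the definition~(\ref{equ: problem complexity variant}) of $\widetilde{H}^\tau$ it is bounded by $-\frac{N-K}{\overline{\log}(K)\widetilde{H}^\tau}$. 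The genuinely new piece is the additive term $\frac{L_i^2 \epsilon^2}{2(\widetilde\alpha + \widetilde\beta_i \epsilon)}$; substituting $\epsilon = \frac{1}{4}\Delta_{(K+1-p)}$ turns it into $\frac{L_i^2 \Delta_{(K+1-p)}^2}{8(4\widetilde\alpha + \widetilde\beta_i \Delta_{(K+1-p)})}$, which is at most the stated constant $C = \max_{i,j \in \mathcal{K}} \frac{L_i^2 \Delta_{(j)}^2}{8(4\widetilde\alpha + \widetilde\beta_i \Delta_{(j)})}$. Summing over the $K(K-1) < K^2$ pairs yields
\[
  \mathbb{P}(\bar\xi) \leq 2 K^2 \exp\left(-\frac{N-K}{\overline{\log}(K)\widetilde{H}^\tau} + C\right).
\]

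It then remains to show that on $\xi$ the algorithm makes no error, i.e. $\mathcal{S}_m^N = \mathcal{S}_m^\ast$. This part is identical to the induction in the proof of Theorem~\ref{theo: QSAR}: by induction on the phase $p$, assuming the first $p-1$ decisions are correct, any wrong acceptance (Case 1) or wrong rejection (Case 2) at phase $p$ would force $\Delta_{(K+1-p)} > \max\{Q^\tau_{\ell_1} - Q^\tau_{o_{m+1}},\, Q^\tau_{o_m} - Q^\tau_{\ell_{K+1-p}}\}$, which contradicts the fact that only $p-1$ arms have been removed so far, hence $\Delta_{(K+1-p)}$ cannot exceed that maximum. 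Because this argument uses only the gap ordering and the $\frac{1}{4}$-closeness of empirical and true quantiles guaranteed on $\xi$, not the particular shape of the concentration inequality, it transfers without change. Finally $e_N = \mathbb{P}(\mathcal{S}_m^N \neq \mathcal{S}_m^\ast) \leq \mathbb{P}(\bar\xi)$ gives the claim.

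The only real obstacle is the bookkeeping with the constant term: one has to verify that after the substitution $\epsilon = \frac{1}{4}\Delta_{(K+1-p)}$ the extra additive term is bounded by $C$ uniformly over all arms $i$ and phases $p$, and that the maxima defining $\widetilde{H}^\tau$ and $C$ are taken over the same index set so that both bounds hold simultaneously. Everything else is a transcription of the proof of Theorem~\ref{theo: QSAR}, with $\alpha \mapsto \widetilde\alpha$ and $\beta_i \mapsto \widetilde\beta_i$.
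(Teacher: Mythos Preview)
Your proposal is correct and matches the paper's own proof essentially verbatim: the paper likewise states that the only change from Theorem~\ref{theo: QSAR} is to bound $\mathbb{P}(\bar\xi)$ via Lemma~\ref{lemma: quantile concentration, dependency on n, no lower bound on n assumption} instead of Lemma~\ref{lemma: quantile concentration, dependency on n}, carries out the same union bound and substitution $\epsilon = \tfrac14\Delta_{(K+1-p)}$ to obtain the additive constant $C$, and then invokes the unchanged induction argument on $\xi$.
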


\begin{proof}
The only difference of the proof of Theorem \ref{theo: QSAR} is we derive the bound of of $\mathbb{P}(\bar{\xi})$ (See Eq. (\ref{equ: event xi}) for the definition of event $\xi$) based on a Lemma \ref{lemma: quantile concentration, dependency on n, no lower bound on n assumption}, and we do not have the lower bound assumption made for budget.
\begin{align}
    \mathbb{P}(\bar{\xi}) &\leq \sum_{i=1}^K \sum_{p=1}^{K-1} \mathbb{P}\left(\left|\hat{Q}^\tau_{i, n_p} - Q^\tau_i \right| \geq \frac{1}{4} \Delta_{(K+1-p)}\right)
    && \text{union bound}\\
    &\leq \sum_{i=1}^K \sum_{p=1}^{K-1}   \exp\left(-\frac{n_p (1-\tau)L_i^2 {(\frac{1}{4} \Delta_{(K+1-p)})}^2}{2 (\widetilde{\alpha} + \widetilde{\beta}_i \frac{1}{4} \Delta_{(K+1-p)} )} +
    \frac{L_i^2 {(\frac{1}{4} \Delta_{(K+1-p)})}^2}{2 (\widetilde{\alpha} + \widetilde{\beta}_i \frac{1}{4} \Delta_{(K+1-p)} )}
    \right) && \text{By Lemma } \ref{lemma: quantile concentration, dependency on n, no lower bound on n assumption}\\
    &\leq 2K^2 \exp\left( - \frac{N-K}{\overline{\log}(K) \widetilde{H}^\tau} + C\right),
\end{align}
where $C = \max_{\{i,j \in \mathcal{K}\}} \frac{L_i^2 \Delta_{(j)}^2 }{8(4 \widetilde{\alpha} + \widetilde{\beta}_i \Delta_{(j)})}$.

Then we conclude the proof by following the same reasoning in the proof of Theorem \ref{theo: QSAR}.
\end{proof}

\section{Discussion}
\label{sec: discussion appendix}

\textbf{Quantile Estimation Complexity:}
This paper focuses on how quantiles provide a different way to summarise the distribution of each arm.
Quantiles are interesting and useful summary statistics for risk-averse decision-making,
but estimating quantiles may be more expensive than estimating the mean.
We provide the time complexity of our algorithms (for $K$ arms) in the following.
Estimating quantiles needs binary search in each round when we get new samples.
For Q-SAR, in each phase $p \in [1, K-1]$, the time complexity is $\mathcal{O}(K \log (n_p - n_{p-1}) + K \log K) = \mathcal{O}(K \log (N/K^2) + K \log K)$.
Combining for all $K-1$ phases, the time complexity is  $\mathcal{O}(K^2 \log (N/K^2) + K^2 \log K)$.
For space complexity, one needs to save the samples for each arm and also updates information (quantiles) for each arm, so the space complexity is $\mathcal{O}(N + K)$ for both algorithms.
One could save time and space for estimating quantiles by using online algorithms.
For example, instead of performing binary search from scratch, one can retain
an estimate of the quantile and update the estimate given the new samples.
This is the key idea of online algorithms such as stochastic gradient descent.
Such approaches (and their analysis) is beyond the scope of this paper.

\textbf{Understanding IHR Distributions}:
The hazard rate of random variables provide an useful way to think about real phenomena.
For example, let the random variable $X$ denote the age of a car when it has a serious engine problem for the first time.
One would expect the hazard rate increases over time.
If the random variable $X$ denotes the time before you win a lottery, then the hazard rate would be approximately constant. \\
Some examples of general distributions with IHR include:
\begin{itemize}
    \item Gamma distribution with two parameters $\lambda > 0, \alpha >0$, with p.d.f. $f(x) = \frac{\lambda^\alpha x^{\alpha -1} \exp\{-\lambda x\}}{\Gamma(\alpha)}$ for $x > 0$, where the gamma function $\Gamma(\alpha) = \int_0^\infty x^{\alpha - 1}\exp\{-x\} d x$.
    When $\alpha \geq 1$, the hazard rate is non-decreasing. The case $\alpha = 1$ corresponds to the exponential distribution (Definition \ref{defi: Exp}) and the hazard rate is constant.
    \item Weibull Distribution with two parameters $\lambda > 0$, $p > 0$, with p.d.f. $f(x) = p \lambda^p x^{p-1} \exp\{-{(\lambda x)}^p\}$ for $x > 0$.
    When $p \geq 1$, the hazard rate is non-decreasing. The Weibull distribution reduces to the exponential distribution (Definition \ref{defi: Exp}) when $p = 1$.
    \item Absolute Gaussian distribution (Definition \ref{defi: AbsGau}). The lower bound of hazard rate for the centered Absolute Gaussian distribution is $\frac{1}{\sigma \sqrt{2 \pi}}$.
\end{itemize}
Recall that the IHR assumption allows us to consider distributions of unbounded rewards.
It does so by constraining the tails of the density. The random variable with IHR is light-tailed, i.e. having tails the same as or lighter than an exponential distribution.
Light-tailed distributions include a wide range of distributions, including sub-gamma and sub-Gaussian distributions.

\textbf{Estimate Lower Bound of Hazard Rate and Concentration Inequality}
In practice, one can estimate the lower bound of hazard rate L by estimating the p.d.f. and c.d.f. around 0 (with non-negative support and IHR assumption).
So one can design a UCB-type of algorithms by adaptively estimating L.
But in practice, introducing new variables to estimate would influence the stability of the algorithm.

\section{Post-publication update: details of estimating quantiles from finite data}


\begin{remark}[Single order statistic as quantiles estimator]
\label{remark: Single order statistic as quantiles estimator}
We focus on the single order statistics estimation in our paper for simplicity. 
More complicated methods, such as using multiple order statistics, smoothing between multiple order statistics can improve the estimator, which is out of scope for this paper.  

There are many ways of estimating quantiles from a single order statistic as discussed in \citet{zielinski_quantile_est_survey}. However, no single option of rank is optimal and estimating quantiles from a single order statistic with finite number of samples is biased.  
For example, a less biased option compared to what we used in Eq. (\ref{equ: empirical quantile defi}) one could choose
\begin{align}
    \label{equ: quantile estimator range}
    \hat{Q}^\tau_n = X_{(k)}, \qquad \text{where } k \in (n(1-\tau), n(1-\tau) + 1].
\end{align}
Here we choose the right closed interval to be consistent with our (left-continuous) empirical quantile and (non-increasing) order statistics definition. 
The estimator in Eq. (\ref{equ: quantile estimator range}) is equivalent to 
\begin{align}
    \hat{Q}^\tau_n = X_{(\lfloor n(1-\tau) + 1 \rfloor)},
\end{align}
as used in \cite{tran-thanh_functional_2014,yu2013sample,torossian_x-armed_2019} (they used the $\lceil np \rceil$, where $p$ is the rank for non-decreasing order statistics).
One can switch to this choice of rank, and the subsequent notions of $k, \alpha, \beta$ in Eq. (\ref{equ: empirical quantile defi}) and (\ref{equ: problem complexity H}) and Theorem \ref{theo: New Two-side Concentration inequality for quantiles.} will change accordingly. The concentration and regret rate in terms of $n$ will remain as claimed in this paper. 
\end{remark}


\begin{remark}[Discussion of Theorem \ref{theo: link expected order statistics and population quantile}]
The proof details of Theorem \ref{theo: link expected order statistics and population quantile} follows \citet[Section 4.6]{david_order_1981}. Note our definition of order statistic is non-decreasing, while \citet{david_order_1981}'s is non-increasing.
For a positive integer $k \in [1,n]$, let $U_{(k)}$ be the $k$-th order statistic from a sample of size $n$ from a uniform $(0,1)$ distribution.
For a quantile function $Q(p)$, and the $k$ rank order statistic $X_{(k)}$,
we have $X_{(k)} = Q^{U_{(k)}}$. Expanding $Q^{U_{(k)}}$ in a Taylor series about $p_k = \mathbb{E}[U_{(k)}] = \frac{n-k+1}{n+1}$, gives
\begin{align}
    X_{(k)} \approx Q^{p_k}+\left(U_{(k)}-p_k\right) Q^{\prime}\left(p_k\right)+\frac{1}{2}\left(U_{(k)}-p_k\right)^2 Q^{\prime \prime}\left(p_k\right) 
        +\frac{1}{6}\left(U_{(k)}-p_k\right)^3 Q^{\prime \prime \prime}\left(p_k\right)+\cdots
\end{align}
where $Q^{\prime}(p_k), Q^{\prime \prime}\left(p_k\right), Q^{\prime \prime \prime}\left(p_k\right)$ are the first, second, third derivatives evaluated at $p_k$. 
Let $q_k = 1-p_k = \frac{k}{n+1}$, 
we obtain to order $(n+2)^{-2}$,
\begin{align}
    \label{proof: xr to qpr}
    \mathbb{E}[X_{(k)}] \approx Q^{p_k} + \frac{p_k q_k}{2(n+2)} Q^{\prime \prime}(p_k)+\frac{p_k q_k}{(n+2)^2}\left[\frac{1}{3}\left(q_k-p_k\right) Q^{\prime \prime \prime}(p_k)\right.
        \left.+\frac{1}{8} p_k q_k Q^{\prime \prime \prime \prime}(p_k)\right].
\end{align} 
Now let $k = \lfloor n(1-\tau) \rfloor$, and $\delta \geq \frac{|Q^{p_k} - Q^{\tau}|}{|p_k - \tau|}$, we have
\begin{align}
    \label{proof: qpr to qtau}
    |Q^{p_k} - Q^{\tau}| \leq \delta |p_k - \tau| \in   \left[\frac{\delta k}{n(n+1)}, \frac{\delta (k + n +1)}{n(n+1)} \right)
\end{align}
where we obtain the interval on the right side of Eq. (\ref{proof: qpr to qtau}) by observing that $\tau \in (\frac{n-k-1}{n}, \frac{n-k}{n}]$. The ratio between $|Q^{p_k} - Q^{\tau}|$ and $|p_k - \tau|$ is illustrated in Figure \ref{fig: theo1 ratio}. 
From Eq. (\ref*{proof: xr to qpr}) and (\ref{proof: qpr to qtau}),
by Taylor's Theorem, there exists a constant $b \geq 0$ such that
\begin{align}
 |\mathbb{E}[X_{(k)}] - Q^\tau | \leq \frac{b}{n}.
\end{align}
\textbf{The choice of $\delta$:} The value of $\delta$ is dependent on the smoothness of the true quantile function at the quantile value $\tau$. We leave the choice of $\delta$ as future work. For example, $\delta$ can be defined with respect to the interval, e.g. for a differentiable quantile function $Q(p)$, define $\delta = \max_{p \in (\tau, p_k)} Q^{\prime}(p)$ where $Q^{\prime}(p)$ is the gradient of $Q(p)$. Then $\delta$ depends on $p_k$, which depends on how many samples we have -- intuitively, the more samples we have, the closer $p_k$ and $\tau$ are, and the closer $\delta$ and $Q^{\prime}(\tau)$ are. Then one needs to prove the relationship between $\delta$ and $k$ (and hence $n$). Alternatively, we could make an extra smoothness assumption about the quantile function, e.g. Lipschitz smoothness. Then $\delta$ is independent of $n$ in this case, however, $\delta$ might be a loose bound of the actual ratio we need. e.g. $\delta$ can be large due to the large derivative at right tail of the absolute Gaussian distribution.
\end{remark}

\begin{figure}[ht]
    \centering
    \includegraphics[scale = 0.4]{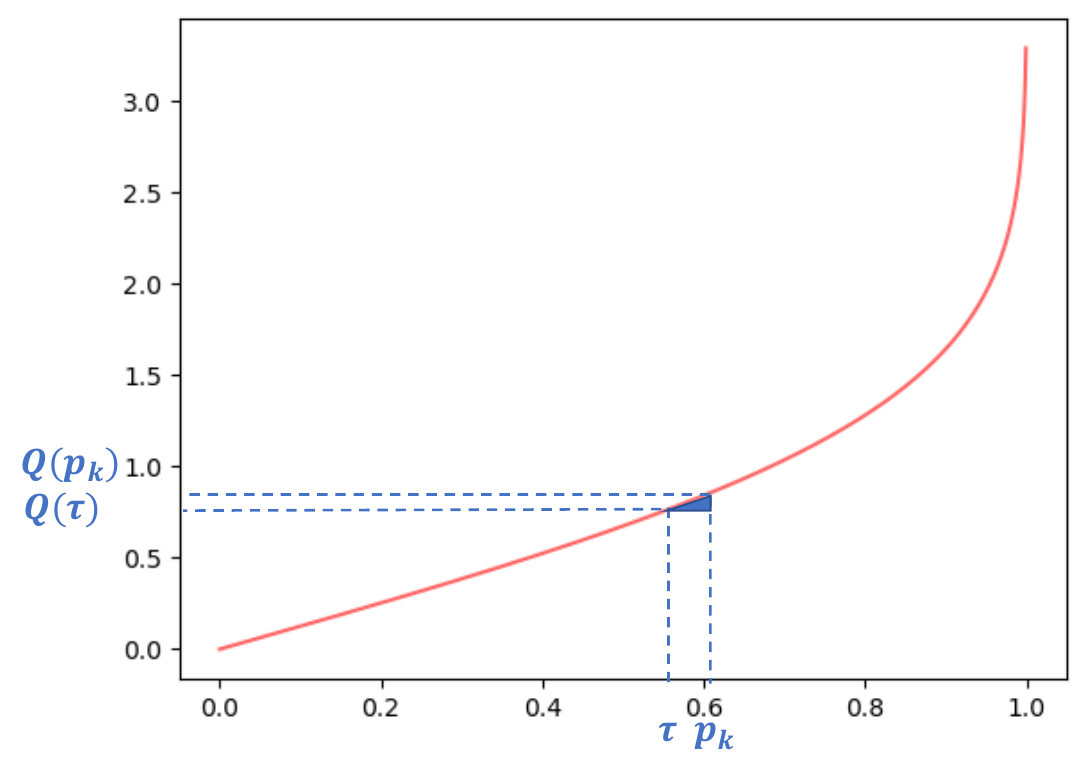}
    \caption{Illustration of ratio between $|Q^{p_k} - Q^{\tau}|$ and $|p_k - \tau|$. The red curve is a toy example quantile function of the absolute Gaussain distribution with location and scale parameters 0 and 1 respectively. }
    \label{fig: theo1 ratio}
\end{figure}



\end{document}